\documentclass{article}
\usepackage{iclr2027_conference,times}
\iclrfinalcopy   

\usepackage{amsmath,amsfonts,bm}

\def\eqref#1{equation~\ref{#1}}

\def\1{\bm{1}}

\DeclareMathAlphabet{\mathsfit}{\encodingdefault}{\sfdefault}{m}{sl}
\SetMathAlphabet{\mathsfit}{bold}{\encodingdefault}{\sfdefault}{bx}{n}

\usepackage{hyperref}
\usepackage{url}
\usepackage{graphicx}
\usepackage{booktabs}
\usepackage{amsmath,amssymb,amsthm}
\usepackage{multirow}
\usepackage{xcolor}
\usepackage{framed}
\usepackage{tabularx,array}
\newcolumntype{Y}{>{\centering\arraybackslash}X}
\newcolumntype{L}{>{\raggedright\arraybackslash}X}
\newcolumntype{R}[1]{>{\raggedright\arraybackslash}p{#1}}

\usepackage{needspace}
\usepackage{placeins}
\graphicspath{{./}}
\newtheorem{theorem}{Theorem}
\newtheorem{proposition}{Proposition}
\newtheorem{lemma}{Lemma}
\newtheorem{corollary}{Corollary}
\newtheorem{definition}{Definition}
\newtheorem{assumption}{Assumption}
\newcommand{\Gam}{\Gamma}
\newcommand{\Ob}{\bar O}
\newcommand{\Hc}{\hat H}

\title{Minimal Recurrent Behavioral Memory\\ for Imitation under Partial Observability}

\author{Xianyao Li, Fang Xu, Rui Min, Ruitong Tian, Jing Du}

\begin{document}
\maketitle

\begin{abstract}
What is the least recurrent memory needed to reproduce a specified expert under partial observability?
The instantaneous requirement is the conditional entropy of the expert's behavioral quotient, but recurrence must also preserve distinctions that future observations will not restore before use.
We characterize this minimal recurrent behavioral memory by a compatibility relation: under transitivity its classes attain the exact minimum, while the general case is an entropy minimization over closed compatible state assignments, with exact certificates on finite instances.
A sole-carrier measurement protocol separates behavioral sufficiency, excess code rate, and information carried by observations or other memory paths; experimental bit requirements refer to the induced symbolic behavioral model under the stated occupancy.
Across manipulation tasks, learned code rates remain near zero- and two-bit requirements as hidden modes grow to $512$, and anticipatory memory follows a $2\to1\to0$ requirement despite zero instantaneous demand during waiting.
Learning this representation remains difficult: event-agnostic future-behavior supervision yields $36/40$ sufficient seeds with one frozen configuration and improves the longest-horizon pixel setting from $0/8$ to $6/8$ sufficient held-out seeds (closed-loop success from $0.08$ to $0.57$). On unmodified community benchmarks, the protocol certifies delay-independent requirements, which sufficient codes match at mid-delay.
The supervision aids commitment but can induce predictive surplus; annealing it lets imitation and rate training reduce that surplus, separating the information-theoretic target from the ability to learn it.
\end{abstract}

\section{Introduction}
Imitating a specified expert under partial observability raises three questions: \emph{Which distinctions determine its behavior now? Which must persist because future observations will not restore them before use? Can a compact recurrent learner acquire that representation?} These concern current behavior, recurrent memory, and learning, respectively.

Consider a robot that observes a grasp side and a placement slot, then waits. Its waiting action depends on neither cue, but it must retain both for later decisions. After grasping, only the slot remains necessary. If the slot will be displayed again before placement, it need not be carried across the wait. \textbf{Future observations act as side information:} the policy receives them as it acts, so memory need only bridge intervals without a new reveal.

The behavioral quotient $G_{E,t}$ groups states with the same observation and current expert action distribution. The recurrent target $\Gam_t$ additionally distinguishes histories that require different behavior after a common reachable continuation (Figure~\ref{fig:concept}). It retains distinctions only while future observations cannot restore them before use. Under transitive compatibility, we prove that its conditional entropy is the exact minimum among zero-distortion recurrent realizations. Without transitivity, closed compatible state assignments replace the class variable; small instances admit exact search, and matching bounds certify the minimum on our finite corridor instances.

Existing representations answer different questions. A recurrent architecture specifies where memory resides, while a control-state target supports reward or dynamics prediction \citep{subramanian2022ais}. Predictive states preserve future-process distributions \citep{shalizi2001computational,littman2002predictive}; system identification seeks hidden parameters \citep{kumar2021rma}. These targets can distinguish histories that the specified expert treats identically. Even future-action prediction can retain a re-displayed cue if its decoder receives no future observations. Our target instead preserves the expert's responses \emph{as those observations arrive}.

\begin{figure}[t]
\centering\includegraphics[width=\linewidth]{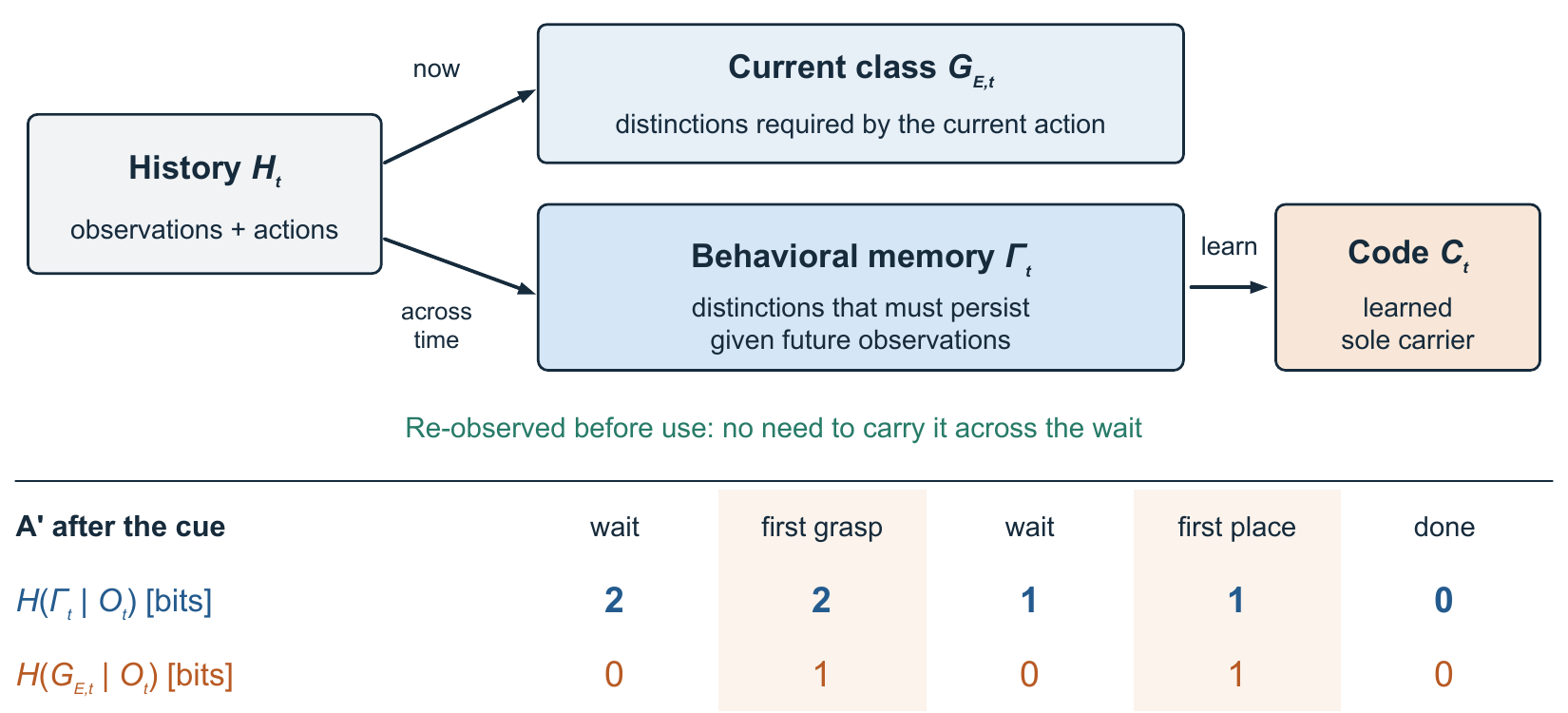}
\caption{\textbf{Current behavior, recurrent memory, and its learned realization.} In the transitive regime, history induces a current behavioral class $G_{E,t}$ and a recurrent target $\Gam_t$ that accounts for future observations as side information; $C_t$ is trained to realize this target. Bottom: A$'$ has three distinct rate levels, $2\to1\to0$. The grasp and place columns show the rates at the decision, \emph{before} the corresponding class is consumed. The green note refers to the re-reveal toy (\S\ref{sec:f3}).}
\label{fig:concept}
\end{figure}

The primary contribution is this representation target and its characterization. Measuring it requires a sole temporal carrier and explicit observation and occupancy conventions \citep[cf.][]{dann2016memory}. We check sufficiency before minimality and probe continuous bypasses, body memory, and current observations. Exact experimental bits concern the induced symbolic model, not hardware storage or exact continuous control.

Learning experiments test attainability: code rates remain near $0$ and $2$ bits as hidden-mode counts grow to $512$, while system-identification rates grow with larger codebooks. Plain training becomes less reliable with temporal distance or load \citep[cf.][]{ni2023transformers}. Event-agnostic future-behavior supervision improves acquisition from states and pixels in the tested settings, but can retain surplus information and still fails at longer delays. The architecture uses standard components \citep{tishby2000information,lee2024vqbet}; supervision is a learning surrogate, separate from the minimal-memory characterization.

\section{Minimal recurrent behavioral memory}\label{sec:theory}
\subsection{The instantaneous problem}\label{sec:quotient}
Consider a controlled process with state $S_t$, observation $O_t=\mathcal O(S_t)$, history $H_t=(O_{1:t},A_{1:t-1})$, and expert $\pi_E(\cdot\mid S_t)$. All distributions below use the expert occupancy $d_{E,t}$ on a finite horizon. We use countable reachable histories, as in the enumerated models; Appendix~\ref{app:proofs} states the conventions.
\begin{definition}[Behavioral quotient]\label{def:quotient}
States $s,s'$ are equivalent when $\mathcal O(s)=\mathcal O(s')$ and $\pi_E(\cdot\mid s)=\pi_E(\cdot\mid s')$. Their equivalence class is $G_{E,t}=g_E(S_t)$.
\end{definition}
We assume (A1) finitely many classes per observation fiber and $H(G_{E,t}\mid O_t)<\infty$; (A2) $H(G_{E,t}\mid O_t,H_t)=0$, so history determines the expert's current behavior; and (A3) a nonnegative distortion $\delta$ that vanishes exactly when action distributions agree. Assumption (A2) excludes expert-private information unavailable in history \citep{yu2026capability}.

Let $R_E(D)$ minimize $I(C;H_t\mid O_t)$ over history encoders and decoders with expected distortion at most $D$. Compressing history reduces to compressing the behavioral quotient:
\begin{equation}\label{eq:instantaneous}
R_E(D)=R_{G\mid O}(D),\qquad R_E(0)=H(G_{E,t}\mid O_t).
\end{equation}
Every zero-distortion code determines $G_{E,t}$ together with $O_t$, and encoding $G_{E,t}$ attains the bound (Appendix~\ref{app:instantaneous}). If $M$ uniform hidden modes form $R$ equal behavioral classes, $\log_2(M/R)$ bits are unnecessary for the current action. This elementary reduction answers the instantaneous question. It does not ensure a representation that can be updated recursively.

\subsection{From current behavior to an exact recurrent target}
\textbf{Why recurrence changes the requirement.} In A$'$, an early cue reveals independent binary classes $(\beta_1,\beta_2)$ for grasp and placement. Immediately afterwards $H(G_{E,t}\mid O_t)=0$, since every expert waits in the same way. Nevertheless, a recurrent policy must retain all four combinations until the first decision: $2$ bits. After the grasp, the required memory falls to $1$ bit; after placement it falls to $0$. This is the distinction that the recurrent definition must capture.

\begin{definition}[Recurrent realization]\label{def:realization}
A realization has a discrete state $C_t=F_t(C_{t-1},O_t,A_{t-1})$ with deterministic updates and fixed $C_0$, and a policy $\hat\pi(\cdot\mid O_t,C_t)$. The code is its sole internal temporal carrier. It has zero distortion if $\hat\pi(\cdot\mid O_t,C_t)=P_E(A_t\mid H_t)$ almost surely at every step.
\end{definition}
Define $R_t^{\rm mem}(0)=\inf H(C_t\mid O_t)$ over zero-distortion realizations on the \emph{full horizon}, allowing countable state spaces. The experiments use finite codebooks. Write $U_t(h)=\operatorname{supp}P_E(A_t,O_{t+1}\mid H_t=h)$ and $hu$ for a history extended by $u=(a_t,o_{t+1})$.

\begin{definition}[Behavioral memory compatibility]\label{def:gamma}
At $T$, $h\sim_T h'$ iff $(O_T,G_{E,T})(h)=(O_T,G_{E,T})(h')$. Recursively, $h\sim_t h'$ iff $(O_t,G_{E,t})(h)=(O_t,G_{E,t})(h')$ and
\[
hu\sim_{t+1}h'u\quad\text{for every }u\in U_t(h)\cap U_t(h').
\]
\end{definition}
Compatible histories demand identical current behavior and remain compatible after every continuation reachable from both. Comparing only common continuations credits future observations with the distinctions they will supply.

\begin{theorem}[Minimal recurrent behavioral memory]\label{thm:exact-trans}
Under (A1)--(A3), if $\sim_t$ is transitive at every step, its classes $\Gam_t=[H_t]_{\sim_t}$ admit a deterministic update $\Gam_{t+1}=\Phi_t(\Gam_t,O_{t+1},A_t)$ and
\begin{equation}\label{eq:memory-minimum}
R_t^{\rm mem}(0)=H(\Gam_t\mid O_t)\ge H(G_{E,t}\mid O_t).
\end{equation}
Every zero-distortion realization determines $\Gam_t$ from $(O_t,C_t)$; the realization $C_t=\Gam_t$ attains the minimum simultaneously at all steps.
\end{theorem}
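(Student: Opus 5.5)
The argument has three parts. First, show that transitivity lets the classes $\Gam_t$ be updated recursively. Second, show that every zero-distortion realization determines $\Gam_t$ from $(O_t,C_t)$, which gives the lower bound on $H(C_t\mid O_t)$. Third, note that $C_t=\Gam_t$ is itself a zero-distortion realization, so the bound is attained at every step at once. The final inequality $H(\Gam_t\mid O_t)\ge H(G_{E,t}\mid O_t)$ then follows because, by Definition~\ref{def:gamma}, $\sim_t$ refines equality of $(O_t,G_{E,t})$. Hence $G_{E,t}$ is a function of $(O_t,\Gam_t)$.

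\textbf{Update map.} Assume $\sim_t$ is transitive; it is then an equivalence relation on reachable histories. Take $h\sim_t h'$ and a continuation $u=(a_t,o_{t+1})$ that is reachable from both. The recursive clause of Definition~\ref{def:gamma} gives $hu\sim_{t+1}h'u$ directly. So define $\Phi_t([h],o,a)=[hu]$ whenever $u\in U_t(h)$. This is well defined on all reachable triples: if $u$ is reachable from $h$ but not from $h'$, the triple $([h],o,a)$ is realized only through histories for which $u$ is reachable. If two class members both reach $u$, the clause forces their extensions into a common class. To make this fully rigorous I would consider all $h''\in[h]$ with $u\in U_t(h'')$ and check that any two of them satisfy the clause pairwise, which is what transitivity grants. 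Off the support, $\Phi_t$ is defined arbitrarily. The policy is $\hat\pi(\cdot\mid o,\Gam_t)$, set equal to the common expert action distribution of the class; this is determined by (A2) together with the equality of $G_{E,t}$ within the class. So $C_t=\Gam_t$ has zero distortion.

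\textbf{Lower bound (the main step).} Fix any zero-distortion realization $(F,\hat\pi)$. I will show by backward induction on $t$ that for reachable $h,h'$,
\[
\bigl(O_t(h),C_t(h)\bigr)=\bigl(O_t(h'),C_t(h')\bigr)\implies h\sim_t h' .
\]
At $T$, zero distortion gives $P_E(A_T\mid h)=\hat\pi(\cdot\mid o,c)=P_E(A_T\mid h')$. By (A2), each side equals the action distribution of the state's class, so $G_{E,T}$ agrees, and $O_T$ agrees by assumption. For the inductive step, the same argument gives $(O_t,G_{E,t})$ agreement at $t$. Now take any common continuation $u=(a,o')$. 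The deterministic update gives $C_{t+1}(hu)=F_{t+1}(c,o',a)=C_{t+1}(h'u)$, and $O_{t+1}$ agrees since it equals $o'$. Both extensions are reachable, so the induction hypothesis yields $hu\sim_{t+1}h'u$.

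Hence $\Gam_t$ is a function of $(O_t,C_t)$ almost surely, and $H(C_t\mid O_t)\ge H(\Gam_t\mid O_t)$. For countable code spaces this uses the data-processing form $H(f(X,Y)\mid Y)\le H(X\mid Y)$, with the convention from Appendix~\ref{app:proofs} that the entropies are finite or $+\infty$. Taking the infimum over realizations, and using that $C_t=\Gam_t$ attains the bound simultaneously for all $t$, gives \eqref{eq:memory-minimum}.

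\textbf{Expected obstacle.} The delicate point is showing that $\Phi_t$ is well defined when class members have different continuation supports. The compatibility relation only compares pairs of histories through their common continuations, so I would argue carefully that the class of $hu$ depends only on $([h],u)$ over the histories for which $u$ is reachable. I would also handle the measure-zero and unreachable cases explicitly; the backward induction itself is routine.
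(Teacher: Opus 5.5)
Your proposal is correct and matches the paper's proof essentially step for step. The paper likewise gets well-definedness of $\Phi_t$ from the recursive clause applied to any two class members sharing a continuation, and shows by backward induction (which, as in your argument, does not itself use transitivity) that every $(O_t,C_t)$-cell of a zero-distortion realization is pairwise $\sim_t$-compatible; transitivity then places each cell in a single $\Gam_t$ class, giving the lower bound, and $C_t=\Gam_t$ attains it at every step. The obstacle you flag is already settled by your pairwise argument: members of a class with disjoint supports never compete for the same value of $\Phi_t$.
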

\emph{Proof idea.} Histories sharing $(O_t,C_t)$ receive the same action distribution and, after a common continuation, the same updated code. Backward induction makes each such cell compatible. Under transitivity it lies in one $\Gam_t$ class, giving the lower bound. Conversely, compatibility makes the successor class independent of the representative history, so the classes themselves define a sufficient realization. Appendix~\ref{app:proofs} gives the proof.

The \emph{anticipatory memory} is $\Delta_t=H(\Gam_t\mid O_t)-H(G_{E,t}\mid O_t)\ge0$. It quantifies information that must persist despite being unnecessary for the current action. A sufficient finite codebook needs $K\ge\max_{t,o}|\Gam_t|_o$: in A$'$ the joint memory has four states although each individual decision is binary. A distinction can leave memory after its last use, or earlier if a future observation restores it before its next use.

Transitivity is directly checkable. A convenient sufficient condition, continuation-support homogeneity (A4), requires equal $U_t(h)$ whenever $(O_t,G_{E,t})(h)$ agrees. It holds in A$'$ and cue--corridor tasks; the solver certifies it there. It is not necessary: Task~A is transitive although a remembered mass changes the support of the next transport observation (Appendix~\ref{app:support}).

\subsection{Why the general case is harder}
Future observations can distinguish histories that currently share a code. If a class will be re-observed before use, histories differing in that class can have disjoint continuation supports and be compatible without storing it. But compatibility need not be transitive: $h_1\sim h_2$ and $h_2\sim h_3$ can hold on disjoint futures while $h_1,h_3$ conflict on a common future. Taking a transitive closure would then merge histories that must remain distinct.

The correct general object is a \emph{closed compatible state assignment}: a partition of histories into pairwise-compatible cells whose successors under each common continuation remain in one cell. We prove that these assignments are exactly the joint $(O_t,C_t)$ partitions of zero-distortion recurrent realizations, with
\[
R_t^{\rm mem}(0)=\inf_{\text{closed compatible assignments }(\mathcal P_s)} H(P_t\mid O_t),
\]
where $P_t$ is the cell index (Proposition~\ref{prop:cover}). This is a recurrent zero-error problem with decoder side information \citep{witsenhausen1976zero,alon1996source,koulgi2003zeroerror}, related to incompletely specified machines \citep{paull1959minimizing}; the objective here is occupancy-weighted conditional entropy.

A stronger relation $\sim_t^s$ also requires equal continuation supports and recursively applies that requirement. It always defines a realizable equivalence $\Gam_t^s$.
\begin{theorem}[General bounds]\label{thm:sandwich}
Under (A1)--(A3), every joint $(O_t,C_t)$ cell of a zero-distortion realization is compatible, and
\[
H(G_{E,t}\mid O_t)\le R_t^{\rm mem}(0)\le H(\Gam_t^s\mid O_t).
\]
Both inequalities can be strict.
\end{theorem}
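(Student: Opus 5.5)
The plan has three parts: the compatibility of joint cells by backward induction, the two inequalities, and two small counterexamples for strictness.

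\textbf{Compatibility of cells.} Fix a zero-distortion realization $(F_t,\hat\pi)$. I will show by backward induction on $t=T,T-1,\dots$ that any two reachable histories $h,h'$ with $(O_t,C_t)(h)=(O_t,C_t)(h')$ satisfy $h\sim_t h'$.

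At $t=T$, zero distortion gives $P_E(A_T\mid h)=\hat\pi(\cdot\mid O_T,C_T)=P_E(A_T\mid h')$. By (A2), $G_{E,T}$ is determined by history, and its value is identified through $(O_T,\pi_E)$. Hence $(O_T,G_{E,T})$ agree, using (A3) only to read off that the action distributions are equal.

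For $t<T$ the same argument gives equal $(O_t,G_{E,t})$. Take any $u=(a_t,o_{t+1})\in U_t(h)\cap U_t(h')$. The update is deterministic, so
\[
C_{t+1}(hu)=F_{t+1}(C_t,o_{t+1},a_t)=C_{t+1}(h'u),
\]
and both extensions share $O_{t+1}=o_{t+1}$. Both extensions are reachable under the occupancy, so the induction hypothesis gives $hu\sim_{t+1}h'u$. This is exactly Definition~\ref{def:gamma}.

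\textbf{Lower bound.} Since each joint cell is compatible, $(O_t,C_t)$ determines $(O_t,G_{E,t})$, because compatible histories share $G_{E,t}$. Therefore
\[
H(C_t\mid O_t)\ge H(G_{E,t}\mid O_t).
\]
Taking the infimum over zero-distortion realizations gives the left inequality.

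\textbf{Upper bound.} I will verify that $\Gam_t^s$ is a genuine equivalence. It is defined by equality of $(O_t,G_{E,t})$, equality of $U_t$, and equivalence of all successors. Since these are conjunctions of equalities, transitivity follows by backward induction. Because supports coincide, every continuation of one history is a common continuation, so $\Phi_t([h],u)=[hu]$ is well defined. Taking $C_t=\Gam_t^s$ with $\hat\pi(\cdot\mid o,[h])=P_E(A_t\mid h)$, which is well defined since $G_{E,t}$ and $O_t$ agree within a class, gives a zero-distortion realization. Hence $R_t^{\rm mem}(0)\le H(\Gam_t^s\mid O_t)$.

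I expect the main care to be needed on measure-theoretic conventions: countable histories, and statements that hold only almost surely on reachable histories. The finiteness needed for the entropies to be finite comes from (A1) together with the class counts. For countable state spaces, I will note that the infimum is taken over codes that are measurable in history, as in Appendix~\ref{app:proofs}.

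\textbf{Strictness.} I will exhibit two small instances.

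\emph{Left inequality strict.} Use A$'$ immediately after the cue. There $H(G_{E,t}\mid O_t)=0$, while the four compatibility classes are all required, so the memory is $2$ bits. The argument is transitive and Theorem~\ref{thm:exact-trans} applies.

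\emph{Right inequality strict.} Use a re-reveal toy. A binary cue $\beta$ is shown, then hidden, then re-displayed before it is used. Histories differing in $\beta$ have disjoint continuation supports, because the re-display observation differs. So they are $\sim_t$-compatible but not $\sim_t^s$-equivalent. A constant code followed by reading the re-display achieves $0$ bits, whereas $H(\Gam_t^s\mid O_t)=1$ under a uniform $\beta$. This is the step where the construction must be checked most carefully, to confirm that the zero-bit realization really has zero distortion at every step.
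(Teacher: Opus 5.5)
Your proof is correct and follows the paper's argument for the main part step by step: backward induction for compatibility of the $(O_t,C_t)$-cells, $G_{E,t}$ as a function of $(O_t,C_t)$ for the lower bound, and the strong-congruence realization $C_t=\Gamma^s_t$ for the upper bound. Strictness is where you differ: the paper uses a single non-transitive three-history instance with $R_t^{\rm mem}(0)=h_2(1/3)$ strictly inside $[0,\log_2 3]$, so both inequalities are strict at once, whereas your two transitive examples (A$'$ for the left inequality, the re-reveal toy for the right) are also valid and match cases the paper itself cites in item~(ii) of its strictness remark and in Proposition~\ref{prop:rd}.
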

Three equiprobable histories can have minimum $h_2(1/3)$ strictly inside $[0,\log_2 3]$. On Task~A, $\Gam_t^s$ charges a mass-half bit because observation supports differ, while $\Gam_t$ requires zero: the distinction changes no expert action.

\textbf{Computation and certificates.} Backward refinement computes compatibility, checks transitivity, and returns exact class entropies when the condition holds. A partition dynamic program solves small general instances. For the larger non-transitive corridor instances $W\in\{3,5,\dots,13\}$, an incompatibility-based entropy lower bound matches a certified closed compatible assignment at every step, establishing exact finite-instance rates of $2$ and $1$ bits in the two halls (Appendix~\ref{app:cover}). This is a finite-instance certificate, not a general polynomial-time algorithm. For nonzero distortion, the instantaneous rate--distortion function remains a lower bound; we do not characterize the full recurrent frontier (Appendix~\ref{app:rdprop}).

\section{When does a bit count as memory?}\label{sec:method}\label{sec:box}
Our measurement principle separates three questions.
\begin{framed}\small
\textbf{Sufficiency:} does $(C_t,\Ob_t)$ determine the required behavioral memory?\\
\textbf{Minimality:} conditional on sufficiency, how much code rate exceeds the requirement?\\
\textbf{Attribution:} is the information in $C_t$, rather than in another temporal carrier or in observations under a different occupancy?
\[\boxed{\text{sufficiency first, minimality second}}\]
\end{framed}

\textbf{What the reported bits mean.} We distinguish the theoretical \emph{behavioral memory rate} $H(\Gam_t\mid\Ob_t)$, the empirical \emph{learned code rate} $\Hc(C_t\mid\Ob_t)$, \emph{codebook capacity} $\log_2K$, and \emph{storage footprint} in bytes. The first is exact for the induced finite symbolic behavioral model under expert occupancy. The observation convention $\Ob_t=f_t(O_t)$ credits behaviorally decodable classes to the current observation. Except for a disclosed, behaviorally irrelevant Task~A sag symbol, it is a frozen coarsening, so $H(C_t\mid\Ob_t)\ge H(C_t\mid O_t)$ (Appendix~\ref{app:solver}). These rates quantify internal information burden under the convention, not exact continuous-control requirements.

\textbf{Sufficiency and excess rate.} For transitive models, define $S_\Gam=1-\Hc(\Gam_t\mid C_t,\Ob_t)/\Hc(\Gam_t\mid\Ob_t)$ where the denominator is positive, and $S_G$ analogously. A seed passes our empirical gate when $S_\Gam>0.9$ at every positive-requirement step of both waiting gaps and $S_G>0.9$ at the first grasp and first placement decisions; Task~A uses its placement gate. We call these seeds \emph{sufficient} as shorthand for passing this diagnostic, not for exact zero distortion. Non-transitive corridor comparisons use the certified rate bound rather than assume a canonical $\Gam_t$. Under exact sufficiency,
\[
H(C_t\mid\Ob_t)-H(\Gam_t\mid\Ob_t)=H(C_t\mid\Gam_t,\Ob_t).
\]
Thus matching the lower bound is evidence of absent redundancy only after sufficiency is established. Threshold sensitivity and the full-versus-relaxed gate comparison are in Appendix~\ref{app:gate}.

\textbf{Attribution controls.} A quantized readout of a persistent GRU state reports only $1.06$ bits on A$'$ despite a two-bit requirement: the unmeasured state carries history. A policy can also encode a class in its own pose; body-memory probes detect this. We therefore report expert-occupancy diagnostics separately from closed-loop success under policy occupancy, with raw-observation leak probes and matched-side-information controls (Appendix~\ref{app:sideinfo}). Low code rate or high task success alone does not establish minimal internal memory.

\subsection{A measurable recurrent realization}\label{sec:realization}
We implement a discrete sole-carrier recurrent policy (our DIACRITIC realization), $C_t=Q(F(C_{t-1},O_t,A_{t-1}))$, using a residual proposal, hard nearest-code quantization, and straight-through gradients. The action head receives $(O_t,C_t)$; no continuous hidden state persists. With a conditional prior $r_\eta$, the base objective is
\begin{equation}\label{eq:base-objective}
\mathcal L_{\rm base}=\tfrac1T\sum_t\!\left[\ell\big(\hat\pi(A_t^E\mid O_t,C_t)\big)+\beta\big(-\log r_\eta(C_t\mid O_t)\big)\right]+\mathcal L_{\rm VQ}.
\end{equation}
The expected rate penalty equals $H(C_t\mid O_t)+\mathbb E_{O_t}\mathrm{KL}(p(\cdot\mid O_t)\Vert r_\eta(\cdot\mid O_t))$. It is a variational upper bound; reported rates are plug-in entropies of hard codes. These components are standard \citep{tishby2000information,peng2019vdb,agustsson2017soft,lee2024vqbet}. Future-behavior supervision, defined in \S\ref{sec:f4}, changes the training signal while leaving the evaluated carrier unchanged. Appendix~\ref{app:design} specifies the realization and auxiliary variants.

\section{Does learned memory match the behavioral requirement?}\label{sec:findings}
\subsection{Evaluation protocol}\label{sec:protocol}
The main tasks are Task~A (physical hidden mass, zero required memory during grasp), readout-2 (nonzero memory at growing hidden-mode count), and A$'$ (two delayed decisions). The latter two use a probe channel and kinematic attachment; pixel A$'$ retains the probe and phase inputs. We certify the induced symbolic models and evaluate the same code diagnostics across tasks. Unless stated otherwise, a cell has eight seeds and each policy has $128$ closed-loop rollouts. Rate minimality is assessed on sufficient seeds; counts and all-seed control success are reported alongside it.

Two protocols answer different questions. For the plain learner's \emph{distortion-constrained} comparison, $\beta$ is selected by closed-loop fidelity before inspecting rates (A$'$: $\beta\le10^{-3}$). An \emph{attainability} control uses a continuous scaffold annealed to zero before evaluation ($N\approx4$k, $\beta=2\!\times\!10^{-3}$); it demonstrates a reachable boundary, not a point on the plain learner's frontier. The supervised learning experiments use a separately frozen configuration. Full settings, fresh-recording checks, and estimator diagnostics are in Appendices~\ref{app:solver}--\ref{app:replication}.

\textbf{Auditing observation side information.}\label{sec:audit} Injecting a placement-class leak into A$'$ raises success from $0.59$ to $0.96$ and slot accuracy from $0.68$ to $0.98$, while action error falls. A pre-specified low-rate rule flags only $1/96$ runs: codes can retain other content. An exploratory rule pairing correct placement with insufficient second-gap memory flags $11/15$ and $8/14$ policies at $2$ and $4$\,mm, versus $0/166$ correct clean unsupervised runs. This auxiliary signal is nonmonotone ($4/16$ at $8$\,mm). Direct observation probes test whether $O_t$ reveals class information uncredited to $\Ob_t$ (Appendix~\ref{app:audit}).

\subsection{World complexity: zero and nonzero requirements}\label{sec:f1}

\begin{figure}[t]
\centering\includegraphics[width=\linewidth]{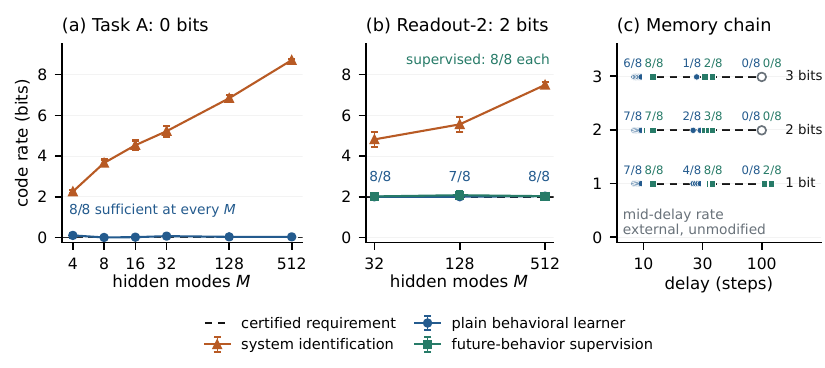}
\caption{\textbf{Learned code rate at fixed behavioral requirement.} (a,~b) World complexity: Task~A has a zero-bit grasp requirement and readout-2 a two-bit first-gap requirement as $M$ grows. Points are seed means with standard deviations over sufficient seeds (counts shown); system-identification rates use all eight seeds and the largest tested codebook. (c) \emph{Mid-delay rate} on the unmodified bsuite memory chain: every sufficient seed matches the requirement at this measurement point across the tested delays; labels count sufficient seeds of eight per learner, and hollow markers indicate none. Dashed lines are certified requirements, not fits.}
\label{fig:taskA}
\end{figure}

\textbf{Task A.} An early probe identifies one of $M=4$--$512$ hidden masses. The mass changes the arm's dynamics and later observation supports but never the expert's action choice; an independent slot is revealed shortly before placement. Compatibility is transitive, so the exact grasp-phase requirement is zero even though the strong congruence charges one bit. The plain $K=16$ carrier is sufficient on $8/8$ seeds at every $M$, carries $0.00$--$0.10$ bit during grasp with at most $0.04$ bit about mass, and attains $0.91$--$0.99$ closed-loop success. At $M=512$ its grasp rate is $0.03$ bit.

\textbf{Readout-2.} A binary display reveals $\theta\in[M]$; the expert grasps according to its quartile and places according to its half. The first-gap requirement is exactly two bits at $M=32,128,512$, while full-history information during transport grows from $6$ to $10$ bits. Plain training is sufficient on $8/8,7/8,8/8$ seeds; task-informed future-behavior supervision gives $8/8$ throughout. Learned first-gap rates remain near $2$ bits, with closed-loop success $0.85$--$0.96$. Verdicts reproduce on independent recordings. The separation therefore also holds with a nonzero behavioral memory requirement.

\textbf{What the identification comparison establishes.} A system-identification head forces the hidden parameter through the same regularized carrier. With codebook capacity increased to $K=128$--$1024$, its learned rate grows from $2.25$ to $8.71$ bits on Task~A and from $4.82$ to $7.49$ on readout-2. These are learned rates, not claims of perfect parameter recovery. A separate identification carrier restores Task~A success to $0.90$--$0.97$ while carrying an additional $4.1$--$6.0$ bits. This control separates the information burden of identification from interference when both objectives share a carrier (Appendix~\ref{app:controls}).

\textbf{External validation on community benchmarks.} On the bsuite memory chain and Passive T-maze, used unmodified \citep{cherepanov2025mikasa,ni2023transformers}, the solver certifies the induced symbolic models. The memory-chain requirement is zero during the two cue-visible steps, $b$ bits from the first cue-free step until the query, and one bit once the query index appears, independently of delay. Of $192$ runs, $87$ pass the gate, including closed-loop success $\ge0.99$, and match the certified rate \emph{at mid-delay} (Figure~\ref{fig:taskA}c).

This agreement is constrained by the task structure: at mid-delay the code is a deterministic function of the context, so exact sufficiency forces its rate to equal the context entropy. These tasks validate protocol transfer and acquisition of the required information; Task~A and readout test compression in the presence of additional world information. Reliability decreases with delay: none of $32$ runs with two or three bits at delay $100$ passes (Appendix~\ref{app:cert}).

\subsection{Anticipation, use, and re-observation}\label{sec:f2}\label{sec:f3}

\begin{figure}[t]
\centering\includegraphics[width=\linewidth]{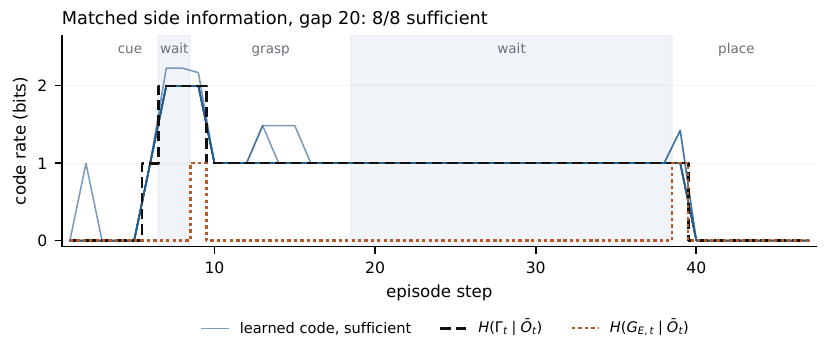}
\caption{\textbf{A recurrent realization tracks the behavioral requirement.} A$'$ at gap 20, using the matched-side-information hierarchical policy with task-informed supervision ($\beta=10^{-3}$, eight seeds). Thin blue curves show every seed's learned code rate; black and orange curves show the certified recurrent and instantaneous requirements. Both waiting gaps require memory despite zero instantaneous demand. The theory and policy share the same symbolic side information.}\label{fig:memory-profile}
\end{figure}

A$'$ isolates anticipation: its required memory follows $2\to1\to0$, while $H(G_{E,t}\mid\Ob_t)=0$ in both gaps. Sufficient plain seeds carry $2.00$ bits in the first gap. The informative result is their negligible surplus, not the lower bound already implied by exact sufficiency. A matched-side-information hierarchical control restricts the transition, prior, and behavioral head to $(C_t,\Ob_t)$; a memory-free controller executes the chosen action using $O_t$. With task-informed supervision, all $32$ runs across two horizons and two rate weights pass, carrying $2.03$--$2.09$ bits before grasp and $1.00$ afterwards, with closed-loop success $0.96$--$1.00$ (Appendix~\ref{app:sideinfo}).

\textbf{When the requirement falls.} Information expires after its last behavioral use or when future observations will restore it before use. The re-reveal toy exhibits the latter: its exact requirement is $1$ bit before the first decision and $0$ afterwards, while an open-loop future-action target retains $2.06$ and $1.29$ bits. A fully physical weighing task also has a certified $2\to0$ requirement because the next lift re-reveals mass; its offline attainability and closed-loop failure are documented in Appendix~\ref{app:weigh}.

\textbf{Reaching the lower boundary.} Structural expiration does not guarantee learned minimality. On A$'$, among seeds sufficient under policy occupancy, plain training at $\beta=10^{-3}$ achieves $0.97$ success with post-use rate $1.38$; stronger pressure reduces it to $1.26$ while success falls to $0.70$. The scaffold control instead reaches $0.99$--$1.05$ bit on five of six seeds sufficient under policy occupancy. In the signpost corridor (Appendix~\ref{app:grid}), sufficient codes in the $M=16$ profile average $1.2$--$1.8$ bits against the one-bit requirement. These observations establish attainability in specific settings and optimization slack elsewhere; the weight sweep is not an exact rate--distortion frontier (Appendix~\ref{app:rd}).

\section{Can a compact learner find the minimal memory?}\label{sec:f4}
We compare learned code rates with certified requirements, then ask how reliably training reaches a sufficient representation. The theorem specifies the target, not a learning guarantee.

\subsection{A temporal learning difficulty}
At a fixed certified requirement, code sufficiency is negatively associated with delay across $320$ sole-carrier runs on A$'$ ($-0.146$/step, CI $[-0.18,-0.12]$). This pooled association is descriptive because configurations differ. Controlled $16$-seed sweeps also give negative within-learner slopes; larger memory load reduces reliability (Appendices~\ref{app:stats}, \ref{app:compute}). Full-history attention recovers the required decisions in each tested setting under teacher forcing and reaches $0.88$--$1.00$ closed-loop success, showing attainability with unrestricted history. Its persistent history excludes its code readout from the memory-rate comparison.

\begin{samepage}
\subsection{Event-agnostic future-behavior supervision}
To help a compact code acquire the required memory, we sample $j\sim U\{1,\ldots,T-1\}$ at each step $t$. A training-only head predicts a causal forecaster's future-action estimate $\tilde a^E_{t,j}(H_t)$:
\begin{equation}\label{eq:forecast-objective}
\mathcal L_{\rm future}=\mathbb E_{t,j}\!\left[\mathbf 1_{t+j\le T}\left\|q_\psi(O_t,C_t,j)-\tilde a^E_{t,j}(H_t)\right\|^2\right].
\end{equation}
The forecaster learns from recorded histories and provides conditional predictions even before cue visibility. No latent labels or event annotations enter the objective; only offsets beyond the episode are masked. Both forecaster and auxiliary head are discarded at evaluation. We optimize $\mathcal L_{\rm base}+\lambda(k)\mathcal L_{\rm future}$: $\lambda=1$ through $60\%$ of training, falls to zero by $80\%$, and leaves imitation and rate alone for the remaining steps.
\end{samepage}

\begin{table}[htbp]\centering\small
\caption{\textbf{Learning near the certified two-bit requirement.} First-gap rates are reported below; counts give full-gate sufficient seeds out of eight, and the last row gives all-seed closed-loop success. The event-agnostic configuration is frozen; all $40$ verdicts reproduce on independent recordings. The task-informed reference uses event structure.}\label{tab:learning-main}\label{fig:barrier}
\setlength{\tabcolsep}{4pt}
\begin{tabular}{@{}lccccc@{}}\toprule
training target & gap 6 & gap 10 & gap 20 & $M=16$ & $M=32$ \\\midrule
teacher internal state & $2/8$ & $0/8$ & $0/8$ & $0/8$ & $1/8$ \\
task-informed future behavior & $8/8$ & $8/8$ & $8/8$ & $8/8$ & $5/8$ \\
event-agnostic future behavior & $8/8$ & $7/8$ & $8/8$ & $6/8$ & $7/8$ \\\midrule
event-agnostic: closed-loop success & $.99$ & $.87$ & $.90$ & $.79$ & $.93$ \\\bottomrule
\end{tabular}
\end{table}

Event-agnostic supervision yields first-gap rates of $2.00$--$2.04$ bits on sufficient seeds, against a two-bit requirement. The frozen configuration ($K=16$, $\beta=10^{-3}$, $N=448$, $10^4$ steps) yields $36/40$ such seeds (Table~\ref{tab:learning-main}); every verdict reproduces on fresh recordings. A more informed reference predicts only the first grasp and placement actions and masks them after use. It reaches $37/40$, with lower post-use rates, but requires the event structure. These are two surrogates with different information requirements, not different definitions of $\Gam_t$.

\textbf{What the controls suggest.} Matching a full-history teacher's internal state yields only $0$--$2/8$ sufficient seeds: a state used for the current action need not expose a pending class that the teacher can retrieve later. Longer training, curriculum, and code refinement leave the longest A$'$ horizon at $0$--$1/8$. Improvement from future-behavior supervision at the same capacity therefore points to the long-range training signal as an important bottleneck, without identifying a unique optimization mechanism. Direct recorded-action targets work for the task-informed reference ($7/8$); they do not establish teacher-free performance for the frozen event-agnostic configuration (Appendix~\ref{app:generic}).

\textbf{State count and learning reliability.} Readout-3 requires three bits and at least eight states, yet sufficient-seed counts are $0,1,5,15,23$ out of $24$ for $K=8,10,12,16,24$, respectively; sufficient seeds average $2.99$--$3.14$ bits (Appendix~\ref{app:ksweep}). This separates the required state count from the capacity that supports reliable learning under the tested configuration. Small continuous RNNs also fail on the external T-maze, although these controls do not isolate quantization from network size and optimization (Appendix~\ref{app:exttmaze}).

\subsection{Pixels and predictive surplus}

\begin{table}[htbp]\centering\small
\caption{\textbf{Pixel code rates versus the certified requirement.} A$'$ at gap 20 requires $2\to1$ bits. Sixteen seeds per row; rates average sufficient seeds, success all seeds. Inputs retain probe and phase channels. The $60\%$ schedule was selected on seeds 0--7 and tested unchanged on seeds 8--15.}\label{tab:pixels-main}
\setlength{\tabcolsep}{5pt}
\begin{tabular}{@{}lccc@{}}\toprule
learner & first $\to$ second gap (bits) & sufficient & closed-loop success \\\midrule
plain & $1.99\to1.52$ & $1/16$ & $0.14$ \\
event-agnostic, annealed & $2.00\to1.13$ & $13/16$ & $0.65$ \\
task-informed reference & $2.00\to1.00$ & $15/16$ & $0.61$ \\\bottomrule
\end{tabular}
\end{table}

With pixels, sufficient codes approach the certified rates (Table~\ref{tab:pixels-main}). On held-out seeds, event-agnostic supervision yields $6/8$ sufficient policies and $0.57$ closed-loop success, against $0/8$ and $0.08$ for plain training; a schedule fixed before the pixel runs gives $12/16$ overall (Appendix~\ref{app:generic}). Grasp-side accuracy is high for all learners; the difference concerns the later placement. Pixel control precision still limits success.

\textbf{Acquisition and minimality respond differently to supervision.} Without annealing, event-agnostic supervision still acquires the memory at gap 20 ($7/8$), but the second-gap code rate remains $2.14$ bits from states and $2.01$ from pixels against a one-bit requirement. The expired grasp class contributes at most $0.02$ bit with or without annealing.

Conditioned on the pending placement class, the state-input code carries $0.56$ bit about the initial $x$-position quartile, compared with $1.16$ bits of total surplus. Removing positional jitter and retraining yields $8/8$ sufficient seeds and $0.94$ closed-loop success, while reducing un-annealed surplus from $1.16$ to $0.23$ bit. This supports initial-position variability as an important driver. With the original jitter, annealing reduces information about $x$ to $0.01$--$0.03$ bit and surplus to $0.17$--$0.21$ bit (Appendix~\ref{app:generic}).

The external memory chain also separates acquisition from forgetting. After the query appears, the requirement falls to about one bit, but sufficient runs with two- and three-bit contexts retain mean rates of $1.91$ and $2.69$ bits, despite their exact mid-delay rates (Appendix~\ref{app:cert}). The query is shown only at the final step, so the rate term can reward forgetting at that step alone.

An open-loop action target can reward continuous-control details outside the symbolic target, as well as distinctions that future observations will supply. The tested A$'$ controls separate two effects: future supervision helps acquire required memory, while removing it lets the rate objective reduce predictive surplus.

\section{Related representation targets}\label{sec:related}
\textbf{Predictive states.} Causal states and predictive-state representations preserve future observation distributions \citep{shalizi2001computational,still2010optimal,littman2002predictive,gangwani2020belief,ni2024bridging}; the $\epsilon$-transducer preserves conditional input--output behavior \citep{barnett2015computational}. Our fixed-expert target compares responses on common reachable continuations, treating future observations as side information. Predicting the full process can require additional distinctions (Appendix~\ref{app:targets}).

\textbf{States for control or identification.} Bisimulation, approximate information states, and stable quotients preserve reward, dynamics, or Markov structure \citep{givan2003equivalence,ferns2004metrics,castro2020scalable,subramanian2022ais,zhang2026minimal}. Inverse representations target control-endogenous distinctions \citep{mhammedi2023representation,lamb2023guaranteed,wu2024generalizing}; identification targets hidden parameters \citep{kumar2021rma,liang2024rma}. A specified expert can ignore these. Our matched objectives compare targets, rather than full published algorithms (Appendix~\ref{app:controls}).

\textbf{Information-limited memory.} Bounded-memory control uses sequential rate--distortion objectives \citep{fox2012bounded,fox2016minimum}; decision-centric memory retains distinctions supporting good decisions \citep{zou2026remember,walsh2026supportsufficiency,yamin2026whatmust}. Our fixed-expert characterization combines recurrent realizability, future observations as side information, and entropy minimality. Memory Lens bounds action-relevant information \citep{dann2016memory}; our target includes the anticipatory gap. Complementary work studies spurious history dependence \citep{dehaan2019causal,wen2020copycat,swamy2022sequence}, memory architectures and benchmarks \citep{yue2024learningmemory,wang2026camp,shah2026halo,torne2025ptp,gao2026gatedmemory,morad2023popgym,pleines2025memorygym,chen2026rmbench,dai2026robomme}, and the separation of memory from credit-assignment length \citep{ni2023transformers} or distillation \citep{parisotto2021efficient,weinzaepfel2026compressing}. These motivate our learning controls.

\needspace{9\baselineskip}\section{Scope and conclusion}\label{sec:conclusion}
The exact quotient formula requires history-determined expert behavior and transitive compatibility; general exact computation remains limited to small instances or matching finite-instance bounds. Experimental rates concern symbolic behavior under the stated observation convention and occupancy. Reliable closed-loop evidence is strongest through two bits. On physical weighing, expert-led observation generation restores grasp-side accuracy from $0.30$--$0.32$ to $0.93$--$0.99$, supporting retention and use of this distinction while subsequent control remains imprecise (Appendix~\ref{app:weigh}).

Event-agnostic prediction need not preserve a cue whose future action depends on observations yet to arrive. It extends the compact carrier's learning horizon on the external Passive T-maze but still fails at length $100$; a $128$-dimensional GRU is more reliable (Appendix~\ref{app:exttmaze}). Compactness serves bounded, auditable memory here, without an established architectural advantage.

The central result is the minimal recurrent behavioral memory of a specified expert. Future observations determine which historical distinctions must persist; the measurement protocol makes this requirement testable; and the learning experiments show both realizations near the boundary and a gap between information sufficiency and its acquisition.
\label{sec:main-end}

\clearpage
\subsection*{Reproducibility statement}
The theoretical quantities of the enumerated benchmark models are certified by refinement or matching bounds, with closed-form validation checks reported in Appendix~\ref{app:verify}; Appendix~\ref{app:proofs} states the assumptions and complete proofs. Appendix~\ref{app:solver} specifies the benchmarks, the binned-observation convention, the leak and body-memory probes, and the acceptance checks for each dataset. Section~\ref{sec:protocol} gives the frozen configuration for each family and the two evaluation protocols. Section~\ref{sec:f4} and Appendix~\ref{app:stats} specify the row set and statistical inference used in the pooled regression, and Appendix~\ref{app:resources} reports compute. Code for the environments, training, evaluation, probes and analysis, per-run result files for the learning comparisons, and the solver as a standalone package are available at \url{https://github.com/XianyaoLi/DIACRITIC}. Given an adapter that resets an environment for each hidden value, supplies an oracle expert and maps raw observations to symbols $\Ob_t$, the solver enumerates the hidden values, builds the induced finite symbolic model, checks (A2), (A4) and transitivity at every step, and returns $H(G_{E,t}\mid\Ob_t)$, $H(\Gam_t\mid\Ob_t)$ and $H(\Gam^s_t\mid\Ob_t)$; when transitivity fails it reports only the bracket $[H(G_{E,t}\mid\Ob_t),\,H(\Gam^s_t\mid\Ob_t)]$ of Theorem~\ref{thm:sandwich}, not a certified minimum. On a CPU, a single command (\texttt{python -m certify --paper}) reproduces the $13$ memory-chain and Passive T-maze certifications of \S\ref{sec:f1} against unmodified MIKASA-Base code (commit \texttt{ac81b6f}), together with the non-transitive instance of Appendix~\ref{app:cover} and the (A4)-failing re-reveal instance of Appendix~\ref{app:verify}. The corridor certificates of Appendix~\ref{app:cover} are reproduced by two separate scripts, and the pooled regression of \S\ref{sec:f4} by one. The solver requires enumerable hidden variables and a deterministic map from hidden values to symbolic observation sequences. Recorded demonstrations, trained models, and hard-code arrays are omitted because of their size; the recording and training scripts regenerate them. Estimator and code-content diagnostics require these artifacts in addition to the bundled ledgers.

\bibliography{refs}
\bibliographystyle{iclr2027_conference}

\clearpage
\appendix
\FloatBarrier
\noindent\textbf{Guide.} Appendix~\ref{app:proofs} proves the theorems and gives the finite-instance certificates, including the matching lower and upper bounds for the non-transitive corridor (\ref{app:cover}). Appendix~\ref{app:design-measure} specifies the benchmarks, the observation convention and the estimator controls; \ref{app:gate} compares the full-trajectory gate with the relaxed one, and \ref{app:audit} reports the injected-leak audit of \S\ref{sec:audit} with both detection rules and their clean-data baseline. Appendix~\ref{app:validation} validates the target on toys (a stochastic expert in \ref{app:stochexpert}), Task~A, the readout tasks and the corridor. Appendix~\ref{app:learning} covers learning: the event-agnostic objective, its held-out pixel evaluation, its predictive surplus and the jitter intervention in \ref{app:generic}, and the codebook sweep of \S\ref{sec:f4} in \ref{app:ksweep}. Appendix~\ref{app:domains} reports additional domains: pixels, the physical weighing task with its hybrid rollout (\ref{app:weigh}), the external Passive T-maze with small-state GRUs (\ref{app:exttmaze}), and the certified community benchmarks of \S\ref{sec:f1} (\ref{app:cert}). Appendix~\ref{app:targets} compares representation targets.

\needspace{8\baselineskip}\section{Proofs and exact certificates}\label{app:proofs}
\paragraph{Conventions.} All statements are made under the expert occupancy at a fixed step $t$ and hold almost surely. For concise notation, we state them for \emph{reachable} histories, namely histories with positive probability under the expert. We therefore assume that the reachable history space is countable at each step, as in the enumerated POMDP used by the solver; in the general case, ``for all reachable histories'' is replaced by ``almost surely.'' Under (A2), $G_{E,t}=\gamma_t(H_t)$ for a measurable $\gamma_t$. By Definition~\ref{def:quotient}, the expert's action distribution is then a function of $(O_t,G_{E,t})$, which we write as $\pi_E(\cdot\mid o,g)$. A \emph{context} is a random variable $C$ generated from $H_t$ by an encoder $p(c\mid h)$, and a decoder is a map $q(\cdot\mid o,c)$ into action distributions. Zero distortion means $\mathbb E[\delta(\pi_E(\cdot\mid S_t),q(\cdot\mid O_t,C))]=0$. By (A3) and $\delta\ge0$, this condition is equivalent to $q(\cdot\mid O_t,C)=\pi_E(\cdot\mid S_t)$ almost surely. Assumption (A1) is used only to ensure that $H(G_{E,t}\mid O_t)<\infty$.

\subsection{Instantaneous minimality}\label{app:instantaneous}
\begin{theorem}[Behavioral sufficiency]\label{thm:sufficiency}
Under (A1)--(A3), zero distortion implies $H(G_{E,t}\mid O_t,C)=0$; under (A2), $G_{E,t}$ is the coarsest sufficient context.
\end{theorem}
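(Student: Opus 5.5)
The plan has two parts: show that zero distortion makes $G_{E,t}$ a function of $(O_t,C)$, and then show that $G_{E,t}$ itself is sufficient and is a function of every sufficient context.

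\textbf{Sufficiency of any zero-distortion context.} By the Conventions, zero distortion together with (A3) and $\delta\ge0$ gives $q(\cdot\mid O_t,C)=\pi_E(\cdot\mid S_t)$ almost surely. Definition~\ref{def:quotient} says the class $g_E(s)$ is determined by the pair $(\mathcal O(s),\pi_E(\cdot\mid s))$. Since there are finitely many classes per observation fiber (A1), the map $(o,\pi_E(\cdot\mid s))\mapsto g_E(s)$ is well defined on each fiber. I would therefore define $\phi(o,c)$ as the unique class $g$ in fiber $o$ with $\pi_E(\cdot\mid o,g)=q(\cdot\mid o,c)$, and let it be arbitrary when no such class exists. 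Then $G_{E,t}=\phi(O_t,C)$ almost surely.

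The step needing care is uniqueness. Two distinct classes in the same fiber must have different action distributions, which is exactly the definition of the quotient. Measurability of $\phi$ is immediate, because each fiber is a finite set. Hence $H(G_{E,t}\mid O_t,C)=0$. Finiteness of $H(G_{E,t}\mid O_t)$ ensures the conditional entropies involved are well defined.

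\textbf{Coarsest sufficient context.} First I would show that $G_{E,t}$ is itself a sufficient context. Under (A2), $G_{E,t}=\gamma_t(H_t)$, so the deterministic encoder $C=\gamma_t(H_t)$ is a valid context. The decoder $q(\cdot\mid o,g)=\pi_E(\cdot\mid o,g)$ equals $\pi_E(\cdot\mid S_t)$ almost surely by Definition~\ref{def:quotient}, so this context has zero distortion.

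Coarsest means that every zero-distortion context refines it. This follows from the first part: $G_{E,t}=\phi(O_t,C)$, so $G_{E,t}$ is a function of $(O_t,C)$ for any sufficient $C$. Consequently $H(C\mid O_t)\ge H(G_{E,t}\mid O_t)$ by the data-processing inequality, which is the minimality used in equation~\eqref{eq:instantaneous}.

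\textbf{Main obstacle.} The only subtle point is the "almost surely" bookkeeping when the decoder output matches no class, or when $C$ is stochastic given $H_t$. I would handle this by working with the joint law of $(S_t,H_t,C)$ and restricting to the full-measure event on which the zero-distortion equality holds.
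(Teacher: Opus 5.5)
Your proposal is correct and follows essentially the paper's argument. The paper shows that any two states co-occurring with the same $(o,c)$ both receive the action law $q(\cdot\mid o,c)$ and hence share a class, which is the same injectivity of $g\mapsto\pi_E(\cdot\mid o,g)$ within a fiber that you use to define $\phi$; the coarsest-context part (admissibility of $C=\gamma_t(H_t)$ under (A2), then applying the first claim) is identical. One small correction: well-definedness and uniqueness of $\phi$ follow from Definition~\ref{def:quotient} alone, not from the finiteness in (A1), which the paper uses only to guarantee $H(G_{E,t}\mid O_t)<\infty$.
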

\begin{theorem}[Rate--distortion reduction]\label{thm:reduction}
Under (A1)(A2), for all $D$, $R_E(D)=R_{G\mid O}(D)$, the conditional rate--distortion function of the source $G_{E,t}$ given $O_t$. In particular $R_E(0)=H(G_{E,t}\mid O_t)$.
\end{theorem}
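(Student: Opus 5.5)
We prove Theorem~\ref{thm:reduction} in two directions, both built on Theorem~\ref{thm:sufficiency}: a lower bound showing that any context can be pushed down to a context on $G_{E,t}$ alone without changing rate or distortion, and an upper bound showing that any code for $G_{E,t}$ given $O_t$ is itself a history code.

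\textbf{Distortion depends on history only through $(O_t,G_{E,t})$.} By Definition~\ref{def:quotient}, $\pi_E(\cdot\mid S_t)=\pi_E(\cdot\mid O_t,G_{E,t})$ almost surely. The encoder $p(c\mid h)$ acts on $H_t$, and $C$ is generated from $H_t$ with $C\perp S_t\mid H_t$. Hence for any decoder $q$ the expected distortion is
\[
\mathbb E\,\delta\bigl(\pi_E(\cdot\mid O_t,G_{E,t}),q(\cdot\mid O_t,C)\bigr),
\]
which is a functional of the joint law of $(O_t,G_{E,t},C)$ only. By (A2), $G_{E,t}=\gamma_t(H_t)$.

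\textbf{Achievability ($R_E(D)\le R_{G\mid O}(D)$).} Take any test channel $p(c\mid g,o)$ and decoder achieving distortion $\le D$ in the conditional problem. Define the history encoder $p(c\mid h)=p(c\mid \gamma_t(h),\mathcal O\text{-component }o_t(h))$; this is legitimate because $O_t$ is part of $H_t$. The joint law of $(O_t,G_{E,t},C)$ is unchanged, so the distortion is unchanged. Since $C$ is a function of $(O_t,G_{E,t})$ plus independent noise and $(O_t,G_{E,t})$ is determined by $H_t$, we get $I(C;H_t\mid O_t)=I(C;G_{E,t}\mid O_t)$.

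\textbf{Converse ($R_E(D)\ge R_{G\mid O}(D)$).} Given any history encoder and decoder, let $p(c\mid g,o)$ be the induced conditional law. It achieves the same distortion by the observation above. Data processing with $G_{E,t}$ a function of $H_t$ gives $I(C;H_t\mid O_t)\ge I(C;G_{E,t}\mid O_t)$, so the rate can only be larger than the conditional rate--distortion rate. Taking infima in both directions yields equality for all $D$.

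\textbf{Zero distortion ($R_E(0)=H(G_{E,t}\mid O_t)$).} Encoding $C=G_{E,t}$ gives rate $H(G_{E,t}\mid O_t)$, which is finite by (A1). For the converse, Theorem~\ref{thm:sufficiency} gives $H(G_{E,t}\mid O_t,C)=0$, so
\[
I(C;G_{E,t}\mid O_t)=H(G_{E,t}\mid O_t).
\]
The main subtlety is this zero-distortion step: (A3) must force the decoder to separate classes. Distinct classes within an observation fiber have distinct $\pi_E$, so a decoder output equal almost surely to the correct $\pi_E$ identifies $g$. This is exactly Theorem~\ref{thm:sufficiency}, which we invoke rather than reprove. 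A minor measure-theoretic point is handled by the countable-history convention: conditioning on $C$ is well defined, and conditional mutual informations are finite because of (A1).
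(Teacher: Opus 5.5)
Your proposal is correct and follows essentially the same route as the paper: for the converse you push a history encoder down to the induced channel $P(C\mid G_{E,t},O_t)$ and bound the rate with the chain rule or data processing; for achievability you lift a conditional code to the history encoder $p'(c\mid\gamma_t(h),o(h))$; and you invoke Theorem~\ref{thm:sufficiency} to get $I(C;G_{E,t}\mid O_t)=H(G_{E,t}\mid O_t)$ at $D=0$. You also rightly note that this zero-distortion step implicitly needs (A3), even though the statement lists only (A1)(A2); the paper's proof relies on (A3) in the same way.
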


\begin{proof}[Proof of Theorem~\ref{thm:sufficiency}]
Fix $(o,c)$ such that $P(O_t=o,C=c)>0$, and let $s,s'$ be states in the fiber $\mathcal S_o$ that each occur with positive probability jointly with $C=c$. Zero distortion implies $\pi_E(\cdot\mid s)=q(\cdot\mid o,c)=\pi_E(\cdot\mid s')$. Hence $s\sim_Es'$ and $g_E(s)=g_E(s')$, so $G_{E,t}$ is almost surely a function of $(O_t,C)$; equivalently, $H(G_{E,t}\mid O_t,C)=0$. This argument does not use (A2). For the second claim, (A2) makes the context $C=G_{E,t}=\gamma_t(H_t)$ admissible, and the decoder $q(\cdot\mid o,g):=\pi_E(\cdot\mid o,g)$ attains zero distortion. Thus, $G_{E,t}$ is sufficient. By the first claim, every other zero-distortion context $C'$ satisfies $H(G_{E,t}\mid O_t,C')=0$. Therefore, $G_{E,t}$ is a function of $(O_t,C')$ for every sufficient context, which establishes that it is the coarsest such context.
\end{proof}

Let $R_{G\mid O}(D)$ be the conditional rate--distortion function \citep{gray1973conditional} of the source $G_{E,t}$ with side information $O_t$ at encoder and decoder, reproduction alphabet the action distributions, and distortion $d\big((o,g),q\big)=\delta\big(\pi_E(\cdot\mid o,g),q\big)$:
\[R_{G\mid O}(D)=\inf\big\{I(C;G_{E,t}\mid O_t):\ p(c\mid g,o),\ q(\cdot\mid o,c),\ \mathbb E\big[d\big((O_t,G_{E,t}),q(\cdot\mid O_t,C)\big)\big]\le D\big\}.\]
Because $\pi_E(\cdot\mid S_t)=\pi_E(\cdot\mid O_t,G_{E,t})$, the distortion of any encoder--decoder pair in either problem depends on the joint law of $(O_t,G_{E,t},C)$ only.
\begin{proof}[Proof of Theorem~\ref{thm:reduction}]
($\ge$) Let $(p(c\mid h),q)$ be feasible for $R_E(D)$. Under (A2), $G_{E,t}=\gamma_t(H_t)$, so the chain rule gives $I(C;H_t\mid O_t)=I(C;G_{E,t},H_t\mid O_t)=I(C;G_{E,t}\mid O_t)+I(C;H_t\mid G_{E,t},O_t)\ge I(C;G_{E,t}\mid O_t)$. Define the induced encoder $p'(c\mid g,o):=P(C=c\mid G_{E,t}=g,O_t=o)$. The pair $(p',q)$ produces the same joint law of $(O_t,G_{E,t},C)$, hence the same distortion, and is feasible for $R_{G\mid O}(D)$ with rate $I(C;G_{E,t}\mid O_t)\le I(C;H_t\mid O_t)$. Taking infima, $R_{G\mid O}(D)\le R_E(D)$.
($\le$) Let $(p'(c\mid g,o),q)$ be feasible for $R_{G\mid O}(D)$ and set $p(c\mid h):=p'\big(c\mid\gamma_t(h),o(h)\big)$. The joint law of $(O_t,G_{E,t},C)$ and the distortion are unchanged, and $C$ depends on $H_t$ only through $(G_{E,t},O_t)$, so $I(C;H_t\mid G_{E,t},O_t)=0$ and $I(C;H_t\mid O_t)=I(C;G_{E,t}\mid O_t)$. Hence $R_E(D)\le R_{G\mid O}(D)$.
At $D=0$, every feasible pair has $H(G_{E,t}\mid O_t,C)=0$ (Theorem~\ref{thm:sufficiency}), so $I(C;G_{E,t}\mid O_t)=H(G_{E,t}\mid O_t)-H(G_{E,t}\mid O_t,C)=H(G_{E,t}\mid O_t)$, attained by $C=G_{E,t}$; thus $R_E(0)=R_{G\mid O}(0)=H(G_{E,t}\mid O_t)$, finite by (A1).
\end{proof}

\begin{corollary}[Harmless ambiguity and selective forgetting]\label{cor:forget}
(a) Let $U$ be a hidden variable such that $G_{E,t}$ is a function of $(O_t,U)$ (e.g.\ the hidden part of the state). Then $H(U\mid O_t)=H(G_{E,t}\mid O_t)+H(U\mid G_{E,t},O_t)$; with $M$ equiprobable hidden modes partitioned into $R$ classes of $M/R$ modes each, $H(U\mid G_{E,t},O_t)=\log_2(M/R)$. (b) If $H(G_{E,t}\mid O_t,C)=0$ and $H(C\mid O_t)=H(G_{E,t}\mid O_t)$, then $H(C\mid G_{E,t},O_t)=0$ and $I(C;V\mid G_{E,t},O_t)=0$ for every random variable $V$, in particular for any nuisance $U_{\rm nui}$. (c) For every $D$ and $\epsilon>0$ there is an encoder with distortion at most $D$ and rate at most $R_E(D)+\epsilon$ of the form $p(c\mid h)=p'(c\mid\gamma_t(h),o(h))$, and for it $I(C;V\mid G_{E,t},O_t)=0$ for every source-side variable $V$ whose joint distribution is fixed before the encoder draws its independent randomness.
\end{corollary}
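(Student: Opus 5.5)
We prove the three parts of Corollary~\ref{cor:forget} separately, each by a short chain-rule argument on top of Theorems~\ref{thm:sufficiency} and~\ref{thm:reduction}.

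\textbf{Part (a).} Since $G_{E,t}$ is a function of $(O_t,U)$, we have $H(G_{E,t}\mid O_t,U)=0$. The chain rule, expanded both ways, gives
\[
H(U,G_{E,t}\mid O_t)=H(U\mid O_t)=H(G_{E,t}\mid O_t)+H(U\mid G_{E,t},O_t).
\]
For the uniform example, fix an observation value and a class. Conditioned on both, $U$ is uniform over the $M/R$ modes in that class, so $H(U\mid G_{E,t}=g,O_t=o)=\log_2(M/R)$ for every cell. Averaging over cells leaves this value unchanged. Here we use that the observation fiber does not reweight modes within a class, which we take as part of the equiprobable hypothesis.

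\textbf{Part (b).} Expand $H(C,G_{E,t}\mid O_t)$ in both orders:
\[
H(C\mid O_t)+H(G_{E,t}\mid O_t,C)=H(G_{E,t}\mid O_t)+H(C\mid G_{E,t},O_t).
\]
The hypotheses set the second term on the left to zero and the two unconditional terms equal, so $H(C\mid G_{E,t},O_t)=0$. Then $I(C;V\mid G_{E,t},O_t)\le H(C\mid G_{E,t},O_t)=0$ for every $V$, and in particular for $U_{\rm nui}$. Finiteness of the entropies, needed to cancel terms, comes from (A1).

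\textbf{Part (c).} This reuses the ($\le$) direction in the proof of Theorem~\ref{thm:reduction}.
\begin{enumerate}
\item By definition of the infimum, choose a pair $(p',q)$ feasible for $R_{G\mid O}(D)$ with rate at most $R_{G\mid O}(D)+\epsilon$.
\item By Theorem~\ref{thm:reduction}, this rate is at most $R_E(D)+\epsilon$.
\item Lift it to the history encoder $p(c\mid h)=p'(c\mid\gamma_t(h),o(h))$. It preserves the joint law of $(O_t,G_{E,t},C)$, hence the distortion, and has $I(C;H_t\mid O_t)=I(C;G_{E,t}\mid O_t)$.
\end{enumerate}
For the conditional independence claim, realize $C=\phi(G_{E,t},O_t,\xi)$ with $\xi$ independent randomness drawn after $V$ and its joint law with $(G_{E,t},O_t)$ are fixed. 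Then $\xi\perp(V,G_{E,t},O_t)$. So, conditional on $(G_{E,t},O_t)=(g,o)$, $C=\phi(g,o,\xi)$ is a function of $\xi$ alone and is independent of $V$, giving $I(C;V\mid G_{E,t},O_t)=0$.

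The main obstacle is this last step. One has to state precisely which variables count as \emph{source-side}, namely that $\xi$ is independent of $V$ jointly with $(G_{E,t},O_t)$. A variable correlated with the encoder noise would otherwise break the claim. The clean route is the functional representation lemma: express any stochastic encoder as a deterministic map applied to auxiliary uniform noise independent of everything fixed beforehand.
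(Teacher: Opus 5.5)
Your proposal is correct and follows the paper's proof essentially step for step. It uses the same chain-rule identities for (a) and (b), and for (c) the same lift of a near-optimal conditional encoder through the ($\le$) direction of Theorem~\ref{thm:reduction}. Your auxiliary-noise realization $C=\phi(G_{E,t},O_t,\xi)$ is an equivalent way of stating the paper's observation that $P(C=c\mid H_t=h,V=v)=p'(c\mid\gamma_t(h),o(h))$. Your caveat in (a), that $O_t$ must not reweight modes within a class, makes explicit an assumption the paper leaves implicit.
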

\begin{proof}
(a) $H(U\mid O_t)=H(U,G_{E,t}\mid O_t)=H(G_{E,t}\mid O_t)+H(U\mid G_{E,t},O_t)$ because $H(G_{E,t}\mid U,O_t)=0$; the second statement is the entropy of a uniform variable on $M/R$ values. (b) $H(C\mid G_{E,t},O_t)=H(C\mid O_t)-I(C;G_{E,t}\mid O_t)=H(C\mid O_t)-\big(H(G_{E,t}\mid O_t)-H(G_{E,t}\mid O_t,C)\big)=0$, and $0\le I(C;V\mid G_{E,t},O_t)\le H(C\mid G_{E,t},O_t)=0$. (c) The ($\le$) direction of the proof of Theorem~\ref{thm:reduction} turns any encoder for $R_{G\mid O}(D)$ within $\epsilon$ of the infimum into an encoder of the stated form with the same rate and distortion; for it $P(C=c\mid H_t=h,V=v)=p'(c\mid\gamma_t(h),o(h))$ for every such source-side $V$, so $C$ is conditionally independent of $V$ given $(G_{E,t},O_t)$.
\end{proof}
Part (c) is an existence statement: nuisance information is never \emph{required} to reach the frontier, but not every point on the frontier must be nuisance-free.

\begin{corollary}[Closed form]\label{cor:closed}
Suppose $G_{E,t}$ is uniform on $R\ge2$ classes and independent of $O_t$, the expert is deterministic with distinct actions across classes, and $\delta$ is total variation. Then $R_E(D)=\log_2R-h_2(D)-D\log_2(R-1)$ for $0\le D\le1-1/R$ and $R_E(D)=0$ for $D\ge1-1/R$, where $h_2$ is the binary entropy.
\end{corollary}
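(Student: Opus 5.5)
The plan is to reduce the closed form to the classical rate--distortion function of a uniform source with Hamming distortion. By Theorem~\ref{thm:reduction}, $R_E(D)=R_{G\mid O}(D)$. Because $G_{E,t}$ is independent of $O_t$ and uniform on $R$ classes, the conditional problem is governed by the law of $G_{E,t}$ alone. Any encoder $p(c\mid g,o)$ that uses $o$ can be replaced, for each fixed $o$, by its own slice. Conditional rate--distortion with side information at both ends equals $\mathbb E_o R_{G\mid O=o}(D_o)$ minimized over allocations with $\mathbb E D_o\le D$. Every slice is the same function $R_U(\cdot)$ of a uniform $R$-ary source, and that function is convex, so Jensen gives $R_{G\mid O}(D)=R_U(D)$.

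Next, I identify the distortion. The expert is deterministic with distinct actions, so $\pi_E(\cdot\mid o,g)=\delta_{a(g)}$ with $a$ injective. For a reproduction distribution $q$, total variation gives $\delta(\delta_{a(g)},q)=1-q(a(g))$. A randomized reproduction $q$ is therefore equivalent in expected distortion to drawing a reproduction symbol $\hat g$ with probability $q(a(\hat g))$ (mass off $\{a(g)\}$ only hurts), giving $\mathbb E\,\mathbf 1[\hat G\ne G]$. The rate can only drop, since $I(\hat G;G)\le I(C;G)$ by data processing through the Markov chain $G\to C\to\hat G$. Conversely, any test channel $p(\hat g\mid g)$ is realized by $C=\hat G$ with $q=\delta_{a(\hat g)}$. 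So $R_U$ is exactly the Hamming rate--distortion function of a uniform $R$-ary source.

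For the converse bound, Fano gives $H(G\mid\hat G)\le h_2(D)+D\log_2(R-1)$ whenever $P(\hat G\ne G)\le D$. Hence $I\ge\log_2R-h_2(D)-D\log_2(R-1)$ for $D\le1-1/R$, where the right side is decreasing. For achievability, take the symmetric test channel $P(\hat g=g)=1-D$ and $P(\hat g=g')=D/(R-1)$ for $g'\ne g$, with $\hat G$ uniform. This makes Fano tight and gives equality. For $D\ge1-1/R$, a constant $\hat G$ achieves distortion $1-1/R$ at rate $0$.

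The main obstacle is the step that moves from distributions to symbols. I need to verify that randomizing over mixed action distributions, and placing mass outside the class actions, gives no advantage. The linearity of $1-q(a(g))$ in $q$ handles this.
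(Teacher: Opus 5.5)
Your proposal is correct and follows the paper's proof essentially step for step: the reduction to the unconditional uniform source via Theorem~\ref{thm:reduction} and convexity, linearity of $1-q(a(g))$ in $q$ to pass from action distributions to class estimates, Fano for the converse, the symmetric channel for achievability, and a constant estimate for $D\ge1-1/R$. The only difference is cosmetic. The paper passes to symbols by replacing the decoder, for a fixed encoder, with a deterministic point mass on the most probable class. You instead sample $\hat G$ from $q$, reassigning any off-class mass to a class symbol rather than a dummy one so that Fano's $\log_2(R-1)$ form still applies, and then use data processing.
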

\begin{proof}
By Theorem~\ref{thm:reduction} it suffices to compute $R_{G\mid O}(D)$. Since $G_{E,t}\perp O_t$, a scheme conditioned on $O_t$ is a family of unconditional schemes indexed by $o$, with rate $\mathbb E_o[I(C;G_{E,t}\mid O_t=o)]\ge\mathbb E_o[R_G(D_o)]\ge R_G(\mathbb E_oD_o)\ge R_G(D)$ by convexity of the unconditional rate--distortion function $R_G$, and a scheme that ignores $o$ attains $R_G(D)$; so $R_{G\mid O}=R_G$. With a deterministic expert taking action $a_g$ in class $g$, $\delta\big(\pi_E(\cdot\mid g),q\big)=1-q(a_g)$, which is linear in $q$; for a fixed encoder the decoder minimizing $\mathbb E[1-q(a_G)\mid c]$ is a point mass on the most probable class, so restricting reproductions to class estimates $\hat G$ with Hamming distortion $\mathbf 1[\hat G\ne G_{E,t}]$ loses nothing. Converse: if $P(\hat G\ne G_{E,t})=P_e\le D\le1-1/R$, Fano's inequality gives $H(G_{E,t}\mid\hat G)\le h_2(P_e)+P_e\log_2(R-1)\le h_2(D)+D\log_2(R-1)$, the right-hand side being non-decreasing on $[0,1-1/R]$, so $I(G_{E,t};\hat G)\ge\log_2R-h_2(D)-D\log_2(R-1)$. Achievability: take $\hat G$ uniform and $G_{E,t}=\hat G$ with probability $1-D$, otherwise uniform on the other $R-1$ classes; the marginal of $G_{E,t}$ is uniform, the distortion is $D$, and $I(G_{E,t};\hat G)=\log_2R-h_2(D)-D\log_2(R-1)$. For $D\ge1-1/R$ a constant $\hat G$ has rate $0$ and distortion $1-1/R$.
\end{proof}

\begin{proposition}[Deterministic frontier]\label{prop:det}
Let $R^{\det}_E(D)=\inf H(C\mid O_t)$ over deterministic encoders $C=f(H_t)$ and decoders with distortion at most $D$, and let $R^{\det}_{G\mid O}(D)$ be the same infimum over $C=f(G_{E,t},O_t)$. Under (A1)(A2), $R_E(D)\le R^{\det}_E(D)\le R^{\det}_{G\mid O}(D)$ for all $D$, with equality throughout at $D=0$.
\end{proposition}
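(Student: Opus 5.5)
The plan is to prove the three claims in turn: the two inequalities, each for all $D$, and then equality at $D=0$.

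\textbf{First inequality.} To show $R_E(D)\le R^{\det}_E(D)$, take any deterministic encoder $C=f(H_t)$ with a decoder $q$ of distortion at most $D$. This pair is feasible for $R_E(D)$, since a deterministic map is a special case of $p(c\mid h)$. Because $C$ is a function of $H_t$, we have $H(C\mid O_t,H_t)=0$, and hence
\[
I(C;H_t\mid O_t)=H(C\mid O_t).
\]
Taking infima gives the bound. One caveat: if $H(C\mid O_t)=\infty$ the inequality is trivial, and otherwise all quantities are finite.

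\textbf{Second inequality.} For $R^{\det}_E(D)\le R^{\det}_{G\mid O}(D)$, take any $C=f(G_{E,t},O_t)$ with a decoder of distortion at most $D$. Under (A2), $G_{E,t}=\gamma_t(H_t)$, and $O_t$ is a coordinate of $H_t$. So $C=f(\gamma_t(H_t),o(H_t))$ is a deterministic function of history. It has the same joint law of $(O_t,G_{E,t},C)$, hence the same distortion and the same $H(C\mid O_t)$. It is therefore feasible for $R^{\det}_E(D)$, and taking infima gives the inequality.

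\textbf{Equality at $D=0$.} By Theorem~\ref{thm:reduction}, $R_E(0)=H(G_{E,t}\mid O_t)$. The choice $C=G_{E,t}=f(G_{E,t},O_t)$, with decoder $q(\cdot\mid o,g)=\pi_E(\cdot\mid o,g)$, is deterministic and has zero distortion, as in the proof of Theorem~\ref{thm:sufficiency}. This gives
\[
R^{\det}_{G\mid O}(0)\le H(G_{E,t}\mid O_t)=R_E(0).
\]
Combined with the chain $R_E(0)\le R^{\det}_E(0)\le R^{\det}_{G\mid O}(0)$, all three quantities coincide and are finite by (A1).

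\textbf{Expected obstacle.} Little remains once the reductions are in place. The delicate point is only to confirm that the distortion of each scheme depends solely on the joint law of $(O_t,G_{E,t},C)$, which was noted before the proof of Theorem~\ref{thm:reduction}. An alternative for the $D=0$ lower bound is direct: Theorem~\ref{thm:sufficiency} gives $H(C\mid O_t)\ge H(G_{E,t}\mid O_t)$ for every zero-distortion deterministic $C$. The sandwich argument already covers this, so it is not needed.
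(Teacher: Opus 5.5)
Your proposal is correct and follows the paper's proof essentially step for step. Deterministic encoders are feasible for $R_E(D)$ with $I(C;H_t\mid O_t)=H(C\mid O_t)$, (A2) turns $f(G_{E,t},O_t)$ into a deterministic function of $H_t$ with the same joint law of $(O_t,G_{E,t},C)$, and $C=G_{E,t}$ closes the chain at $D=0$ via Theorem~\ref{thm:reduction}; your remark on the infinite-rate case is a harmless detail the paper leaves implicit.
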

\begin{proof}
For a deterministic encoder $I(C;H_t\mid O_t)=H(C\mid O_t)-H(C\mid H_t,O_t)=H(C\mid O_t)$, so every deterministic feasible pair is feasible for $R_E(D)$ with the same rate: $R_E\le R^{\det}_E$. Under (A2) every $f(G_{E,t},O_t)$ equals the deterministic function $f(\gamma_t(H_t),o(H_t))$ of $H_t$, with the same joint law of $(O_t,G_{E,t},C)$, hence the same distortion and the same $H(C\mid O_t)$: $R^{\det}_E\le R^{\det}_{G\mid O}$. At $D=0$, $C=G_{E,t}$ is feasible for $R^{\det}_{G\mid O}$ with rate $H(G_{E,t}\mid O_t)=R_E(0)$ (Theorem~\ref{thm:reduction}), which closes the chain.
\end{proof}

\subsection{Recurrent minimality and support conditions}\label{app:support}
\begin{assumption}[Continuation-support homogeneity, A4]
For all $t$, equality of $(O_t,G_{E,t})$ for two histories implies equality of their continuation supports $U_t(h)$.
\end{assumption}
\begin{lemma}\label{lem:a4}
Under (A4), $\sim_t$ is an equivalence for every $t$ and $\sim_t=\sim^s_t$; $\Gam_t:=[H_t]_{\sim_t}$ is a right congruence, $\Gam_{t+1}=\Phi_t(\Gam_t,O_{t+1},A_t)$, and $G_{E,t}$ is a function of $(O_t,\Gam_t)$.
\end{lemma}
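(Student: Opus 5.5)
The plan is to prove all four claims together by backward induction on $t$, from $T$ down to $1$. The induction hypothesis at step $t+1$ is that $\sim_{t+1}$ is an equivalence relation and coincides with $\sim^s_{t+1}$. Here $\sim^s$ is the strong relation that also requires equal continuation supports at every level. The base case $t=T$ is immediate. Both $\sim_T$ and $\sim^s_T$ are defined as equality of $(O_T,G_{E,T})$, which is the kernel of a function and therefore an equivalence.

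For the inductive step, take $h\sim_t h'$. Then $(O_t,G_{E,t})(h)=(O_t,G_{E,t})(h')$, and (A4) gives $U_t(h)=U_t(h')$. So the common continuation set $U_t(h)\cap U_t(h')$ is the whole support $U_t(h)$, and the recursive clause compares $hu$ with $h'u$ for every $u$ in that support. Together with the hypothesis $\sim_{t+1}=\sim^s_{t+1}$, this shows $\sim_t$ coincides with $\sim^s_t$.

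For transitivity, suppose $h\sim_t h'$ and $h'\sim_t h''$. The pairs agree on $(O_t,G_{E,t})$, so by (A4) all three histories have the same support $U$. For each $u\in U$ we have $hu\sim_{t+1}h'u$ and $h'u\sim_{t+1}h''u$. Transitivity at $t+1$ then gives $hu\sim_{t+1}h''u$, and the intersection $U_t(h)\cap U_t(h'')$ is exactly $U$, so $h\sim_t h''$. Reflexivity and symmetry follow directly from the definition. The delicate point, and the one I expect to need care, is that the induction must carry transitivity at $t+1$ along with the equality of supports. Without (A4), intersecting supports is exactly what breaks transitivity, as the paper notes.

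Right congruence and the update map come next. Define $\Phi_t([h]_{\sim_t},o_{t+1},a_t)=[hu]_{\sim_{t+1}}$ for $u=(a_t,o_{t+1})\in U_t(h)$. This is well defined: if $h\sim_t h'$, then $u\in U_t(h')$ because the supports are equal, and $hu\sim_{t+1}h'u$ by the definition of $\sim_t$. For $u$ outside the common support, $\Phi_t$ is irrelevant almost surely and can be set arbitrarily. Since every reachable $H_{t+1}$ has the form $H_tu$ with $u\in U_t(H_t)$, we obtain $\Gam_{t+1}=\Phi_t(\Gam_t,O_{t+1},A_t)$ almost surely.

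Finally, $\sim_t$-equivalent histories share $(O_t,G_{E,t})$ by definition, so $(O_t,G_{E,t})$ is constant on each $\Gam_t$ class. Hence $G_{E,t}$ is a function of $\Gam_t$ alone, and in particular of $(O_t,\Gam_t)$.
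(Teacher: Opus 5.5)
Your proposal is correct and follows essentially the paper's argument: a backward induction in which (A4) equalizes the continuation supports, so that $\sim_t$ and $\sim^s_t$ coincide, and transitivity at step $t+1$ transfers to step $t$, exactly as in the paper's remark after its proof. The paper instead gets the equivalence property, the right congruence and $\Phi_t=\Phi^s_t$ by invoking Lemma~\ref{lem:strong}; you verify transitivity and the well-definedness of $\Phi_t$ directly and leave the easy inclusion $\sim^s_t\subseteq\sim_t$ implicit, neither of which changes anything of substance.
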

\begin{theorem}[Exact regime]\label{thm:exact}
Under (A1)--(A4), $H(C_t\mid O_t)\ge H(\Gam_t\mid O_t)\ge H(G_{E,t}\mid O_t)$ for every zero-distortion realization, and $C_t=\Gam_t$, $F=\Phi$ attains it: $R^{\rm mem}_t(0)=H(\Gam_t\mid O_t)$.
\end{theorem}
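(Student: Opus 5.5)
The plan is to prove the two inequalities and the attainment claim separately, using Lemma~\ref{lem:a4} for the structural facts about $\Gam_t$ and Theorem~\ref{thm:sufficiency} for the instantaneous bound. The right inequality $H(\Gam_t\mid O_t)\ge H(G_{E,t}\mid O_t)$ is immediate. By Lemma~\ref{lem:a4}, $G_{E,t}$ is a function of $(O_t,\Gam_t)$, so $H(G_{E,t}\mid O_t)\le H(G_{E,t},\Gam_t\mid O_t)=H(\Gam_t\mid O_t)$.

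\textbf{Lower bound $H(C_t\mid O_t)\ge H(\Gam_t\mid O_t)$.} Fix a zero-distortion realization $(F,\hat\pi)$. Since updates are deterministic from a fixed $C_0$, $C_t=c_t(H_t)$ is a function of the history. I will show by backward induction on $t$ that any two reachable histories $h,h'$ with $(O_t,C_t)(h)=(O_t,C_t)(h')$ satisfy $h\sim_t h'$.
\begin{itemize}
\item At $t=T$, zero distortion gives $P_E(\cdot\mid h)=\hat\pi(\cdot\mid o,c)=P_E(\cdot\mid h')$. Under (A2) this forces equal $G_{E,T}$, as in Theorem~\ref{thm:sufficiency}.
\item For $t<T$, the same argument gives equal $(O_t,G_{E,t})$. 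Take any $u=(a,o')\in U_t(h)\cap U_t(h')$. Then $hu$ and $h'u$ are reachable and have equal codes $F_{t+1}(c,o',a)$ and equal observation $o'$. The induction hypothesis gives $hu\sim_{t+1}h'u$, hence $h\sim_t h'$.
\end{itemize}
It follows that $\Gam_t$ is a function of $(O_t,C_t)$, so
\[
H(C_t\mid O_t)=H(C_t,\Gam_t\mid O_t)\ge H(\Gam_t\mid O_t).
\]
Conditioning on $O_t$ is harmless because $O_t$ is a component of the history and is constant within a class. The one point to verify is that the recursion in Definition~\ref{def:gamma} only compares continuations reachable from both histories. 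Those are exactly the continuations along which the realization is constrained, so the induction only ever uses reachable extensions, where zero distortion holds almost surely.

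\textbf{Attainment.} Take $C_t=\Gam_t$ with update $\Phi_t$ from Lemma~\ref{lem:a4}, and decoder $\hat\pi(\cdot\mid o,\gamma)=\pi_E(\cdot\mid o,g(o,\gamma))$, where $g$ is the map from $(O_t,\Gam_t)$ to $G_{E,t}$ given by the lemma. The main obstacle is well-definedness of $\Phi_t$ on reachable extensions: if $h\sim_t h'$ and $u$ is reachable from $h$ but not from $h'$, the successor class of $h'u$ is undefined. Here (A4) is used. Equal $(O_t,G_{E,t})$ gives $U_t(h)=U_t(h')$, so every continuation is common and compatibility yields $hu\sim_{t+1}h'u$. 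For pairs $(\gamma,u)$ that are never reachable, $\Phi_t$ is defined arbitrarily, which does not affect almost-sure statements.

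With $\Phi_t$ well defined, induction from $C_0=[\text{empty history}]$ shows that $C_t=\Gam_t(H_t)$ along reachable histories. The decoder reproduces $P_E(A_t\mid H_t)=\pi_E(\cdot\mid O_t,G_{E,t})$ by (A2) and Definition~\ref{def:quotient}, so the realization has zero distortion with rate $H(\Gam_t\mid O_t)$ at every $t$ simultaneously. Combining this with the lower bound gives $R^{\rm mem}_t(0)=H(\Gam_t\mid O_t)$, and this is finite because each fiber contains finitely many classes along the recursion under (A1), or at least at most countably many, which the infimum permits.
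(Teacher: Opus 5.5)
Your proof is correct and is essentially the paper's: backward-induction compatibility of the $(O_t,C_t)$-cells (the compatibility paragraph in the proof of Theorem~\ref{thm:sandwich}), transitivity from Lemma~\ref{lem:a4} to make $\Gam_t$ a function of $(O_t,C_t)$, and attainment by $C_t=\Gam_t=\Gam^s_t$ with update $\Phi_t=\Phi^s_t$. Two minor points. First, well-definedness of $\Phi_t$ needs only transitivity, not (A4), because for $u\notin U_t(h')$ the history $h'u$ is unreachable and nothing has to be defined there. Second, your closing finiteness claim does not follow from (A1), which bounds only the number of $G_{E,t}$ classes per fiber; a single fiber can contain countably many $\Gam_t$ classes with infinite entropy. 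That claim is not needed, since the equality $R^{\rm mem}_t(0)=H(\Gam_t\mid O_t)$ holds in the extended reals.
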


Recall from \S\ref{sec:theory} the continuation support $U_t(h)=\operatorname{supp}P_E(A_t,O_{t+1}\mid H_t=h)$, the extended history $hu=(h,a_t,o_{t+1})$ for $u=(a_t,o_{t+1})$, the relation $\sim_t$ of Definition~\ref{def:gamma}, and the strong relation $\sim^s_t$: $h\sim^s_Th'$ iff $h\sim_Th'$, and for $t<T$, $h\sim^s_th'$ iff $O_t(h)=O_t(h')$, $G_{E,t}(h)=G_{E,t}(h')$, $U_t(h)=U_t(h')$ and $hu\sim^s_{t+1}h'u$ for every $u\in U_t(h)$. A recurrent realization has $C_t=F_t(C_{t-1},O_t,A_{t-1})$ with a fixed $C_0$ and deterministic $F_t$, so $C_t=c_t(H_t)$ is a deterministic function of the history and $H(C_t\mid O_t)=I(C_t;H_t\mid O_t)$; zero distortion means $\hat\pi(\cdot\mid O_t,C_t)=\pi_E(\cdot\mid S_t)$ almost surely for every $t$ under the expert occupancy. A set of histories is \emph{$\sim_t$-compatible} if its members are pairwise $\sim_t$-related. Two facts are used repeatedly: (F1) if $h\sim_th'$ or $h\sim^s_th'$ then $O_t(h)=O_t(h')$ and $G_{E,t}(h)=G_{E,t}(h')$; (F2) $u\in U_t(h)$ iff $hu$ is a reachable history at $t+1$, and then $O_{t+1}(hu)=o_{t+1}$.

\begin{lemma}[Strong congruence]\label{lem:strong}
For every $t$, $\sim^s_t$ is an equivalence relation and a right congruence: if $h\sim^s_th'$ and $u\in U_t(h)=U_t(h')$ then $hu\sim^s_{t+1}h'u$. Consequently $\Gam^s_t:=[H_t]_{\sim^s_t}$ satisfies $\Gam^s_{t+1}=\Phi^s_t(\Gam^s_t,O_{t+1},A_t)$ for a deterministic $\Phi^s_t$, and $(O_t,G_{E,t})$ is a function of $\Gam^s_t$.
\end{lemma}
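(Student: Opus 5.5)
The plan is a backward induction on $t$ from $T$ down to $1$, proving that $\sim^s_t$ is an equivalence relation and a right congruence at the same time. At $t=T$ the relation $\sim^s_T$ coincides with $\sim_T$, which is equality of the pair $(O_T,G_{E,T})(h)$. Equality of a function value is reflexive, symmetric and transitive, so $\sim^s_T$ is an equivalence. There is no successor step, so the congruence claim is vacuous at $T$.

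For the inductive step I will assume $\sim^s_{t+1}$ is an equivalence on reachable histories at $t+1$.
\begin{itemize}
\item \emph{Reflexivity.} For every $u\in U_t(h)$, the extension $hu$ is reachable by (F2), and $hu\sim^s_{t+1}hu$ holds by the inductive hypothesis. The other three clauses (equal $O_t$, equal $G_{E,t}$, equal supports) are trivial.
\item \emph{Symmetry.} This is immediate because every clause of the definition is symmetric. The support clause $U_t(h)=U_t(h')$ lets us quantify over $u\in U_t(h')$ equally well.
\item \emph{Transitivity.} This is the point where the strong relation differs from $\sim_t$. Suppose $h\sim^s_t h'$ and $h'\sim^s_t h''$. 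Then the observations and behavioral classes agree along the chain, and $U_t(h)=U_t(h')=U_t(h'')$. For any $u$ in this common set, the three extensions $hu,h'u,h''u$ are all reachable. The hypotheses give $hu\sim^s_{t+1}h'u$ and $h'u\sim^s_{t+1}h''u$, so transitivity at $t+1$ yields $hu\sim^s_{t+1}h''u$.
\end{itemize}
The step needing care is exactly the one where $\sim_t$ fails. Because supports are equal, every continuation of $h$ or $h''$ is also a continuation of the middle history $h'$. The middle element can therefore always relay the relation, and no continuation escapes it. I will state this explicitly, since it is the reason the strong relation is transitive while $\sim_t$ need not be.

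Right congruence follows directly from the definition. If $h\sim^s_t h'$ and $u\in U_t(h)=U_t(h')$, the recursive clause states $hu\sim^s_{t+1}h'u$, and the reachability of both extensions is guaranteed by (F2).

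To build the update map, I define $\Phi^s_t(\gamma,o,a):=[hu]_{\sim^s_{t+1}}$ with $u=(a,o)$, for any representative $h\in\gamma$ such that $u\in U_t(h)$. The set $U_t(h)$ is constant on the class, so the domain of definition does not depend on the representative. Congruence shows the value is independent of the representative. Outside reachable triples, $\Phi^s_t$ can be defined arbitrarily, since those triples have probability zero. Then $\Gam^s_{t+1}=\Phi^s_t(\Gam^s_t,O_{t+1},A_t)$ holds almost surely, because $H_{t+1}=H_t\,(A_t,O_{t+1})$ and $O_{t+1}$ is read from $u$ by (F2).

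Finally, $(O_t,G_{E,t})$ is constant on each $\sim^s_t$ class by (F1). It is therefore a function of $\Gam^s_t$, which completes the proof.
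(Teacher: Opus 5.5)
Your proposal is correct and follows essentially the same route as the paper. Both use backward induction in which equal continuation supports let the middle history relay every continuation (giving transitivity), read right congruence off the last conjunct of the definition, define $\Phi^s_t$ on reachable pairs through representatives, and use (F1) for the final claim; your inductive treatment of reflexivity (which needs $hu\sim^s_{t+1}hu$) is only a slightly more explicit version of what the paper calls immediate.
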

\begin{proof}
Reflexivity and symmetry are immediate. Transitivity by backward induction on $t$: $\sim^s_T$ is equality of the function $h\mapsto(O_T(h),G_{E,T}(h))$. For $t<T$, let $h_1\sim^s_th_2$ and $h_2\sim^s_th_3$; then $U_t(h_1)=U_t(h_2)=U_t(h_3)=:U$ and for every $u\in U$, $h_1u\sim^s_{t+1}h_2u$ and $h_2u\sim^s_{t+1}h_3u$, so $h_1u\sim^s_{t+1}h_3u$ by the induction hypothesis; with (F1) this gives $h_1\sim^s_th_3$. The right-congruence property is the last conjunct of the definition; it says that the class of $hu$ is determined by the class of $h$ and by $u$, which defines $\Phi^s_t$ on reachable pairs. (F1) gives the last claim.
\end{proof}

\begin{proof}[Proof of Lemma~\ref{lem:a4}]
We show by backward induction on $t$ that under (A4) $\sim_t=\sim^s_t$; the remaining claims then follow from Lemma~\ref{lem:strong}. At $t=T$ the relations coincide by definition. For $t<T$, $h\sim^s_th'$ implies $h\sim_th'$, because the strong relation requires agreement on every continuation of the common support and $\sim_{t+1}=\sim^s_{t+1}$ by induction. Conversely let $h\sim_th'$. By (F1) the two histories share $(O_t,G_{E,t})$, so (A4) gives $U_t(h)=U_t(h')=:U$ and $U_t(h)\cap U_t(h')=U$; the last conjunct of Definition~\ref{def:gamma} then says $hu\sim_{t+1}h'u$, i.e.\ $hu\sim^s_{t+1}h'u$ by induction, for every $u\in U$, which is exactly $h\sim^s_th'$. Hence $\sim_t$ is an equivalence, $\Gam_t=\Gam^s_t$ is well defined, $\Gam_{t+1}=\Phi_t(\Gam_t,O_{t+1},A_t)$ with $\Phi_t=\Phi^s_t$, and $G_{E,t}$ is a function of $\Gam_t$, a fortiori of $(O_t,\Gam_t)$.
\end{proof}
The same induction directly establishes transitivity of $\sim_t$ under (A4). If $h_1\sim_th_2\sim_th_3$, all three histories share $(O_t,G_{E,t})$; (A4) makes their supports equal, and transitivity at $t+1$ then transfers to $t$. Assumption (A4) is sufficient but not necessary: the re-reveal example in the remark below violates (A4) at one step, although $\sim_t$ is transitive at every step.

\begin{proof}[Proof of Theorem~\ref{thm:sandwich}]
\emph{Compatibility of cells.} Fix a zero-distortion realization and call $\mathcal H_t(o,c)=\{h\ \text{reachable}:O_t(h)=o,\ c_t(h)=c\}$ its cells. We show by backward induction on $t$ that every cell is $\sim_t$-compatible. Let $h,h'\in\mathcal H_t(o,c)$. Zero distortion gives $\pi_E(\cdot\mid s)=\hat\pi(\cdot\mid o,c)=\pi_E(\cdot\mid s')$ for (almost) all states $s$ consistent with $h$ and $s'$ consistent with $h'$, so $G_{E,t}(h)=G_{E,t}(h')$ by Definition~\ref{def:quotient} and (A2). If $t=T$ this is $h\sim_Th'$. If $t<T$, take $u=(a,o')\in U_t(h)\cap U_t(h')$. By (F2) both $hu$ and $h'u$ are reachable and share $O_{t+1}=o'$; $F_{t+1}$ being deterministic, they also share $C_{t+1}=F_{t+1}(c,o',a)$, so they lie in one cell at $t+1$ and are $\sim_{t+1}$-related by the induction hypothesis. Hence $h\sim_th'$. Transitivity of $\sim_t$ is never used.
\emph{Lower bound.} By the previous paragraph $G_{E,t}$ is a function of $(O_t,C_t)$ (equivalently, Theorem~\ref{thm:sufficiency} applied to the context $C_t$), so $H(C_t\mid O_t)\ge I(C_t;G_{E,t}\mid O_t)=H(G_{E,t}\mid O_t)$ for every zero-distortion realization, and $R^{\rm mem}_t(0)\ge H(G_{E,t}\mid O_t)$.
\emph{Upper bound.} $C_t:=\Gam^s_t$ with $F_t:=\Phi^s_t$ is a recurrent realization (Lemma~\ref{lem:strong}; $C_0$ is the class of the empty history). The policy $\hat\pi(\cdot\mid o,\gamma):=\pi_E(\cdot\mid o,g)$, with $g$ the common value of $G_{E,t}$ on the class $\gamma$, is well defined by (F1) and has zero distortion. Hence $R^{\rm mem}_t(0)\le H(\Gam^s_t\mid O_t)$. (This realization has as many states as $\sim^s_t$ has classes; in the countable setting adopted here the infimum defining $R^{\rm mem}_t(0)$ ranges over countable state spaces, and on the enumerated POMDPs of the experiments all class counts are finite.) Strictness of both sides is shown in the Remark below.
\end{proof}
\begin{proof}[Proof of Theorem~\ref{thm:exact}]
Under (A4), $\sim_t$ is an equivalence (Lemma~\ref{lem:a4}), so a $\sim_t$-compatible set is contained in a single class: $\Gam_t$ is a function of $(O_t,C_t)$ for every zero-distortion realization, and $H(C_t\mid O_t)\ge I(C_t;\Gam_t\mid O_t)=H(\Gam_t\mid O_t)\ge H(G_{E,t}\mid O_t)$, the last step because $G_{E,t}$ is a function of $\Gam_t$. The realization $C_t=\Gam_t=\Gam^s_t$, $F=\Phi$ of the previous paragraph attains $H(\Gam_t\mid O_t)$, so $R^{\rm mem}_t(0)=H(\Gam_t\mid O_t)$.
\end{proof}

\paragraph{Remark (both bounds can be strict).} (i) Let three equiprobable reachable histories $h_1,h_2,h_3$ at step $t=T-1$ share $(O_t,G_{E,t})$, with $U_t(h_1)=\{u,v_1\}$, $U_t(h_3)=\{u,v_3\}$, $U_t(h_2)=\{w\}$ for distinct $u,v_1,v_3,w$, and let the expert require different actions after $h_1u$ and $h_3u$ (distinct classes at $T$). Then $h_1\sim_th_2$ and $h_2\sim_th_3$ hold vacuously while $h_1\not\sim_th_3$: $\sim_t$ is not transitive and (A4) fails. Here $H(G_{E,t}\mid O_t)=0$; the three supports are distinct, so $\Gam^s_t$ has three classes and $H(\Gam^s_t\mid O_t)=\log_23$; a zero-distortion realization may merge $h_2$ with either neighbour but never $h_1$ with $h_3$, so $R^{\rm mem}_t(0)=h_2(1/3)$, strictly inside $[0,\log_23]$. (ii) If a behavioral class was revealed before $t$, is absent from $O_t$, is re-revealed at $t+1$, and is first used after $t+1$, then histories that differ only in that class have disjoint continuation supports at $t$ (their next observations differ), are vacuously $\sim_t$-related and are merged by $\Gam_t$, whereas $\Gam^s_t$ separates them because their supports differ: $\Gam^s$ charges a bit that the future will re-provide. In this instance $\sim_t$ is transitive at every step although (A4) fails at $t$. Without transitivity, the zero-distortion memories are exactly the closed compatible state assignments of Appendix~\ref{app:cover}, whose minimum entropy we compute by dynamic programming.

\begin{corollary}[Codebook]\label{cor:codebook}Under (A4), any sufficient realization has $K\ge\max_{t,o}|\Gam_t|_o\ge R$; the left side can exceed $R$ (A$'$: the joint class $(\beta_1,\beta_2)$ has four values while $R_1=R_2=2$ during gap$_1$).\end{corollary}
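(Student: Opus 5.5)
The plan is to obtain both inequalities from surjections between finite sets of labels inside a fixed observation fiber. I read $|\Gam_t|_o$ as the number of $\Gam_t$ classes containing a reachable history with $O_t=o$, and $R$ as the corresponding count $|G_{E,t}|_o$ of behavioral classes in that fiber (the $R$ of the closed form and of the readout tasks). Here a \emph{sufficient} realization means a zero-distortion recurrent realization in the sense of Definition~\ref{def:realization} whose code takes at most $K$ values.

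\textbf{Step 1 ($K\ge|\Gam_t|_o$).} Fix such a realization and a pair $(t,o)$. Under (A4), Theorem~\ref{thm:exact} shows (via the cell-compatibility part of the proof of Theorem~\ref{thm:sandwich} together with Lemma~\ref{lem:a4}) that every cell $\mathcal H_t(o,c)$ is $\sim_t$-compatible. It therefore lies in a single $\Gam_t$ class, so there is a map $c\mapsto\Gam_t$ on code values realized with $O_t=o$. Every $\Gam_t$ class in the fiber contains a reachable history, and that history has some code value, so this map is surjective. Hence the number of code values used at $(t,o)$ is at least $|\Gam_t|_o$. Since at most $K$ values are available, taking the maximum over $(t,o)$ gives the first inequality.

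\textbf{Step 2 ($|\Gam_t|_o\ge R$).} By Lemma~\ref{lem:a4}, $G_{E,t}$ is a function of $(O_t,\Gam_t)$. Restricting to the fiber $O_t=o$ gives a surjection from $\Gam_t$ classes onto the behavioral classes realized there, so $|\Gam_t|_o\ge|G_{E,t}|_o=R$. For steps at which $R$ varies with $t$, this holds for each $t$, and in particular for the maximizing one.

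\textbf{Step 3 (strictness in A$'$).} This is the only step with real content. Take a step $t$ in gap$_1$ and its common observation $o$. Consider two histories that differ in $(\beta_1,\beta_2)$ and otherwise agree. I will exhibit a continuation reachable from both, namely the expert's common waiting actions together with the shared phase and transport observations, that leads to a decision step where they require different actions. If they differ in $\beta_1$, that decision is the grasp. If they agree in $\beta_1$ but differ in $\beta_2$, the continuation passes through the identical grasp and the decision is the placement. Backward induction through Definition~\ref{def:gamma} then gives $h\not\sim_t h'$.

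The main obstacle is confirming that these continuations lie in $U_t(h)\cap U_t(h')$, which requires that no observation before the relevant decision depends on $(\beta_1,\beta_2)$. In A$'$ the cue is not re-displayed, so (A4) holds and is solver-certified; this gives equal supports and hence common continuations. Consequently all four combinations are distinct $\Gam_t$ classes, $|\Gam_t|_o=4$, while $R_1=R_2=2$ by the instantaneous quotients at the two decisions.
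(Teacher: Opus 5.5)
Your proposal is correct and follows the paper's proof. Step 1 uses the same surjection from the codes occurring with $O_t=o$ onto the $\Gam_t$ classes of that fiber, which the paper also obtains from cell compatibility and Lemma~\ref{lem:a4}; Step 2 uses that $G_{E,t}$ is a function of $\Gam_t$ and then maximizes over $(t,o)$, exactly as the paper does. Your Step 3 goes further than the paper, which only asserts $|\Gam_t|_o=4$ in gap$_1$: you justify it by using (A4) support equality to build common continuations and running the backward induction of Definition~\ref{def:gamma} explicitly.
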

\begin{proof}
Here $|\Gam_t|_o$ is the number of classes of $\Gam_t$ with $O_t=o$ and $R=\max_{t,o}|G_{E,t}|_o$ the largest number of behavioral classes in a fiber. Under (A4) the exact-regime paragraph of the proof of Theorem~\ref{thm:exact} shows that, for every sufficient realization and every $(t,o)$, $\Gam_t$ is a function of $(O_t,C_t)$; the map $c\mapsto\Gam_t$ on the codes that occur together with $O_t=o$ is therefore onto the $|\Gam_t|_o$ classes with $O_t=o$, and $K\ge|\Gam_t|_o$. Since $G_{E,t}$ is a function of $\Gam_t$, $|\Gam_t|_o\ge|G_{E,t}|_o$, whose maximum is $R$. In A$'$ during gap$_1$, both $\beta_1$ (used at the grasp) and $\beta_2$ (used at the place) must be carried although each pending decision has only two behavioral classes, so $|\Gam_t|_o=|\beta_1\vee\beta_2|=4$.
\end{proof}

\begin{proof}[Proof of Theorem~\ref{thm:exact-trans}]
\emph{Well-definedness.} If $\sim_t$ is transitive it is an equivalence (reflexive and symmetric by definition), so $\Gam_t$ is the partition into classes. For a class $\gamma$ at $t$ and $u=(a,o')$ with $u\in U_t(h)$ for some $h\in\gamma$, set $\Phi_t(\gamma,o',a):=[hu]_{\sim_{t+1}}$. This does not depend on the representative: if $h,h''\in\gamma$ both have $u\in U_t(h)\cap U_t(h'')$, the last conjunct of Definition~\ref{def:gamma} gives $hu\sim_{t+1}h''u$, so both extensions lie in one class. Hence $C_t:=\Gam_t$, $F_t:=\Phi_t$ is a recurrent realization on reachable pairs; by (F1) $G_{E,t}$ is constant on each class, so $\hat\pi(\cdot\mid o,\gamma):=\pi_E(\cdot\mid o,g)$ has zero distortion and $R^{\rm mem}_t(0)\le H(\Gam_t\mid O_t)$.
\emph{Lower bound.} By the compatibility paragraph of the proof of Theorem~\ref{thm:sandwich}, which does not use transitivity, every $(O_t,C_t)$-cell of a zero-distortion realization is $\sim_t$-compatible; a compatible set is contained in a single class of an equivalence, so $\Gam_t$ is a function of $(O_t,C_t)$ and $H(C_t\mid O_t)\ge I(C_t;\Gam_t\mid O_t)=H(\Gam_t\mid O_t)$. Under (A4), $\sim_t=\sim^s_t$ (Lemma~\ref{lem:a4}), which recovers Theorem~\ref{thm:exact}.
\end{proof}

\subsection{The general case: closed compatible state assignments}\label{app:cover}
A deterministic recurrent realization assigns every reachable history to one state. For rate accounting it is without loss of generality to refine that state by the current observation, $C'_t=(O_t,C_t)$: this leaves $H(C'_t\mid O_t)=H(C_t\mid O_t)$ unchanged and yields a partition into joint $(O_t,C_t)$-cells. We optimize over these partitions rather than overlapping covers; an overlapping cover, as in incompletely specified machines \citep{paull1959minimizing}, additionally requires a selection map whose induced partition determines the entropy.
\begin{definition}[Closed compatible state assignment]\label{def:cover}
A sequence of partitions $(\mathcal P_t)_{t\le T}$ of the reachable histories is a closed compatible state assignment if (i) every cell of $\mathcal P_t$ is $\sim_t$-compatible, and (ii) it is closed: if $h,h'$ lie in one cell of $\mathcal P_{t-1}$ and $u\in U_{t-1}(h)\cap U_{t-1}(h')$, then $hu$ and $h'u$ lie in one cell of $\mathcal P_t$. Its rate at $t$ is $H(P_t\mid O_t)$ under the expert occupancy, where $P_t$ is the cell index. Because compatibility requires equal observations, every cell lies within one observation fiber.
\end{definition}
\begin{proposition}\label{prop:cover}
Under (A1)--(A3), the joint $(O_t,C_t)$-cell partitions of zero-distortion recurrent realizations are exactly the closed compatible state assignments, up to the rate-preserving refinement above, and for every $t$, $R^{\rm mem}_t(0)=\inf H(P_t\mid O_t)$ over closed compatible state assignments. On a finite POMDP the infimum is a minimum.
\end{proposition}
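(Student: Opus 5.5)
The proof has two inclusions and then the entropy identity. Throughout, we work with the refined state $C'_t=(O_t,C_t)$, which changes no rate.

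\emph{Realizations give assignments.} Fix a zero-distortion realization and let $\mathcal P_t$ be the partition of reachable histories into its joint cells $\mathcal H_t(o,c)$.
\begin{itemize}
\item Condition (i) of Definition~\ref{def:cover} is exactly the compatibility paragraph of the proof of Theorem~\ref{thm:sandwich}. That argument did not use transitivity, so it applies verbatim.
\item For condition (ii), let $h,h'$ share a cell $(o,c)$ at $t-1$ and let $u=(a,o')$ be a common continuation. By (F2), both $hu$ and $h'u$ are reachable and have $O_t=o'$. Their codes are $F_t(c,o',a)$, which is the same for both, so they lie in one joint cell at $t$.
\item Since $P_t$ and $(O_t,C_t)$ are in bijection, $H(P_t\mid O_t)=H(O_t,C_t\mid O_t)=H(C_t\mid O_t)$.
\end{itemize}
Hence the infimum over closed compatible assignments is at most $R^{\rm mem}_t(0)$.

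\emph{Assignments give realizations.} Take a closed compatible assignment $(\mathcal P_s)$.
\begin{itemize}
\item Let the state be the cell index: $C_t:=[H_t]_{\mathcal P_t}$, with $C_0$ the cell of the empty history. At $t=0$ the only reachable history is the empty one, or the histories consisting of a first observation; in the latter case we prepend a trivial step.
\item Define $F_t(P,o',a)$ as the cell of $hu$ for any $h\in P$ with $u=(a,o')\in U_{t-1}(h)$. Closedness makes this independent of the representative. On unreachable pairs, set $F_t$ to an arbitrary fixed state; such pairs occur with probability zero and do not affect the occupancy-weighted rate.
\item By (F2), $O_t$ is constant on each cell, and by (F1) so is $G_{E,t}$, since the cell is compatible. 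Therefore the policy $\hat\pi(\cdot\mid o,P):=\pi_E(\cdot\mid o,g_P)$ is well defined and has zero distortion.
\item An induction on $t$ shows that $C_t(H_t)$ computed recursively equals the cell of $H_t$. This uses exactly the well-definedness of $F_t$ along the realized history.
\item The joint $(O_t,C_t)$ partition of this realization is $\mathcal P_t$ itself, and its rate is $H(P_t\mid O_t)$.
\end{itemize}
Hence $R^{\rm mem}_t(0)$ is at most the infimum, and the two quantities are equal. The same construction shows that "exactly" holds up to the refinement. One point to state carefully: the assignment gives a single realization for all steps simultaneously, while the infimum is taken per $t$. This is harmless, because for each $t$ we optimize over full-horizon objects on both sides.

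\emph{Finite case and main obstacle.} On a finite POMDP there are finitely many reachable histories at each step. The set of partition sequences is therefore finite and nonempty, since it contains the $\Gam^s$ partition of Lemma~\ref{lem:strong}, which is closed and compatible because $\sim^s\subseteq\sim$. So the infimum is a minimum. I expect the only delicate step to be the handling of reachability in the reverse construction: $F_t$ is determined only on reachable pairs, and $C_t(H_t)$ must coincide with the cell almost surely. Closedness is stated exactly for common continuations in the supports, and this is precisely what the induction needs.
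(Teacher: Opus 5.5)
Your proposal is correct and follows the paper's proof. In one direction you get compatibility of the joint cells from the compatibility paragraph of Theorem~\ref{thm:sandwich} and closure from the deterministic update; in the other you use the cell-index state, a closure-defined $F_t$ and an (F1)-defined zero-distortion policy, and your induction that the recursive state equals the cell, plus your finiteness-and-nonemptiness argument via $\Gamma^s_t$, supply details the paper leaves implicit. The only slip is cosmetic: constancy of $O_t$ on a cell follows from pairwise compatibility via (F1), not from (F2).
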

\begin{proof}
($\Rightarrow$) Refine a realization to $C'_t=(O_t,C_t)$. Its cells are compatible by the compatibility paragraph of the proof of Theorem~\ref{thm:sandwich}; closure follows from the deterministic update because histories in one cell share both $O_{t-1}$ and $C_{t-1}$. ($\Leftarrow$) Given a closed compatible assignment, let $C_t$ be its cell index. Closure makes $F_t(\mathrm{cell}(h),o',a):=\mathrm{cell}(hu)$ well defined on reachable pairs, and compatibility gives, by (F1), a single value of $G_{E,t}$ per cell, so $\hat\pi(\cdot\mid o,c):=\pi_E(\cdot\mid o,g)$ has zero distortion. Finally, $H(C'_t\mid O_t)=H(C_t\mid O_t)$ for the forward construction, while the reverse construction has $C_t=P_t$, establishing the rate identity.
\end{proof}
\paragraph{Exact computation.} On an enumerated finite POMDP, a backward dynamic program enumerates partition sequences satisfying compatibility and closure; concentrating its objective on step $t$ gives $R^{\rm mem}_t(0)$. Two exact reductions keep small instances tractable: histories with isomorphic futures are interchangeable, and future-independent components factorize. The solver returns $h_2(1/3)$ on the three-history instance of Remark~(i), strictly inside $[0,\log_23]$, and $H(\Gam_t\mid\Ob_t)$ at every step of Task~A at $M=4$ (at most 109 memoized partition types per level, 0.9\,s) and of A$'$. The latter instances are also certified directly by Theorem~\ref{thm:exact-trans}. The unpruned search grows rapidly with the number of mutually compatible histories with non-isomorphic futures (more than $10^7$ partition types per level at Task~A $M=16$), so it is a certificate for small instances rather than a general algorithm. The drifting corridor of Appendix~\ref{app:grid}, our non-transitive family, already has up to 384 histories per level at $W=3$ with maximal compatible sets of up to 96 histories and hundreds of thousands of non-transitive triples; exact search is impractical (no termination within 25 minutes); there the minimum is instead pinned by the matching bounds below. The general problem is a recurrent analogue of zero-error source coding with decoder side information \citep{witsenhausen1976zero,alon1996source}; we do not characterize its computational complexity.

\paragraph{A certified upper bound beyond the reach of the exact DP.} On the drifting corridor ($W\ge3$), any closed compatible assignment is realisable, so a constructed one certifies an upper bound. A greedy construction (cells merged only when pairwise compatible, every merge propagated to the successors that share a continuation, chains rolled back when an implied merge is incompatible; verified from scratch for compatibility and successor consistency) attains exactly the $W=1$ ladder, $2$ bits in the first hall and $1$ bit in the second, for $W=3$, $5$ and $7$ ($384$ to $896$ histories per level), and for $W=9,11,13$ with largest-cells-first ordering. The hall-wise brackets therefore tighten from $[0,\,2+\log_2 W]$ to $[0,2]$ and $[0,1]$: the drift, re-revealed by every hall observation, need not be stored. Sufficient learned codes match the $2$-bit construction in the first hall and retain a modest $0.2$--$0.4$-bit surplus in the second.
\begin{proposition}[Incompatibility entropy lower bound]\label{prop:coloring}
Within each observation fiber $o$ at step $t$, connect two reachable histories when they are incompatible, and weight vertices by $P(H_t=h\mid O_t=o)$. Let $H_\chi(t,o)$ be the minimum entropy of a proper coloring of this weighted graph. Under (A1)--(A3),
\[
R_t^{\rm mem}(0)\ge\sum_o P(O_t=o)H_\chi(t,o).
\]
Equality is certified whenever a closed compatible state assignment attains this lower bound.
\end{proposition}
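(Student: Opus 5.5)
The plan is to reduce the claim to Proposition~\ref{prop:cover} and then bound the rate of an arbitrary closed compatible state assignment fiber by fiber. By that proposition, $R^{\rm mem}_t(0)=\inf H(P_t\mid O_t)$ over closed compatible state assignments $(\mathcal P_s)$. It therefore suffices to show that every such assignment satisfies $H(P_t\mid O_t)\ge\sum_o P(O_t=o)H_\chi(t,o)$. We then take the infimum.

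The key step is to read each $\mathcal P_t$, restricted to a fiber, as a proper coloring. Definition~\ref{def:cover} places every cell of $\mathcal P_t$ inside one observation fiber $o$. Fix $o$ and color each reachable history $h$ with $O_t(h)=o$ by the index of its cell. Condition (i) says every cell is $\sim_t$-compatible, so its members are pairwise $\sim_t$-related. Hence no edge of the incompatibility graph joins two histories in the same cell, and the coloring is proper. The color $P_t$ restricted to the fiber has conditional law $P(P_t=c\mid O_t=o)=\sum_{h\in c}P(H_t=h\mid O_t=o)$. This is exactly the color-class weight distribution whose entropy $H_\chi$ minimizes. So $H(P_t\mid O_t=o)\ge H_\chi(t,o)$.

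Averaging over $o$ with weights $P(O_t=o)$ gives $H(P_t\mid O_t)=\sum_oP(O_t=o)H(P_t\mid O_t=o)\ge\sum_oP(O_t=o)H_\chi(t,o)$. Taking the infimum over assignments then yields the inequality. Closure (ii) is not used in this direction; dropping it only enlarges the feasible set, which is why the result is merely a lower bound.

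For the equality clause, suppose a closed compatible state assignment $\mathcal P^\ast$ has $H(P^\ast_t\mid O_t)=\sum_oP(O_t=o)H_\chi(t,o)$. By the ($\Leftarrow$) direction of Proposition~\ref{prop:cover}, it is realized by a zero-distortion recurrent realization with that rate, so $R^{\rm mem}_t(0)$ is at most this value. Combined with the lower bound, equality holds.

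I expect the main obstacle to be well-posedness rather than the inequality itself. $H_\chi(t,o)$ must be a minimum, or at least an infimum, over colorings of a countable graph. The fiber sums must also be finite, which holds on finite POMDPs. Compatibility must be read for reachable histories only, with the fiber restriction justified by (F1). If $H_\chi$ is only an infimum in the countable case, the same argument goes through with $\inf$ in place of $\min$.
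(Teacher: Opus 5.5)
Your proof is correct and follows essentially the paper's argument: compatible cells within an observation fiber form a proper coloring of the incompatibility graph, closure is dropped to obtain the relaxation, and a closed compatible assignment attaining the bound certifies equality through the ($\Leftarrow$) direction of Proposition~\ref{prop:cover}. The only difference is cosmetic. You apply the coloring observation to closed compatible assignments via the rate identity of Proposition~\ref{prop:cover}, whereas the paper applies it directly to the $(O_t,C_t)$ cells of an arbitrary zero-distortion realization, using the compatibility paragraph of the proof of Theorem~\ref{thm:sandwich}.
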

\begin{proof}
Every joint $(O_t,C_t)$ cell of a zero-distortion realization is compatible (Theorem~\ref{thm:sandwich}), so its state labels form a proper coloring within each observation fiber. Minimizing over all proper colorings relaxes successor consistency and hence lower-bounds the conditional entropy of every realization. A closed compatible assignment supplies the matching realizable upper bound (Proposition~\ref{prop:cover}).
\end{proof}
\paragraph{Finite-instance equality on the corridor.} Dropping closure leaves a necessary condition: histories with the same $O_t$ that are pairwise \emph{incompatible} must occupy different memory states, so within each observation cell a zero-distortion memory is a proper coloring of the incompatibility graph, and the minimum entropy over proper colorings, weighted by the cell probabilities, lower-bounds $R^{\rm mem}_t(0)$. Histories with identical incompatibility neighbourhoods can be merged without loss (moving a vertex from the smaller color class to the larger preserves proper coloring and majorizes the mass vector), after which every cell has at most $12$ types and the minimum-entropy coloring is computed exactly by a subset dynamic program, cross-checked by enumerating all proper colorings. For $W=3,5,7$ the bound equals the certified upper bound at every step ($0$, $1$, $2$ and $1$ bits along the episode), so the minima of these finite instances are pinned exactly at the $W=1$ ladder. A one-line certificate suffices: no pairwise-compatible set of histories has conditional mass above $\alpha=1/4$ in the first hall or $1/2$ in the second (the type graph is a disjoint union of $K_4$'s, respectively $K_2$'s), hence $H\ge\log_2(1/\alpha)$; the plain clique bound is loose (at most $1.87$ of the $2$ bits). The same agreement holds at every step for $W=9$, $11$ and $13$ (up to $1664$ histories per level; exact coloring on every cell), although these instances belong to one structural family (identical type graphs) and only the largest-cells-first construction attains the bound. The lower bound is solved exactly; the upper bound is an independently verified constructive certificate. On the three-history instance of Remark~(i) the coloring bound equals the exact dynamic program, $h_2(1/3)=0.918$ bit, whereas the largest-compatible-set certificate gives only $\log_2(3/2)=0.585$, so the coloring bound is the one to use in general. These are computations on finite instances, not a result for the general non-transitive case, whose complexity we do not characterize. State-count minimization of incompletely specified machines is NP-hard \citep{pfleeger1973state}; this related result is not a complexity proof for our entropy objective.

\subsection{Nonzero distortion}\label{app:rdprop}
Let $R^{\rm mem}_t(D)$ be the infimum of $H(C_t\mid O_t)$ over deterministic recurrent realizations whose expected distortion is at most $D$ at every step. The instantaneous function $R_E(D)$ is defined in \S\ref{sec:theory} for the source at step $t$.
\begin{proposition}\label{prop:rd}
Under (A1)--(A3), $R^{\rm mem}_t(D)$ is non-increasing in $D$ and
\[R^{\rm mem}_t(D)\ge R_E(D)=R_{G\mid O}(D)\qquad\text{for every }D\ge0.\]
The inequality can be strict, including at $D=0$.
\end{proposition}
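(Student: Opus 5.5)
The plan has three parts: monotonicity, the lower bound, and strictness. Monotonicity is immediate. A deterministic recurrent realization that has expected distortion at most $D$ at every step also has distortion at most $D'$ for any $D'\ge D$. So the feasible set defining $R^{\rm mem}_t(\cdot)$ only grows with $D$, and the infimum of $H(C_t\mid O_t)$ cannot increase.

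\textbf{Lower bound.} Fix a deterministic recurrent realization with per-step distortion at most $D$.
\begin{itemize}
\item As noted before Lemma~\ref{lem:strong}, unrolling the update gives $C_t=c_t(H_t)$, a deterministic function of the history. Hence $C_t$ is an admissible (deterministic) encoder $p(c\mid h)$ for the instantaneous problem at step $t$, with decoder $q(\cdot\mid o,c):=\hat\pi(\cdot\mid o,c)$.
\item Under (A2), $P_E(A_t\mid H_t)=\pi_E(\cdot\mid O_t,G_{E,t})=\pi_E(\cdot\mid S_t)$ almost surely. So the step-$t$ distortion of the realization is exactly the instantaneous distortion of this encoder--decoder pair, which is at most $D$.
\item Feasibility then gives $H(C_t\mid O_t)=I(C_t;H_t\mid O_t)\ge R_E(D)$, which is precisely the first inequality of Proposition~\ref{prop:det}.
\item Taking the infimum over realizations gives $R^{\rm mem}_t(D)\ge R_E(D)$, and Theorem~\ref{thm:reduction} supplies $R_E(D)=R_{G\mid O}(D)$.
\end{itemize}
Only the constraint at step $t$ is used; the constraints at the other steps are simply discarded.

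\textbf{Strictness at $D=0$.} I would use A$'$ at a first-gap step $t$. There $H(G_{E,t}\mid O_t)=0$, so $R_E(0)=0$ by Theorem~\ref{thm:reduction}. Compatibility is transitive (certified via (A4)), so Theorem~\ref{thm:exact-trans} gives $R^{\rm mem}_t(0)=H(\Gam_t\mid O_t)=2$ bits. Remark~(i) offers a second, non-transitive witness: there $R^{\rm mem}_t(0)=h_2(1/3)>0=R_E(0)$.

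\textbf{Strictness at positive $D$ (main obstacle).} I would stay in A$'$ at a first-gap step $t$, where $R_E(D)=0$ for every $D$, and show that $R^{\rm mem}_t(D)>0$ for small $D>0$.
\begin{itemize}
\item Let $t'$ be the grasp step. Between $t$ and $t'$, the observations and expert actions are conditionally independent of $\beta_1$ given $O_t$; the waiting behavior is identical across classes.
\item $C_{t'}$ is a deterministic function of $C_t$ and these intervening symbols. By data processing, $I(C_t;\beta_1\mid O_t)\ge I(C_{t'},O_{t'};\beta_1\mid O_t)$.
\item Distortion at most $D$ at $t'$ forces the right-hand side to be at least the binary rate--distortion value $1-h_2(D)>0$ for $D<1/2$. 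This is Corollary~\ref{cor:closed} with $R=2$, after relating total-variation distortion to decision error.
\item Hence $H(C_t\mid O_t)\ge I(C_t;\beta_1\mid O_t)\ge 1-h_2(D)>0=R_E(D)$.
\end{itemize}
The care needed is in making the independence of the intervening symbols from $\beta_1$ precise. I would do this by conditioning on the deterministic wait schedule, which the expert executes identically for every class, rather than appealing to the figure.
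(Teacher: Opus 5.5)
Your monotonicity argument, your lower bound, and your $D=0$ witness are exactly the paper's proof. For the lower bound, you read $C_t=c_t(H_t)$ as an admissible deterministic instantaneous encoder, so $H(C_t\mid O_t)=I(C_t;H_t\mid O_t)\ge R_E(D)$, and then invoke Theorem~\ref{thm:reduction}. For $D=0$, you use A$'$ in the first gap together with Theorem~\ref{thm:exact-trans}. Your remark that (A2) gives $P_E(A_t\mid H_t)=\pi_E(\cdot\mid S_t)$, so the recurrent and instantaneous distortion conventions coincide, fills a step the paper leaves implicit. Remark~(i) is also a valid second witness.

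The paper stops at $D=0$ and never argues strictness for $D>0$, so your last part is an extension. It works in the symbolic A$'$ model, but two points need tightening.

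First, the step $I(C_t;\beta_1\mid O_t)\ge I(C_{t'},O_{t'};\beta_1\mid O_t)$ does not follow from $\beta_1\perp Z\mid O_t$ alone, where $Z$ denotes the intervening observations and actions. Suppose $Z$ depends on another history component, say a binary nuisance $x$; in raw A$'$ the waiting motion depends on the initial object position. Then the code $C_t=\beta_1\oplus x$ has $I(C_t;\beta_1\mid O_t)=0$, while $C_{t'}=C_t\oplus x$ recovers $\beta_1$. Your step is fine when $Z$ is deterministic, as you propose. The version that needs only $\beta_1\perp Z\mid O_t$ is
$H(C_t\mid O_t)\ge I(C_t;\beta_1\mid Z,O_t)=I(C_t,Z;\beta_1\mid O_t)\ge I(C_{t'},O_{t'};\beta_1\mid O_t)$.

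Second, total variation must be fixed as part of the example, not just used in passing. (A3) alone allows distortions with $\inf_q[\delta(\pi_0,q)+\delta(\pi_1,q)]=0$. For instance, on binary actions take $\delta(p,q)=|p-q|\,|q-\tfrac12|$ for $q\ne\tfrac12$ and $\delta(p,\tfrac12)=|p-\tfrac12|$. Then a memoryless constant decoder meets every $D>0$ at each binary decision, and $R^{\rm mem}_t(D)=0=R_E(D)$. With total variation, a deterministic expert, and the uniform binary grasp class independent of $O_{t'}$, Corollary~\ref{cor:closed} does give your bound $1-h_2(D)>0$ for $D<1/2$.
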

\begin{proof}
The feasible sets are nested in $D$, which gives monotonicity. Any deterministic recurrent realization induces at step $t$ an admissible instantaneous encoder $C_t=c_t(H_t)$ and decoder $\hat\pi(\cdot\mid O_t,C_t)$. Because $C_t$ is a function of $H_t$, $H(C_t\mid O_t)=I(C_t;H_t\mid O_t)$; taking the infimum over the more restricted recurrent class therefore gives $R^{\rm mem}_t(D)\ge R_E(D)$. Theorem~\ref{thm:reduction} gives the equality on the right. At zero distortion the lower bound is $H(G_{E,t}\mid O_t)$, whereas Theorem~\ref{thm:exact-trans} gives $H(\Gam_t\mid O_t)$ in the transitive regime; A$'$ is strict in both gaps.
\end{proof}
This proposition supplies only a lower bound for $D>0$. The recurrent constraint couples encoders across steps, and deterministic finite realizations need not admit time-sharing, so we claim neither convexity nor equality with the instantaneous frontier.

\subsection{Behavioral future sufficiency}
For $J\ge0$ define $\sim^{(J)}_t$ by unrolling Definition~\ref{def:gamma} $J$ steps: $h\sim^{(0)}_th'$ iff $(O_t,G_{E,t})(h)=(O_t,G_{E,t})(h')$; for $J\ge1$ and $t<T$, $h\sim^{(J)}_th'$ iff $h\sim^{(0)}_th'$ and $hu\sim^{(J-1)}_{t+1}h'u$ for all $u\in U_t(h)\cap U_t(h')$; and $\sim^{(J)}_T:=\sim^{(0)}_T$. Then $\sim^{(T-t)}_t=\sim_t$ and $\sim^{(J+1)}_t\subseteq\sim^{(J)}_t$, since each unrolling adds conjuncts. Write $\xi_{t:j}=(O_{t+1:t+j},A_{t:t+j-1})$ for a continuation of length $j$, so that $(H_t,\xi_{t:j})=H_{t+j}$. A continuation is reachable from $h$ iff each of its steps lies in the support of the history built so far, so the continuations reachable from both $h$ and $h'$ are exactly those built step by step from common supports, and unrolling the definition gives
\begin{multline}\label{eq:simJ}
h\sim^{(J)}_th'\iff O_t(h)=O_t(h')\ \text{and}\ P_E(A_{t+j}\mid h,\xi)=P_E(A_{t+j}\mid h',\xi)\\
\text{for all }0\le j\le J\text{ and all }\xi=\xi_{t:j}\text{ reachable from both},
\end{multline}
because under (A2) $P_E(A_{t+j}\mid H_{t+j})=\pi_E(\cdot\mid O_{t+j},G_{E,t+j})$ determines $G_{E,t+j}$ within the fiber of $O_{t+j}$, and $O_{t+j}$ is part of $\xi$ for $j\ge1$.
\begin{lemma}\label{lem:gammaJ}
Under (A4) every $\sim^{(J)}_t$ is an equivalence; $\Gam^{(J)}_t:=[H_t]_{\sim^{(J)}_t}$ satisfies $\Gam^{(0)}_t\preceq\Gam^{(1)}_t\preceq\dots\preceq\Gam^{(T-t)}_t=\Gam_t$, each partition refining the previous one, and $H(\Gam^{(J)}_t\mid O_t)$ is non-decreasing in $J$.
\end{lemma}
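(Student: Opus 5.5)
The plan is to prove the three claims of Lemma~\ref{lem:gammaJ} in order: each $\sim^{(J)}_t$ is an equivalence, the partitions are nested, and the entropies are monotone. All three follow from (A4) by backward induction on $t$, copying the argument of Lemma~\ref{lem:a4}.

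\emph{Equivalence.} Reflexivity and symmetry are immediate from the definition, so only transitivity needs work. I will induct on $J$, uniformly in $t$. For $J=0$, and for $t=T$, the relation $\sim^{(J)}_t$ is equality of the function $h\mapsto(O_t,G_{E,t})(h)$, hence an equivalence. For $J\ge1$ and $t<T$, suppose $h_1\sim^{(J)}_th_2$ and $h_2\sim^{(J)}_th_3$.
\begin{itemize}
\item All three histories share $(O_t,G_{E,t})$, since $\sim^{(J)}_t\subseteq\sim^{(0)}_t$.
\item By (A4) they therefore have a common support $U=U_t(h_1)=U_t(h_2)=U_t(h_3)$, so every pairwise intersection of supports equals $U$.
\item For each $u\in U$ we then have $h_1u\sim^{(J-1)}_{t+1}h_2u$ and $h_2u\sim^{(J-1)}_{t+1}h_3u$. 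The induction hypothesis at level $J-1$ gives $h_1u\sim^{(J-1)}_{t+1}h_3u$, which is exactly $h_1\sim^{(J)}_th_3$.
\end{itemize}
The point of (A4) is that it makes the intersected supports in the recursion equal to a single common set. This is precisely what fails in the non-transitive example of Remark~(i).

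\emph{Refinement chain.} I will use the inclusion $\sim^{(J+1)}_t\subseteq\sim^{(J)}_t$ noted before the lemma, and verify it by induction on $J$. The base case $J=0$ holds because $\sim^{(1)}_t$ contains the $\sim^{(0)}_t$ conjunct. For the inductive step, the defining conjuncts of $\sim^{(J+1)}_t$ involve $\sim^{(J)}_{t+1}$, which is contained in $\sim^{(J-1)}_{t+1}$ by hypothesis. Since both relations are equivalences, inclusion of relations means every $\sim^{(J+1)}_t$ class lies inside a $\sim^{(J)}_t$ class. Equivalently, $\Gam^{(J)}_t$ is a function of $\Gam^{(J+1)}_t$, which is the refinement $\Gam^{(J)}_t\preceq\Gam^{(J+1)}_t$.

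For the last element of the chain I need $\sim^{(T-t)}_t=\sim_t$. This follows by backward induction on $t$. At $t=T$ both relations equal $\sim^{(0)}_T$. At $t<T$, unrolling $\sim^{(T-t)}_t$ one step yields $\sim^{(T-t-1)}_{t+1}$, which equals $\sim_{t+1}$ by the induction hypothesis; this matches Definition~\ref{def:gamma} step for step. Hence $\Gam^{(T-t)}_t=\Gam_t$.

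\emph{Monotone entropy.} Because $\Gam^{(J)}_t$ is a deterministic function of $\Gam^{(J+1)}_t$,
\[
H(\Gam^{(J+1)}_t\mid O_t)=H(\Gam^{(J)}_t\mid O_t)+H(\Gam^{(J+1)}_t\mid\Gam^{(J)}_t,O_t)\ge H(\Gam^{(J)}_t\mid O_t).
\]
All these quantities are finite, because each is bounded above by $H(\Gam_t\mid O_t)$; this is finite on the enumerated models, and otherwise the inequality holds in the extended reals.

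\emph{Main obstacle.} The only step with real content is the transitivity induction. The key choice there is to induct on $J$ simultaneously for all $t$, since the recursion for level $J$ at step $t$ refers to level $J-1$ at step $t+1$.
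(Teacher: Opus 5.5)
Your proof is correct and follows the paper's argument. Like the paper, you induct on $J$ simultaneously for all $t$, use (A4) to equalize the continuation supports so that transitivity of $\sim^{(J-1)}_{t+1}$ transfers to $\sim^{(J)}_t$, and then get monotone entropy because the coarser partition is a function of the finer one. You also spell out the short inductions for $\sim^{(J+1)}_t\subseteq\sim^{(J)}_t$ and $\sim^{(T-t)}_t=\sim_t$, which the paper only asserts just before the lemma.
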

\begin{proof}
Induction on $J$, for all $t$ simultaneously: $\sim^{(0)}_t$ is equality of a function. If $\sim^{(J-1)}_{t+1}$ is an equivalence and $h_1\sim^{(J)}_th_2\sim^{(J)}_th_3$, the three share $(O_t,G_{E,t})$, (A4) equalizes their supports, and transitivity at $(J-1,t+1)$ gives $h_1u\sim^{(J-1)}_{t+1}h_3u$ for all common $u$, i.e.\ $h_1\sim^{(J)}_th_3$. Refinement is $\sim^{(J+1)}_t\subseteq\sim^{(J)}_t$; since the coarser partition is then a function of the finer one, $H(\Gam^{(J)}_t\mid O_t)\le H(\Gam^{(J+1)}_t\mid O_t)$.
\end{proof}

The population BFS distortion at horizon $J$ of a code $C_t=c_t(H_t)$ with decoders $q_j(\cdot\mid c,o,\xi)$, $j=0,\dots,J$, is
\[D_{\rm BFS}=\sum_{j=0}^{J}w_j\,\mathbb E\Big[\delta\big(P_E(A_{t+j}\mid H_t,\xi_{t:j}),\ q_j(\cdot\mid C_t,O_t,\xi_{t:j})\big)\Big],\qquad w_j>0,\]
with the expectation over $(H_t,\xi_{t:j})$ under the expert occupancy, so that every continuation reachable from a history receives positive weight (common reachable coverage, (ii)); $\delta$ is distributional, $\delta=0$ iff the distributions coincide ((iii); for a stochastic expert this is zero KL or zero excess log-loss, not zero log-loss); all prefixes $j=0,\dots,J$ are predicted ((iv)); the full horizon is $J=T-t$ ((i)). The $j=0$ term is the imitation loss; the target $A_{t+j}$ is never an input.
\begin{proposition}[BFS minimizers]\label{prop:bfs}Under (A1)--(A3), full-horizon prediction over all common reachable continuations with a distributional loss, $D_{\rm BFS}=0$ iff every $(O_t,C_t)$-cell is $\sim^{(J)}_t$-compatible; under (A4), iff $(O_t,C_t)$ refines $\Gam^{(J)}_t$, and with $J=T-t$ the rate-minimal zero-BFS code has $H(C_t\mid O_t)=H(\Gam_t\mid O_t)$.\end{proposition}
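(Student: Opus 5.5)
The plan is to prove the two directions of the ``iff'' separately, using the unrolled characterization \eqref{eq:simJ} of $\sim^{(J)}_t$. For each $j$ I would split $D_{\rm BFS}$ into per-cell contributions: condition on $(C_t,O_t)=(c,o)$ and on $\xi=\xi_{t:j}$. Since every $w_j>0$, $\delta\ge0$, and $\delta=0$ exactly when the two distributions agree, $D_{\rm BFS}=0$ holds iff, for every $j\le J$, almost surely
\[
P_E(A_{t+j}\mid H_t,\xi_{t:j})=q_j(\cdot\mid C_t,O_t,\xi_{t:j}).
\]
With countable reachable histories, ``almost surely'' becomes ``for every reachable pair $(h,\xi)$ with $\xi$ reachable from $h$.''

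\emph{($\Rightarrow$).} Take $h,h'$ in one $(O_t,C_t)$-cell and a continuation $\xi$ reachable from both. Then
\[
P_E(A_{t+j}\mid h,\xi)=q_j(\cdot\mid c,o,\xi)=P_E(A_{t+j}\mid h',\xi).
\]
Together with $O_t(h)=O_t(h')$, \eqref{eq:simJ} gives $h\sim^{(J)}_th'$. So every cell is $\sim^{(J)}_t$-compatible.

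\emph{($\Leftarrow$).} Define $q_j(\cdot\mid c,o,\xi)$ as the common value of $P_E(A_{t+j}\mid h,\xi)$ over histories $h$ in the cell from which $\xi$ is reachable. This value is well defined precisely because pairwise compatibility and \eqref{eq:simJ} force agreement on jointly reachable $\xi$. On continuations reachable from no member of the cell, set $q_j$ arbitrarily; these carry zero weight. Each loss term then vanishes.

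The one subtlety is that \eqref{eq:simJ} needs only pairwise agreement on \emph{jointly} reachable $\xi$. The definition of $q_j$ fixes a value for each $\xi$, and any two histories reaching that $\xi$ are pairwise compatible, so they agree; hence no transitivity is needed here. I expect this to be the main thing to check carefully, together with using (A2) to pass from equal action distributions to equal $G_{E,t+j}$ inside \eqref{eq:simJ}, which the paper has already done.

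\emph{Under (A4).} Lemma~\ref{lem:gammaJ} makes $\sim^{(J)}_t$ an equivalence. A compatible set is then contained in one class, so ``every cell compatible'' is equivalent to ``$\Gam^{(J)}_t$ is a function of $(O_t,C_t)$,'' i.e.\ $(O_t,C_t)$ refines $\Gam^{(J)}_t$. The converse holds because a refining cell sits inside a class and is therefore compatible.

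\emph{Rate claim at $J=T-t$.} Here $\Gam^{(T-t)}_t=\Gam_t$. Any zero-BFS code has $H(C_t\mid O_t)\ge I(C_t;\Gam_t\mid O_t)=H(\Gam_t\mid O_t)$. For attainment, take $C_t=\Gam_t$ with the decoders built in ($\Leftarrow$). This code can also be realized recurrently via Theorem~\ref{thm:exact-trans}, although the proposition only needs $C_t$ as a function of $H_t$. So the minimum equals $H(\Gam_t\mid O_t)$.
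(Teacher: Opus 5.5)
Your proposal is correct and follows the paper's proof essentially step by step. Both directions go through the unrolled characterization \eqref{eq:simJ}, with $q_j$ defined from any cell member that reaches $\xi$ (pairwise compatibility suffices, so no transitivity is needed); (A4) enters only through Lemma~\ref{lem:gammaJ} to turn compatibility into refinement of $\Gam^{(J)}_t$; and the rate claim follows from $\Gam_t$ being a function of $(O_t,C_t)$, with equality at $C_t=\Gam_t$. Your extra remarks (arbitrary $q_j$ on unreachable continuations, recurrent realizability via Theorem~\ref{thm:exact-trans}) are harmless details that the paper leaves implicit.
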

\begin{proof}
($\Rightarrow$) Let $D_{\rm BFS}=0$ and let $h,h'$ lie in the same cell $(o,c)$. For any $0\le j\le J$ and any $\xi=\xi_{t:j}$ reachable from both, the pairs $(h,\xi)$ and $(h',\xi)$ have positive probability, so by (iii) $q_j(\cdot\mid c,o,\xi)=P_E(A_{t+j}\mid h,\xi)$ and $q_j(\cdot\mid c,o,\xi)=P_E(A_{t+j}\mid h',\xi)$; the two expert conditionals agree, and \eqref{eq:simJ} gives $h\sim^{(J)}_th'$. ($\Leftarrow$) If every cell is $\sim^{(J)}_t$-compatible, set $q_j(\cdot\mid c,o,\xi):=P_E(A_{t+j}\mid h,\xi)$ for any $h$ in the cell $(o,c)$ from which $\xi$ is reachable; by \eqref{eq:simJ} any two such $h$ give the same value (pairwise compatibility suffices; no transitivity is used), and these decoders have $D_{\rm BFS}=0$. Under (A4), $\sim^{(J)}_t$ is an equivalence (Lemma~\ref{lem:gammaJ}), so a cell is compatible iff it is contained in a class, i.e.\ iff $(O_t,C_t)$ refines $\Gam^{(J)}_t$. For $J=T-t$, $\Gam^{(J)}_t=\Gam_t$; a code refining $\Gam_t$ has $\Gam_t$ as a function of $(O_t,C_t)$ and therefore $H(C_t\mid O_t)\ge H(\Gam_t\mid O_t)$, with equality for $C_t=\Gam_t$ (for stochastic codes the same holds with $I(C_t;H_t\mid O_t)$ in place of $H(C_t\mid O_t)$).
\end{proof}
\paragraph{Why every prefix is predicted (first divergence).} Suppose that the objective included only the $j=J$ term. Let $h,h'$ share a cell but be $\sim^{(J)}_t$-incompatible, and let $j^\star\le J$ be the smallest $j$ for which some common continuation $\xi_{t:j^\star}$ yields different expert conditionals. By the minimality of $j^\star$, the action prefix $A_{t:t+j^\star-1}$ supplied along $\xi_{t:j^\star}$ is common to $h$ and $h'$, so the $j^\star$ term is positive. Every longer continuation, however, contains $A_{t+j^\star}$, whose supports may already differ between $h$ and $h'$, as they can for a deterministic expert. In that case, no $\xi_{t:J}$ is common, and the $J$ term alone can vanish on an incompatible cell. Predicting every prefix, as required by (iv), assigns a cost at the first step at which the supplied actions have not already revealed the divergence. Supplying intermediate expert actions under teacher forcing is therefore not label leakage: these actions instantiate the same continuation consumed by the recurrent transition, and the prediction target is never provided as input. With finite demonstrations, finite $J$, and sampled continuations, the trained objective is an empirical surrogate for $D_{\rm BFS}$; when $J$ is finite, its target is $\Gam^{(J)}_t$ rather than $\Gam_t$.

\subsection{Closed-form verification of the theoretical quantities}\label{app:verify}
Before running any learning experiment, we validated every partition-refinement quantity in Appendix~\ref{app:solver} against closed-form results on the finite toy problems (the T0 gate). First, Blahut--Arimoto on the conditional source $(G_{E,t},O_t)$ agrees with $H(G\mid O)$ at $D=0$ to $7.8\!\times\!10^{-16}$. Second, for $M\in\{4,8,16\}$, Blahut--Arimoto on the full history source $(H_t,O_t)$ agrees with the reduced source to $1.3\!\times\!10^{-15}$, as predicted by Theorem~\ref{thm:reduction}. Third, the solver recovers the deterministic frontier $R^{\det}_{G\mid O}(D)$ shown in Figure~\ref{fig:toy}. Fourth, at every $t$, $H(\Gam_t\mid O_t)$ matches the hand-derived toy staircases: $[0,0,1,2]$ for the reveal toy and $\log_2(R_1R_2)\to\log_2R_2\to0$ for the gap toy. Fifth, partition refinement returns the strict bracket on a re-reveal instance ($\Gam^s$ requires 2 bits where $\Gam$ requires 1), refuses to construct $\Gam_t$ on a purpose-built non-transitive instance, and produces a $\Gam^{(J)}_t$ table monotone in $J$ (Lemma~\ref{lem:gammaJ}). This refinement pass takes below one second for every dataset, with at most 768 histories per level in the corridor; the general exact dynamic program has the more limited scope described in Appendix~\ref{app:cover}.

\FloatBarrier
\needspace{8\baselineskip}\section{Experimental design and measurement}\label{app:design-measure}
These sections fix the information structure before comparing learned rates. They distinguish symbolic behavioral requirements, code entropy, codebook capacity, and physical storage, and document the empirical sufficiency gate.

\subsection{Benchmarks and the symbolic observation convention}\label{app:solver}

\begin{table}[htbp]\centering\footnotesize\setlength{\tabcolsep}{3pt}\renewcommand{\arraystretch}{1.15}
\caption{Benchmark families, regimes and frozen primary configurations ($N$ = training episodes; $K=16$ codes throughout; the Franka runs use batch 512).}
\label{tab:bench}
\begin{tabularx}{\linewidth}{@{}>{\raggedright\arraybackslash}p{0.62in}>{\hsize=.90\hsize}L>{\hsize=.97\hsize}L>{\hsize=1.14\hsize}L>{\hsize=.93\hsize}L>{\hsize=1.06\hsize}L@{}}\toprule
 & finite toys & signpost corridor & A$'$ (distortion-constrained) & A$'$ (attainability) & Task A \\\midrule
domain & finite POMDPs & grid navigation & Franka, kinematic attach & Franka, kinematic attach & Franka, physics-simulated $2$\,kg grasp \\\addlinespace[3pt]
regime (solver) & exact; (A2), (A4), transitive & exact at $W{=}1$; certified at $W{=}3,5,\dots,13$ & exact & exact & exact: $\sim_t$ transitive, (A4) fails at one step \\\addlinespace[3pt]
learner & raw VQ, lr $3\!\times\!10^{-4}$ & raw VQ, tanh-bounded state & plain $-$R & scaffold-RF & plain $-$R \\\addlinespace[3pt]
$\beta$ & $0.03$ & $0.01$ & $\le10^{-3}$ & $2\!\times\!10^{-3}$ & $10^{-3}$ \\\addlinespace[3pt]
data, steps & 6k & 6k & $N{=}1152/3584$; 10k & $N\approx4$k & $N{=}448$; 10k \\\addlinespace[3pt]
role & exact validation & cross-domain replication & distortion-constrained & attainability only & matched performance \\
\bottomrule\end{tabularx}
\end{table}

\textbf{Observation convention and frozen binner.} For behaviorally decodable classes, define $\Ob_t:=f_t(O_t)=(\text{phase},\ \text{probe symbol},\ f^1_t(O_t),\ f^2_t(O_t))$. Here, $f^i_t$ is a frozen nearest-class-mean binner of the raw end-effector position at step $t$, with class means fitted once on half of the recorded episodes. The binner is applied only when the classes are separable, defined as class-conditional means more than $5$\,cm apart, or more than 25 standard deviations of the sensor noise; otherwise, it returns a null symbol. On every recorded dataset used in the paper, including A$'$ at gaps 6, 10, and 20, the 2048-episode set, and Task A, the frozen binner reproduces the class on $100\%$ of held-out episodes at each separable step. Specifically, there are $0$ disagreements among $2{,}816$--$7{,}040$ held-out (episode, step) pairs per dataset, both with and without conditioning on the other class. Except for the Task~A sag convention disclosed below, the solver labels are therefore exactly those produced by $f_t(O_t)$, and this binned observation is a coarsening of $O_t$. Under this convention, the data-processing inequality gives $H(C_t\mid\Ob_t)\ge H(C_t\mid O_t)$, so the reported rates upper-bound the raw-observation rate. Table~\ref{tab:leak} reports a complementary held-out probe: a classifier on raw-observation windows recovers a class only when the convention marks it as separable ($0.99$--$1.00$) and otherwise performs at chance.

\begin{table}[htbp]\centering\small
\caption{Design choices, premises and evidence (\S\ref{sec:findings}).}
\label{tab:design}
\footnotesize\begin{tabularx}{\linewidth}{@{}R{1.75in}R{1.45in}L@{}}\toprule
choice & premise & evidence \\\midrule
$C_t$ is the only carrier across time & Definition~\ref{def:realization} & Appendix~\ref{app:design}: bypass under-counts in three domains \\
persistence (residual proposal) & recurrence & non-persistent code: $0/8$ sufficient (trivial) \\
discrete codebook & directly measurable accounting & continuous: KL upper bound $6.9$ bit, not comparable (Table~\ref{tab:quadrant}) \\
prior conditioned on $O_t$ & matches the conditional-rate objective & no empirical advantage over an unconditional prior in these families \\
forecaster used at training only & sole carrier at evaluation & forecaster removed before every reported rate and rollout; $C_t$ is the only cross-time variable (\S\ref{sec:f4}) \\
\bottomrule\end{tabularx}
\end{table}

\textbf{The sag symbol.} On Task~A the symbolic observation carries a mass-half symbol during transport and placing. The online binner of Appendix~\ref{app:sideinfo} reproduces it on $99.5\%$ of transport steps but only $80\%$ of placing steps, so the placing convention credits the observation with a partly decodable symbol. This symbol is behaviorally irrelevant and no reported quantity at the place step involves mass; we retain the convention and state the discrepancy explicitly.

\textbf{Benchmark hygiene} \citep[cf.][]{tao2025pobax,agarwal2026longcontext}\textbf{.} Without sensor noise, sub-millimeter contact artifacts made the mass decodable at $79$--$93\%$ (2\,mm noise and a non-contact attach removed it); a capture offset in the first pixel collection produced apparently memory-free solutions that the sufficiency metrics flagged (data recollected); and a deterministic expert makes the inverse-dynamics objective degenerate in the nuisance, which motivated the corridor's behaviorally equivalent action randomness.

\textbf{Coverage of the empirical POMDP.} The solver is exact on the finite POMDP induced by the recorded symbolic histories. On 13 of the 14 datasets every latent configuration of the generating program is recorded (Good--Turing missing mass $0$, no unseen key), and the reachable histories per level number at most 8 (A$'$), 32 (tier 16), 64 (P-I) and 252 (Task~A). The exception is Task~A at $M=32$, where 2 of 128 mass--slot pairs are unrecorded (Good--Turing missing mass $0.02$). Their symbolic sequences are reconstructed from the generating program's binary mass code and mass-half sag rule; the same reconstruction reproduces every observed key when applied from a different donor. Under this reconstruction $H(\Gam^s\mid\Ob)=1.00$ bit, as at every other $M$, and no recorded label is modified.

\textbf{Estimator bias.} All entropies are plug-in estimates over the recorded episodes with exact occupancy weights. We compute the Miller--Madow correction to $H(C\mid\Ob)$ from the number of \emph{occupied} $(c,\bar o)$ cells at each step. The correction is at most $0.009$ bit for the 2048-episode set across 16 runs and all steps, at most $0.035$ bit for the 512-episode sets, and at most $0.017$ bit for the 128 own-occupancy episodes of the reference policy. The worst-case correction of $4\cdot15/(2\cdot512\ln2)=0.085$ bit is never approached because each symbol occupies at most 2--4 codes. These corrections assess finite-sample bias in the reported rates; matching $2.00$ bits is interpreted jointly with sufficiency and the replication checks. We do \emph{not} remove self-controlled observation components, such as the end-effector pose, from $O_t$; instead, the body-memory probe detects when they carry memory, as discussed in \S\ref{sec:box}.

\begin{table}[htbp]\centering\small
\caption{Leak probe on A$'$ (tier 4, gap 6): held-out accuracy ($n=128$ episodes) of a classifier on raw observation windows, per phase and hidden variable; chance in parentheses. Bold entries are the steps at which the visibility convention marks the class visible.}
\label{tab:leak}
\begin{tabular}{lccccc}\toprule
phase & $\beta_1$ (0.50) & $\beta_2$ (0.50) & $\theta$ (0.03) & mass bin (0.25) & distractor $z$ (0.25)\\\midrule
identification (scan) & \textbf{1.00} & \textbf{1.00} & 0.10 & 0.26 & --- \\
gap$_1$ & 0.53 & 0.51 & 0.04 & 0.23 & --- \\
grasp & \textbf{0.99} & 0.47 & 0.05 & 0.24 & --- \\
gap$_2$ & 0.51 & 0.46 & 0.03 & 0.18 & 0.99 \\
place & 0.49 & \textbf{1.00} & 0.04 & 0.26 & --- \\
\bottomrule\end{tabular}
\end{table}

The solver applies backward partition refinement from Definition~\ref{def:gamma} to an enumerated finite POMDP constructed from the recorded data. We verify that the mapping from latent state to symbolic-observation and action sequences is deterministic. The solver computes $\sim_t$ and $\sim^s_t$, performs per-step checks of (A2), (A4), and transitivity, refuses to construct $\Gam_t$ when the relation is non-transitive, and returns exact conditional entropies. The symbolic observation is $(\text{phase},\text{probe bit},\text{visible classes})$. A class is defined as visible at $t$ when it can be decoded from the raw observation, using class-conditional end-effector separation $>5$\,cm after conditioning on the other factor. Without this convention, information already present in the raw observation would be incorrectly counted as memory. For Task A, the symbolic observation also contains a mass-half symbol during the sag phases.

\subsection{Realization and training variants}\label{app:design}
The residual proposal is $\tilde e_t=e_{t-1}+f(e_{t-1},g_o(O_t),g_a(A_{t-1}))$, followed by deterministic nearest-code quantization. Quantization is hard at training and evaluation; gradients use straight-through and soft-to-hard rate estimators. Unless stated otherwise, $K=16$, the code vector has dimension $32$, and the hidden width is $128$. The code is the only learned recurrent state transmitted across steps. In the matched-side-information controls, the transition and prior instead receive the frozen per-step binner $\Ob_t=f_t(O_t)$.

The tables use compact variant labels: $-$R is the plain imitation-and-rate objective; $-$RF adds behavioral future sufficiency (BFS); scaffold uses an auxiliary continuous recurrent path during training, annealed to zero before evaluation. A bypass retains that continuous path and is therefore excluded from sole-carrier minimality claims. DIACRITIC names this implementation, not a separate architectural contribution.

BFS predicts every prefix of a future continuation from a frozen code and the intervening observations and actions. Its ideal full-horizon minimizers have the compatibility property of Proposition~\ref{prop:bfs}; the finite sampled objective is only a surrogate. Event-agnostic future-behavior supervision instead predicts a random-offset action without future observations (\S\ref{sec:f4}). These objectives differ precisely in decoder side information. Conditioning the prior matches the conditional-rate objective, although the conditional and unconditional priors are empirically indistinguishable in the reported families.

\paragraph{Additional attribution evidence.} With a persistent GRU bypass, readout-code rates are $0.9$--$1.9$ bits on the toy, $1.06$ on A$'$, and $1.52$ in the corridor despite a two-bit behavioral requirement. One closed-loop policy instead externalizes the placement class in its pose: end-effector separation grows from $0.04$ to $2.07$\,cm during transport ($p=5\!\times\!10^{-14}$) at roughly $2$\,cm imitation error. Expert- and policy-occupancy sufficiency agree on $31/32$ reference policies, but the additional domains demonstrate that this agreement is not universal. These controls motivate the attribution requirement in \S\ref{sec:box}.

\subsection{Rate accounting}\label{app:accounting}
The quantity $\sum_tH(C_t\mid C_{t-1},O_t)$ assigns a cost of 0 both to a code that stores only behavioral information and to one that additionally stores nuisance information, because past observations can pass through $C_{t-1}$ without cost. Conversely, $\sum_t H(C_t\mid C_{t-1})$ over-penalizes information that remains permanently visible. At zero distortion, the per-step quantity $H(C_t\mid O_t)$ is characterized by Theorems~\ref{thm:exact-trans} and~\ref{thm:sandwich} and assigns a cost when information must persist beyond its visibility. Throughout the paper, ``pay'' refers to representational burden rather than communication cost.

\subsection{Side-information and estimator controls}\label{app:sideinfo}
\paragraph{Resolution sweep (N1).} The reported rates condition on $\Ob_t=f_t(O_t)$. To test whether the code carries raw-observation detail that $\Ob_t$ omits, we refine the conditioning by the raw end-effector position on a grid of decreasing cell size and recompute $\Hc(C_t\mid f_r(O_t))$ for the 16 plain $-$R runs of Figure~\ref{fig:fig3} (11 sufficient). In the first gap the sufficient seeds give $2.010$ bits at $\Ob_t$ and $2.004$, $2.004$, $1.983$, $1.906$, $1.550$ bits with cells of $10$, $5$, $2$, $1$, $0.5$\,cm. The decrease is fragmentation: permuting the codes within each $\Ob_t$ cell, which preserves $H(C\mid\Ob)$ exactly and destroys any dependence on position, gives the same decrease to within $0.008$ bit at every resolution (at $0.5$\,cm there are 237 occupied cells with 5.4 episodes each). The Miller--Madow values range from $2.012$ to $1.762$, and a two-fold cross-validated logistic model of the code from the raw 27-dimensional observation and $\Ob_t$ gives $2.067$ bits, an upper bound on $H(C_t\mid O_t)$ that does not fragment. Thus neither diagnostic provides evidence that raw observation predicts the first-gap code beyond $\Ob_t$. In the second gap the sufficient seeds carry $1.52$ bits, of which $0.04$--$0.08$ bit exceeds the permutation null and is explained by the end-effector position of four low-$\beta$ seeds; the cross-validated bound is $1.48$ bits, which we quote wherever a leak-resistant post-use rate is needed.
\paragraph{Estimator bias (N2).} Recomputing $S_\Gam$ with Miller--Madow corrections applied consistently to the joint and marginal entropies changes it by at most $0.014$ across 160 runs, and the number of sufficient seeds is identical at thresholds $0.8$, $0.9$ and $0.95$ in every cell: the scores are bimodal, with every sufficient run at $S_\Gam=1.000$ on every second-gap step and the best insufficient run at $0.37$.
\paragraph{Matched side information (E1).} The pre-specified control feeds the recurrent transition and the conditional prior a one-hot of $\Ob_t=f_t(O_t)$ computed online by a frozen per-step binner (phase from the observation, probe symbol from the probe channels, visible classes by nearest per-step class mean of the end-effector position, mass half by a per-step height threshold on Task~A); the binner agrees with the symbolic labels on every recorded step of A$'$ and on every Task~A step except the mass-half symbol, which it reproduces on $99.5\%$ of transport steps but only $80\%$ of placing steps, where the end-effector height no longer separates the masses (the symbol is behaviorally irrelevant, and no reported quantity at the place step involves the mass); no unseen symbol occurred in closed loop. Results (8 seeds per cell; 128 closed-loop episodes per seed): on A$'$ (2048-episode set, $N=1152$) $6/8$ seeds are sufficient at $\beta=0$ and $5/8$ at $10^{-3}$, with first-gap rate $2.00$ on the sufficient seeds and mean closed-loop success $0.84$ and $0.92$ (raw input: $6/8$, $5/8$, $0.96$, $0.97$); on Task~A all four values of $M$ give $8/8$, grasp rate $0.00$--$0.10$ bit, and success $0.97$--$1.00$.

Two stronger variants also restrict the behavioral head: a hierarchical policy whose symbolic-action head reads $(C_t,\Ob_t)$ and whose memory-free controller executes from $O_t$, and an intent policy whose head regresses the action from $(C_t,\Ob_t)$ and whose controller refines it from $O_t$. The hierarchical variant is sufficient on $8/8$ Task~A seeds at $M=4,8,16,32$ (behavioral error $0.000$; closed-loop success $0.73$--$0.79$), and the intent variant on $8/8$ at the tested $M=4,32$ (success $0.93$--$0.98$).

Both give $0/8$ on A$'$: the hierarchical head fails at cross-entropy weights $1$, $10$, and $30$, and the intent head at both values of $\beta$; they retain $\beta_1$ (used three steps after the reveal) but lose $\beta_2$ (nineteen steps) before the place step, whose error is at chance. A zero-distortion recurrent realization using only $(C_t,\Ob_t)$ nevertheless exists under Theorem~\ref{thm:exact-trans}, so these failures concern optimization under the stricter read-side architecture rather than the theoretical rate target.

\paragraph{Strict read-side architectures with forecast supervision (E1$\times$E4).} Table~\ref{tab:e1e4} reports the two read-side-restricted architectures of the previous paragraph trained with the behavioral-forecast target of Appendix~\ref{app:distill}; the training-time head also reads only $(\Ob_t,C_t)$. Across all 64 runs, $S_\Gam>0.9$ at every first- and second-gap step and $S_G>0.9$ at every grasp and place step (the minimum of either score over all checked runs and steps is $1.000$), and all eight cells pass the pre-specified numerical gate. The hierarchical head directly exposes the learned behavioral decision. For the intent architecture, nearest-class decoding of the intent vector is only a diagnostic proxy and is not the action executed by the controller; we therefore omit it from the behavioral-error column. Sufficiency and rate are evaluated from the code in both architectures.

\begin{table}[htbp]\centering\scriptsize\setlength{\tabcolsep}{3pt}
\caption{Strict read-side architectures on A$'$ with behavioral-forecast supervision (8 seeds per cell; theory $2.00\to1.00$; forecaster removed at evaluation). Without the forecast target the same architectures are sufficient on $0/8$ seeds in every cell.}
\label{tab:e1e4}
\begin{tabular}{llcccccc}\toprule
gap & architecture & $\beta$ & sufficient & first-gap rate & post-use rate & behav. error & closed-loop success \\\midrule
6 & hierarchical & 0 & 8/8 & $2.03\pm0.08$ & 1.00 & 0.000 & 1.00 \\
6 & hierarchical & $10^{-3}$ & 8/8 & $2.09\pm0.17$ & 1.00 & 0.000 & 1.00 \\
6 & intent & 0 & 8/8 & $2.02\pm0.05$ & 1.09 & -- & 0.96 \\
6 & intent & $10^{-3}$ & 8/8 & $2.03\pm0.08$ & 1.00 & -- & 0.94 \\
20 & hierarchical & 0 & 8/8 & $2.04\pm0.10$ & 1.00 & 0.027 & 0.96 \\
20 & hierarchical & $10^{-3}$ & 8/8 & $2.03\pm0.08$ & 1.00 & 0.000 & 1.00 \\
20 & intent & 0 & 8/8 & $2.00\pm0.00$ & 1.00 & -- & 0.88 \\
20 & intent & $10^{-3}$ & 8/8 & $2.00\pm0.00$ & 1.00 & -- & 0.84 \\
\bottomrule\end{tabular}
\end{table}

\subsection{Sufficiency-gate sensitivity}\label{app:gate}
The main text uses one gate: $S_\Gam>0.9$ at every step of both gaps with a positive exact requirement and $S_G>0.9$ at the first grasp and first place step. A relaxed gate that inspects only the second gap (mean $S_\Gam>0.9$) and placement (mean $S_G>0.9$) tests what survives the grasp but does not check the first-gap join. Of $2211$ A$'$-family runs, $928$ pass the relaxed gate and $879$ the full gate; no run passes the full gate only, except on the weighing task, where the relaxed gate is undefined. The task-informed forecast cells are unchanged ($8/8$ at gaps 6, 10, 20 and $M=16$) except $M=32$ ($6\to5$ of $8$), and the readout tasks are unchanged. Counts that change (relaxed $\to$ full): plain $-$R on A$'$ at $\beta=2\!\times\!10^{-3}$ and $3\!\times\!10^{-3}$, $4\to2$ and $5\to3$; $-$RF at $3\!\times\!10^{-3}$, $2\to1$; GRU bypass, $6\to1$; privileged join head at gap 20, $8\to5$, and at $K=128$ on the 4-bit tier, $2\to1$; forecast supervision on the 4-bit tier, $3\to0$ ($K=16$) and $5\to2$ ($K=64$); P-I with forecast supervision at $M=128$ and $512$, $5\to4$ and $2\to0$; label stride 4, $8\to7$. The bypass row is the instructive one: a continuous state can restore the placement class late, so a gate on the second gap alone mistakes late recovery for a correct recurrent state throughout. Every sufficient seed under the full gate carries $2.00$--$2.15$ bits in the first gap. Under the full gate, $479$ of the $615$ passes among the $1688$ runs with stored per-step requirements have every score above $0.999$, and thresholds of $0.8$ and $0.95$ give $650$ and $576$ passes.

\subsection{Observation side information: an injected-leak audit}\label{app:audit}
Memory diagnostics and direct observation probes exposed three benchmark flaws (the pixel frame captured one step late, Appendix~\ref{app:pixels}; the resting pose that remembered the mass, Appendix~\ref{app:weigh}; sub-millimetre contact artefacts, Appendix~\ref{app:solver}). These cases motivate a controlled audit. On A$'$ (gap 6, $N=1152$) we \emph{inject} a side channel of controlled magnitude: during the second gap the observed height of the object in the hand is shifted by $\pm\delta$ according to the pending place class, on top of the $2$\,mm sensor noise. The injection leaves the symbolic observation unchanged, so the requirement under that convention remains one bit in the second gap.

For each $\delta$ we record a dataset, train the plain sole-carrier policy ($\beta=10^{-3}$) on $16$ seeds, and evaluate every policy in closed loop with the same leak. The low-rate prediction and flag A were fixed before the runs; flags B and B$'$ were formulated afterwards.

\begin{table}[htbp]\centering\footnotesize\setlength{\tabcolsep}{4pt}
\caption{Injected-leak audit. Top: $16$ seeds per magnitude $\delta$. \emph{Correct} means held-out class error below $0.05$ at the first place step. Flag A, fixed before the runs, pairs correctness with second-gap rate below $0.9$ bit. The exploratory flags pair correctness with failure of the full gate (B) or second-gap memory alone (B$'$). Bottom: false alarms on separate clean recordings, conditioned on correctness.}
\label{tab:audit}
\begin{tabular*}{0.78\linewidth}{@{\extracolsep{\fill}}lcccccc@{}}\toprule
$\delta$ (mm) & 0 & 0.5 & 1 & 2 & 4 & 8 \\\midrule
closed-loop success & $0.59$ & $0.52$ & $0.61$ & $0.83$ & $0.84$ & $0.96$ \\
slot accuracy & $0.68$ & $0.68$ & $0.79$ & $0.93$ & $0.95$ & $0.98$ \\
held-out action error (MSE) & $0.107$ & $0.116$ & $0.053$ & $0.015$ & $0.022$ & $0.007$ \\
behaviorally correct seeds & $5$ & $5$ & $7$ & $15$ & $14$ & $16$ \\
seeds passing the sufficiency gate & $5$ & $4$ & $7$ & $4$ & $6$ & $3$ \\
flag A: correct, rate $<0.9$ bit & $0/5$ & $0/5$ & $0/7$ & $0/15$ & $1/14$ & $0/16$ \\
flag B: correct, full gate fails & $0/5$ & $1/5$ & $0/7$ & $11/15$ & $8/14$ & $13/16$ \\
flag B$'$: correct, gap 2 fails & $0/5$ & $0/5$ & $0/7$ & $11/15$ & $8/14$ & $4/16$ \\
\bottomrule\end{tabular*}
\par\smallskip
\begin{tabular*}{0.78\linewidth}{@{\extracolsep{\fill}}lccc@{}}\toprule
clean baseline & correct runs & flag B & flag B$'$ \\\midrule
unsupervised learners & $166$ & $9$ ($5.4\%$) & $0$ ($0.0\%$) \\
all learner configurations & $627$ & $51$ ($8.1\%$) & $18$ ($2.9\%$) \\
\bottomrule\end{tabular*}
\end{table}

At $\delta=8$\,mm, success and slot accuracy are higher and action error is lower than in the clean condition. Conventional performance metrics can therefore improve on a contaminated benchmark. Flag A fails as a sensitive detector: only one of the $96$ runs is flagged. Codes in behaviorally correct leaked policies can retain other content, with rates of $1.0$--$2.7$ bits in the $2$\,mm condition, so a code rate below the requirement is not a necessary signature of leakage.

\needspace{6\baselineskip}
\paragraph{Exploratory behavioral--representation disagreement.} After observing these data, we defined flag B: a correct place decision paired with failure of the full sufficiency gate. It flags $11/15$, $8/14$, and $13/16$ correct policies at $\delta=2,4,8$\,mm. Applied unchanged to $1432$ separate clean runs recorded before the rule existed, it flags $9/166$ correct unsupervised runs and $51/627$ correct runs overall (Table~\ref{tab:audit}). These are empirical alarm rates, not a leakage certificate.

Two mismatches prevent a theorem-based inference. Behavioral sufficiency (Theorem~\ref{thm:sufficiency}, Appendix~\ref{app:instantaneous}) assumes zero distortion, whereas ``correct'' here permits $5\%$ classification error. A uniform binary target with symmetric $4\%$ error can have $H(G\mid C)=h_2(0.04)\approx0.242$ bit and $S_G\approx0.758$, satisfying our correctness criterion while failing the gate without any leak. Moreover, Theorem~\ref{thm:exact-trans} concerns reproduction over the full horizon; flag B pairs one place decision with a gate that also tests the first gap and grasp. Most clean firings involve a correctly remembered place class but a lost grasp class. They are genuine insufficiencies and do not imply a side channel.

Flag B$'$ instead pairs the place decision only with second-gap memory. It flags $0/166$ correct clean unsupervised runs and $18/627$ overall. Its injected-leak counts are $11/15$, $8/14$, and $4/16$ at $2,4,8$\,mm. At $8$\,mm the code can copy the visible leak during the second gap, restoring $S_\Gam$ there; thus even this matched signal is not monotone in leak magnitude. Both rules are exploratory diagnostics requiring follow-up.

\paragraph{Direct observation probes.} The raw-observation probes in Appendix~\ref{app:solver} directly test whether $O_t$ reveals a behavioral class not credited to $\Ob_t$, using held-out prediction and the corresponding symbolic-observation baseline. Such probes address the side-information discrepancy itself. Behavioral--representation disagreement can motivate that check, but neither it nor the absence of a low code rate establishes whether a leak is present.

\subsection{Independent-recording replication}\label{app:replication}
To check that the exact agreement between learned and theoretical rates is not a property of the recordings on which the solver was built, we recorded fresh expert datasets with new seeds (A$'$ gap 6: 1024 episodes; Task~A $M=4$ and $32$: 512 each, $M=512$: 1024, all stratified over the masses) and re-evaluated 96 saved models under teacher forcing, rebuilding the solver from the new recordings. On every dataset, the rebuilt solver returns the same $H(\Gam_t\mid\Ob_t)$, $H(G_{E,t}\mid\Ob_t)$ and $H(\Gam^s_t\mid\Ob_t)$ at every step (maximum difference $0.000$), together with the same (A4)-failure steps and visibility convention. Every A$'$ configuration preserves its seedwise sufficiency verdict; the first-gap rate of sufficient seeds is $2.00$--$2.02$ bits and post-use rates move by at most $0.03$ bit. DIACRITIC also remains sufficient on every Task~A seed ($8/8$, $8/8$, and $16/16$ at $M=4,32,512$), with grasp rate $0.02$--$0.10$ bit and at most $0.05$ bit about mass. System identification remains at $1/8$ and $0/8$ for $M=4$ and $32$. At $M=512$, its placement-sufficiency count falls from $7/8$ on the training recordings to $4/8$ on fresh recordings; averaged over all seeds, its fresh-data grasp rate is $8.70$ bits, $I(C;\mathrm{mass}\mid\Ob)=1.26$ bits, and mean placement $S_G$ changes from $0.923$ to $0.902$. This threshold sensitivity does not alter the rate separation, but it precludes a claim of seedwise replication for that baseline (released code).

We also re-evaluated 72 models from the large-$M$ P-I and coverage-controlled Task~A experiments on independently recorded, stratified datasets. For forecast-distilled P-I, the training-to-fresh all-seed rates are $2.36\to2.48$ at $M=128$, $2.63\to2.72$ at $M=512,N=448$, and $2.31\to2.38$ at $M=512,N=1000$; the corresponding late-gate counts are $5\to3$, $3\to3$, and $2\to0$ of 8. The capacity-relieved P-I system-identification rates replicate as $6.45\to6.49$ and $8.72\to8.69$ bits. In the fully covered Task~A cell, DIACRITIC and system identification both remain place-sufficient on $8/8$ seeds; their training-to-fresh rates are $0.01\to0.01$ and $8.17\to8.19$ bits, and their mass information is $0.01\to0.01$ and $1.43\to1.36$ bits. The same policies obtain closed-loop success $0.99$ and $0.23$, respectively. Thus full mode coverage does not remove the rate or closed-loop separation on Task~A, whereas large-$M$ P-I sufficiency is not stable across recordings.

\subsection{Inference latency and storage footprint}\label{app:cost}
\paragraph{Measurement.} All three carriers use the paper's widths ($d=32$, hidden width $128$; the Transformer has two layers and four heads). One policy step at history length $T$ is timed as the median of $200$ steps after $20$ warm-up steps, on one CPU core at batch $1$ and on a GPU at batch $64$ (Table~\ref{tab:cost}). The Transformer re-encodes its full token history at every step, so its per-step cost grows with $T$ and its carried state is the token history itself ($64(T-1)$ floats: $8.2$\,kB at $T=33$, $262$\,kB at $T=1024$). A key--value cache would remove the re-encoding but still stores $2LTh$ floats per episode, about $2$\,kB per step at these widths, so the footprint still grows with the episode. The recurrent carriers keep a fixed state of $128$ bytes (GRU, $32$ floats) or $129$ bytes (DIACRITIC, the code vector plus one code index) at a constant per-step cost ($0.55$\,ms on the CPU, $1.1$--$1.2$\,ms on the GPU). The comparison isolates the cost of committing information in advance; it is not a recommendation of one deployment architecture.

\begin{table}[htbp]\centering\footnotesize\setlength{\tabcolsep}{4pt}
\caption{Per-step inference latency (median of 200 steps) as a function of the history length $T$, with the widths of the paper's models. The Transformer re-encodes its token history at every step (no key--value cache), so these timings describe that implementation; its carried history is $64(T-1)$ floats (8.2\,kB at $T=33$, 262\,kB at $T=1024$), compared with 128 bytes for the GRU and 129 bytes for DIACRITIC (state plus an 8-bit code index).}
\label{tab:cost}
\begin{tabular}{lcccccc}\toprule
& \multicolumn{3}{c}{CPU, batch 1 (ms)} & \multicolumn{3}{c}{GPU, batch 64 (ms)} \\
$T$ & Transformer & GRU & DIACRITIC & Transformer & GRU & DIACRITIC \\\midrule
33 & 1.13 & 0.55 & 0.54 & 1.77 & 1.12 & 1.18 \\
128 & 1.78 & 0.55 & 0.55 & 1.83 & 1.18 & 1.17 \\
256 & 3.51 & 0.56 & 0.53 & 2.12 & 1.15 & 1.18 \\
512 & 9.03 & 0.55 & 0.54 & 5.45 & 1.10 & 1.13 \\
1024 & 29.3 & 0.56 & 0.55 & 16.8 & 1.13 & 1.17 \\
\bottomrule\end{tabular}
\end{table}

\subsection{Training-time cost of forecast supervision}\label{app:traincost}

\begin{table}[htbp]\centering\footnotesize\setlength{\tabcolsep}{4pt}
\caption{Cost bookkeeping on A$'$ gap 20 (one B200 GPU; medians over runs). The forecaster and the auxiliary head exist only during training.}
\label{tab:traincost}
\begin{tabular}{@{}R{0.34\linewidth}ccR{0.30\linewidth}@{}}\toprule
 & parameters & training time & at deployment: carried state / per-step latency \\\midrule
forecaster, 3000 steps (default) & $0.49$--$0.51$\,M & $11$--$12$\,s & removed \\\addlinespace
random-offset forecaster, 100 steps (timing only) & $0.51$\,M & $\approx1$\,s & removed \\\addlinespace
compact student, 10\,000 steps, with or without the auxiliary head & $0.19$\,M & $\approx800$\,s & $129$ bytes / $0.55$\,ms \\\addlinespace
full-history Transformer baseline, 10\,000 steps & $0.49$\,M & $\approx650$\,s & $64(T-1)$ floats / $1.1$--$29$\,ms for $T=33$--$1024$ \\
\bottomrule\end{tabular}
\end{table}

Fitting the default forecaster adds about $1.5\%$ to the student's training time; the measured 100-step random-offset forecaster fit takes under $0.2\%$. Separate \emph{task-informed} controls at A$'$ gap 20 yield $8/8$ sufficient seeds with a 100-step forecaster and $7/8$ with recorded-action targets (Appendix~\ref{app:generic}). These learning results do not establish either performance claim for the frozen event-agnostic configuration. Deployment costs remain those of Appendix~\ref{app:cost}.

\subsection{Compute and data collection}\label{app:resources}
We collected demonstrations on one RTX~4090 using 64 environments, requiring $\approx$8\,s per 64 episodes. Offline training used one GPU per run on an NVIDIA B200, with an RTX PRO 6000 as fallback; a 10k-step run requires $\approx$12--25\,min. Closed-loop evaluation was performed in Isaac Sim on an RTX PRO 6000, requiring $\approx$65\,s per 128 episodes. Metered scheduler usage over all cluster allocations for this work, including development, failed, and unreported runs, comprises $3{,}033$ training tasks: 403.1 GPU-hours on B200 across $1{,}925$ tasks and 178.0 GPU-hours on RTX PRO 6000 across $1{,}108$ tasks. The $2{,}993$ closed-loop evaluation tasks used 66.9 GPU-hours and the 52 independent re-recording tasks 2.5 GPU-hours on RTX PRO 6000, for a total of $\approx$651 GPU-hours. Corridor training, the external benchmarks, and teacher-forced re-evaluation on independent recordings ran in 464 CPU-only tasks. Local development, data collection, and the pixel evaluation used one RTX~4090 and were not metered. The two training GPU types were used interchangeably for identical jobs, and we do not report a timing comparison between them.

\FloatBarrier
\needspace{8\baselineskip}\section{Validation of the behavioral memory target}\label{app:validation}
\subsection{Finite toy validation}\label{app:toy}

\begin{figure}[htbp]\centering
\includegraphics[width=\linewidth]{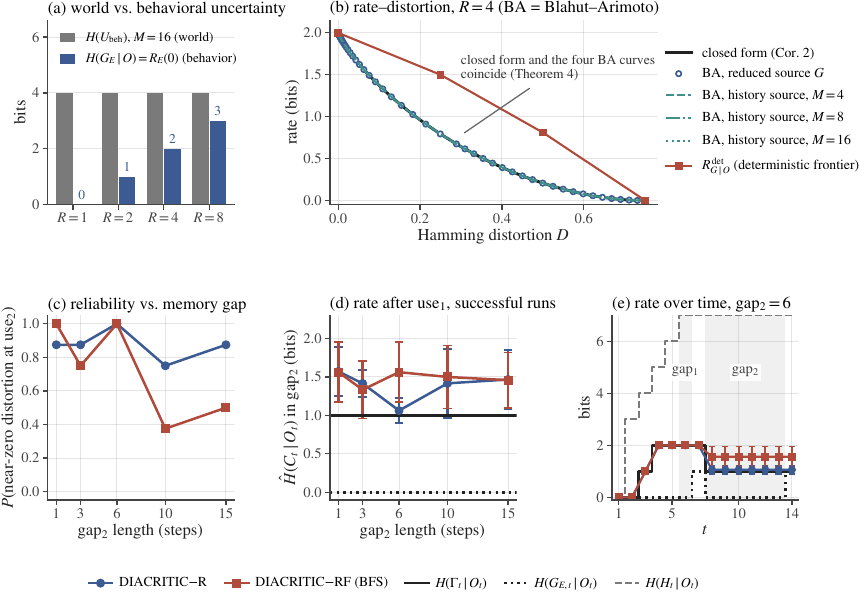}
\caption{\textbf{Toy validation.} (a) $R_E(0)=\log_2R$ is independent of $M$: the world uncertainty $H(U_{\rm beh})$ is fixed at $\log_2 16$ while the behavioral rate is $\log_2 R$. (b) Blahut--Arimoto on the full history source agrees with the closed form for the reduced source (Theorem~\ref{thm:reduction}) for $M=4,8,16$; the deterministic frontier lies above the stochastic curve. (c)--(e) The gap toy (unified configuration, 8 seeds): reliability as a function of the gap$_2$ length for $-$R and $-$RF, the learned post-use rate of successful runs relative to $H(\Gam_t\mid O_t)=1$, and the per-step rate at gap$_2=6$ against the three-level staircase. Panels (c)--(e) use near-zero distortion at the second decision as their diagnostic.}
\label{fig:toy}
\end{figure}

\paragraph{Toy results.}
All toy results can be reproduced from the released result files; these experiments use CPUs and the stated numbers of seeds.

\emph{Gate (T0).} On the reveal toy, the solver returns $H(\Gamma_t|\bar O_t)=[0,0,1,2]$ bits. On the gap toy, it returns $[0,0,1,2,2,2,2,1,1,1,1]$, corresponding to 2 bits at grasp, 1 bit in the gap, and 0 after use. It reports bounds without constructing $\Gam_t$ on a hand-constructed non-transitive instance and separately flags a violation of (A2). On a re-reveal instance, it returns a $\Gamma$ staircase of $2\to1\to0$, while $\Gamma^s$ remains at 2, demonstrating that the upper bound in Theorem~\ref{thm:sandwich} can be strict.

\emph{Exact rate--distortion (Figure~\ref{fig:toy}).} Blahut--Arimoto on the reduced source agrees with the closed form in Corollary~\ref{cor:closed} to $7.8\times10^{-16}$. For $M\in\{4,8,16\}$, the history source gives the same curve, confirming both Theorem~\ref{thm:reduction} and independence from $M$ to $1.3\times10^{-15}$. The deterministic frontier is $(2.0,0),(1.5,.25),(0.81,.5),(0,.75)$, compared with $2.0/0.79/0.21/0$ for the stochastic frontier (Proposition~\ref{prop:det}).

\emph{Enumeration regime.} When the population conditional is available ($\beta\in[0,0.1]$, 3 seeds), both -R and -RF recover $\hat H(C|\bar O)=[0,0,1,2]$ exactly, with $S_\Gamma=1$, $\hat I(C;z|\bar O)=0$, and $D_{TV}\le0.002$. At $\beta=0.3$, the code collapses (0.67/1.67 bits, $D=0.042$).

\emph{Finite-sample selective forgetting.} With $n=48$, $p=0.7$, and 4 seeds, the nuisance information $\hat I(C;z|\bar O)$ decreases as $0.81\to0.55\to0.05$, and the surplus $\hat H(C|\Gamma,\bar O)$ decreases as $1.54\to0.98\to0.10$, when $\beta$ changes from $0\to0.1\to0.3$. At the decision step, $S_\Gamma$ changes as $0.54\to0.94\to0.56$, with the final value reflecting collapse. On the gap toy ($n=512$, gap 6), the gap-to-gap rate changes from $3.27\to2.94$ at $\beta=0.01$ to $1.90\to1.40$ at $\beta=0.1$, while $\hat I(C;z|\bar O)$ decreases from $0.36\to0.01$. These measurements show reduced surplus under stronger rate pressure, together with a loss of sufficiency when the pressure becomes too large.

\emph{Factorized ablation.} At gap 6 with 4 seeds and $\beta=0.03$, DIACRITIC is sufficient on 3/4 seeds at a gap-1 rate of 2.00; the \emph{nonpersistent} variant reaches 0/4, and the \emph{uncond} variant reaches 4/4 and is indistinguishable on this toy. The \emph{continuous} carrier succeeds only at $\beta=0.003$, with a 6.9-bit KL bound and 0.32 bit of nuisance information. This bound does not provide a comparably tight minimality measurement and is not an exact rate. The \emph{bypass} variant, in which a continuous GRU state persists across time, succeeds on 20/24 runs across 12 seeds$\times$2 learning rates but reports $\hat H(C|\bar O)=0.9$--$1.9<2.00$ at $D=0$. Because memory bypasses the code, this readout rate underestimates the memory rate, motivating the sole-carrier requirement.

\emph{Violation of (A2).} Under a Bernoulli(0.3) reveal loss, the solver flags (A2). Replacing the privileged quotient by the history-conditional action law $P_E(A_t\mid H_t)$ defines an observational relation $\Gamma^{\rm obs}$ with $H(\Gamma^{\rm obs}\mid\bar O)=[0,0,1.58,2.58]$, where ``lost'' forms a third class. Both objectives recover this observational rate, with $S_{\Gamma^{\rm obs}}=1$, $D_{TV}\le0.007$, and zero nuisance information. This does not restore reproducibility of the demonstrator's privileged coupling, which lies outside the theorem's assumptions (cf.\ Remark~4.7 of \citet{yu2026capability}).

\emph{Discovery aids on the toy.} Across gaps $\{6,10,15\}$ and 4 seeds, future-sufficiency-triggered code refinement does not change the number of sufficient seeds: -R gives 3,3,4 with and without refinement, while -RF gives 3,2,2 versus 3,2,1. The annealed continuous scaffold is modestly more reliable, with 4,4,3 for scaffold-R and 3,4,4 for scaffold-RF.

\subsection{A stochastic expert: the quotient is defined on action laws}\label{app:stochexpert}
Definition~\ref{def:quotient} groups hidden states by the expert's action \emph{distribution}, whereas the other experiments use deterministic experts. In a finite check three equiprobable hidden states are displayed at $t=1$ and hidden afterwards; at $t=3$ the expert draws one of two actions with $P(L\mid s)=p,p,1-p$. States 0 and 1 induce the same non-degenerate law, and the solver merges them: $H(G_{E,t}\mid O_t)=H(\Gam_t\mid O_t)=h_2(1/3)=0.918$ bit rather than $\log_23=1.585$ (transitive; (A4) holds). For the learner (cross-entropy on sampled demonstrations, $K=8$) the prediction that the rate would be within $0.1$ bit of $0.918$ on every seed failed: at $p=0.7$ the quotient is found on $4/8$ seeds without rate pressure and on none with $\beta\ge0.01$ (the code collapses to zero bits, KL $0.2$ bit; distinguishing the third state is worth only $0.1$ bit of likelihood at one step); at $p=0.9$ it is found on $6/8$ seeds at $\beta=0$ and on $1$--$4$ of $8$ with rate pressure. In all $64$ runs the two equal-law states share a code and no run uses three codes: whenever a memory is learned it is the distributional quotient, and the observed failures collapse the code rather than separate the equal-law states (cf.\ \S\ref{sec:f4}).

\subsection{Task A and readout controls}\label{app:controls}

\begin{figure}[htbp]\centering\includegraphics[width=\linewidth]{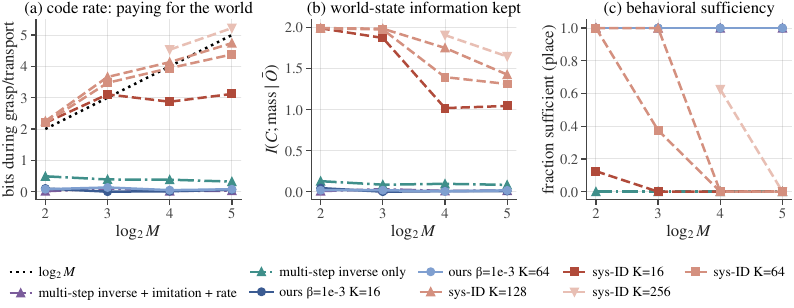}
\caption{\textbf{Task A controls} (8 seeds per point). Left: code rate during grasp and transport as a function of $\log_2 M$. The capacity-relieved system-identification rate rises approximately linearly with $\log_2 M$, whereas DIACRITIC with $K=16$ or $64$ remains at $\le0.13$ bit. Middle: retained mass information. Right: fraction of seeds sufficient at the place step; the multi-step-inverse-only memory never reaches sufficiency.}
\label{fig:sysK}
\end{figure}

\textbf{Codebook capacity for system identification.} At $K=16$, a code can represent at most $4$ bits, whereas identifying $M=32$ masses requires $5$ bits. The $0/8$ sufficiency of the $K=16$ system-identification baseline at large $M$ may therefore reflect limited capacity. Table~\ref{tab:sysK} repeats this baseline with $K=64$, $128$, and $256$, holding the data, training steps, and $\beta$ fixed. Greater capacity is necessary but not sufficient: system identification becomes sufficient at $M=4$ with $K=64$, at $M\le8$ with $K=128$, and on $5/8$ seeds at $M=16$ with $K=256$, but remains insufficient at $M=32$. With the largest codebook tested at each $M$, its rate increases from $2.25$ to $5.22$ bits; closed-loop success ranges from $0.15$ to $0.79$. DIACRITIC with $K=64$ is unchanged at every $M\le32$: it is sufficient on $8/8$ seeds, carries $0.05$--$0.13$ bit with $I(C;\mathrm{mass}\mid\Ob)\le0.02$, and reaches success of $0.99$--$1.00$. System identification is therefore capable of solving the smaller instances, but its learned rate grows with world complexity and does not deliver comparable control accuracy through a shared carrier. In settings where $K$ is not fixed by a theoretical fiber bound, a practical held-out selection rule is to increase $K$ until the sufficiency gate is first met and the measured rate stabilizes across successive values, then freeze $K$ before final evaluation.

\textbf{Multi-step inverse objectives.} Multi-step inverse kinematics and agent-centric state discovery \citep{mhammedi2023representation,lamb2023guaranteed} train an encoder to predict $A_t$ from the representation at $t$ and a future observation $O_{t+j}$. We add a recurrent analogue to our model. This is a MusIK/ACSD-\emph{style} baseline rather than a direct implementation of either published algorithm, and uses a head $q(A_t\mid C_t,O_t,O_{t+j})$ with $j\sim U\{1,\dots,8\}$. We evaluate two variants. In \emph{inverse only}, the inverse head and VQ losses shape the memory, the policy head receives a detached $(O_t,C_t)$, and no rate term is used. In \emph{inverse + imitation + rate}, the inverse head is added to our objective.

Table~\ref{tab:sysK} shows that inverse-only memory never reaches sufficiency in these tests: it obtains $0/8$ at every $M$, the recorded class-error metric is $0.30$--$0.33$, and closed-loop success of $0.03$--$0.06$. Instead, it retains $0.3$--$0.6$ bit about the mass. A future observation can reveal where the object was placed, allowing the inverse head to predict an earlier action without retaining the slot across the gap. This explains why the inverse target need not induce the anticipatory representation; the learned nuisance information is consistent with its emphasis on dynamics. When added to imitation and the rate term, the inverse objective has no measurable effect: the model reaches $8/8$, matches the behavioral code rate and $I(C;\mathrm{mass}\mid\Ob)$, and obtains closed-loop success of $0.99$--$1.00$. For demonstrations from a deterministic expert, the inverse problem does not depend on the nuisance, allowing the rate term to remove it. Empirically separating the objectives for a layer-2 nuisance, an agent-centric variable ignored by the expert, requires behaviorally equivalent action randomness whose observable effect depends on that nuisance; Appendix~\ref{app:grid} provides such a construction.

\begin{table}[htbp]\centering\small
\caption{Task A controls, 8 seeds per cell. Top block: seeds sufficient at the place step / closed-loop success (128 episodes per seed). Bottom block: code rate during grasp (bits) / $I(C;\mathrm{mass}\mid\Ob)$ (bits).}
\label{tab:sysK}
\begin{tabular}{lcccc}\toprule
& $M=4$ & $M=8$ & $M=16$ & $M=32$\\\midrule
\multicolumn{5}{@{}l}{\emph{sufficient seeds / closed-loop success}}\\
ours, $K=16$ & 8/8 / 0.91 & 8/8 / 0.96 & 8/8 / 0.98 & 8/8 / 0.98\\
ours, $K=64$ & 8/8 / 1.00 & 8/8 / 1.00 & 8/8 / 0.99 & 8/8 / 1.00\\
sys-ID, $K=16$ & 1/8 / 0.11 & 0/8 / 0.08 & 0/8 / 0.06 & 0/8 / 0.07\\
sys-ID, $K=64$ & 8/8 / 0.71 & 3/8 / 0.18 & 0/8 / 0.08 & 0/8 / 0.09\\
sys-ID, $K=128$ & 8/8 / 0.79 & 8/8 / 0.34 & 0/8 / 0.12 & 0/8 / 0.16\\
sys-ID, $K=256$ & -- & -- & 5/8 / 0.15 & 0/8 / 0.16\\
multi-step inverse only & 0/8 / 0.03 & 0/8 / 0.04 & 0/8 / 0.05 & 0/8 / 0.06\\
inverse + imitation + rate & 8/8 / 1.00 & 8/8 / 0.99 & 8/8 / 0.99 & 8/8 / 0.99\\\midrule
\multicolumn{5}{@{}l}{\emph{code rate during grasp / $I(C;\mathrm{mass}\mid\Ob)$ (bits)}}\\
ours, $K=16$ & 0.10 / 0.04 & 0.00 / 0.00 & 0.01 / 0.01 & 0.06 / 0.01\\
ours, $K=64$ & 0.08 / 0.02 & 0.13 / 0.02 & 0.05 / 0.00 & 0.08 / 0.02\\
sys-ID, $K=16$ & 2.20 / 1.99 & 3.11 / 1.87 & 2.87 / 1.02 & 3.12 / 1.04\\
sys-ID, $K=64$ & 2.20 / 1.99 & 3.47 / 1.98 & 3.94 / 1.40 & 4.38 / 1.31\\
sys-ID, $K=128$ & 2.25 / 1.99 & 3.66 / 1.99 & 4.13 / 1.75 & 4.74 / 1.43\\
sys-ID, $K=256$ & -- & -- & 4.53 / 1.91 & 5.22 / 1.65\\
multi-step inverse only & 0.49 / 0.13 & 0.39 / 0.09 & 0.38 / 0.10 & 0.32 / 0.08\\
inverse + imitation + rate & 0.01 / 0.01 & 0.10 / 0.03 & 0.02 / 0.01 & 0.03 / 0.01\\
\bottomrule\end{tabular}
\end{table}

\paragraph{World complexity beyond five bits.} Table~\ref{tab:bigM} extends Task~A to $M=128$ and $512$ with datasets stratified over the masses (each mass recorded 4 and 2 times). The predictions were fixed before the runs: DIACRITIC rate $\le0.2$ bit during grasp and transport, $I(C;\mathrm{mass}\mid\Ob)\le0.05$ bit, closed-loop success $\ge0.9$. System identification at $K=16$ is a capacity-stress diagnostic ($\log_2K<\log_2M$ by construction); the capacity-relieved rows use $K=256$ and $K=1024$. Training uses $N=448$ episodes at every $M$, which covers all 128 masses at $M=128$ but only 352 of 512 at $M=512$; the $N=896$ row is a data-scale sensitivity control, and the $N=1000$ rows, in which every mass appears in training, are the coverage-controlled comparison. With full coverage, system identification at $K=1024$ becomes place-sufficient on all seeds (also on fresh recordings, Appendix~\ref{app:replication}), yet still stores $8.2$ bits and succeeds in closed loop on $0.23$ of episodes against $0.99$ for DIACRITIC: the separation is not a coverage artifact.

\begin{table}[htbp]\centering\footnotesize\setlength{\tabcolsep}{4pt}
\caption{Task A at $M=128$ and $512$ (8 seeds per row; 128 closed-loop episodes per seed).}
\label{tab:bigM}
\begin{tabularx}{\linewidth}{@{}l>{\raggedright\arraybackslash}Xcccc@{}}\toprule
$M$ & objective & sufficient (place) & grasp rate (bits) & $I(C;\mathrm{mass}\mid\Ob)$ & success \\\midrule
128 & ours, $K=16$, $\beta=10^{-3}$ & 8/8 & 0.03 & 0.01 & 0.99 \\
128 & ours, $K=16$, $\beta=0$ & 8/8 & 0.18 & 0.02 & 0.99 \\
128 & sys-ID, $K=16$ & 0/8 & 3.73 & 0.62 & 0.04 \\
128 & sys-ID, $K=256$ & 3/8 & 6.84 & 1.40 & 0.17 \\
512 & ours, $K=16$, $\beta=10^{-3}$ & 8/8 & 0.03 & 0.01 & 0.99 \\
512 & ours, $K=16$, $\beta=0$ & 8/8 & 0.10 & 0.02 & 0.98 \\
512 & sys-ID, $K=16$ & 0/8 & 3.70 & 0.05 & 0.06 \\
512 & sys-ID, $K=1024$ & 7/8 & 8.71 & 1.32 & 0.19 \\
512 & ours, $K=16$, $\beta=10^{-3}$, $N=896$ (data-scale control) & 8/8 & 0.01 & 0.00 & 1.00 \\
512 & ours, $K=16$, $\beta=10^{-3}$, $N=1000$ (every mass in training) & 8/8 & 0.01 & 0.01 & 0.99 \\
512 & sys-ID, $K=1024$, $N=1000$ (every mass in training) & 8/8 & 8.17 & 1.43 & 0.23 \\
\bottomrule\end{tabularx}
\end{table}

\paragraph{Non-zero behavioural memory at growing world complexity (readout task).}
Task~A has exact minimum $0$ during grasp because mass does not affect the expert's action choice. To test the non-zero counterpart of the world-complexity comparison, we retain the Franka grasp--transport--place task, attach the object kinematically so that mass is not re-revealed, and display the hidden class $\theta\in[M]$ on a 9-channel binary readout during three scan steps. The expert grasps according to the quartile of $\theta$ ($2$ bits, used $3$ steps after the readout) and places according to its half (\emph{readout-2}: $1$ bit, used $\approx20$ steps later) or octile (\emph{readout-3}: $8$ slots separated by at least $21$\,cm). These nested threshold classes admit the exact gap-to-transport profiles $2\to1\to0$ and $3\to3\to0$ at every $M\in\{32,128,512\}$; the solver certifies A2, A4 and transitivity, while $H(H_t\mid\Ob_t)=\log_2M+1$ bits during transport. During the first three side-approach steps, the end-effector pose reveals the grasp quartile, temporarily making the pending readout-2 slot conditionally deterministic; its conditional rate returns to $1$ bit once this side information disappears. This is temporary redundancy with the current observation, not expiration: the recurrent state must preserve the distinction until placement. The visibility convention accordingly uses unconditional class means for the nested factor. Every dataset contains $1024$ stratified episodes ($N=448$ for training), and an independent recording provides the replication set. Table~\ref{tab:readout} reports teacher-forced re-evaluation on both recordings and closed-loop evaluation over $128$ episodes per policy.

An analog single-channel display was tried first; with eight threshold classes even the full-history forecaster could not decode the slot (held-out error $0.28$--$0.51$), so we use the binary display.
On readout-2 the plain rate-penalised learner ($-$R, $K=16$, $\beta=10^{-3}$) is sufficient on $8/8$, $7/8$ and $8/8$ seeds at $M=32,128,512$, with first-gap rate $2.00$--$2.03$ bits (theory $2.00$) and transport rate $1.06$--$1.23$ (theory $1.00$). Forecast supervision reaches $8/8$ at every $M$ with rates $2.02$--$2.08$ and $1.09$--$1.27$ bits, and closed-loop success $0.95$, $0.90$ and $0.96$ ($-$R: $0.94$, $0.85$, $0.87$). The behavioral learners reproduce every verdict on the fresh recordings, with rate changes of at most $0.01$ bit. System identification stores $3.3$ bits at $K=16$ and $4.8$, $5.6$ and $7.5$ bits with the largest codebook tested ($K=256,256,1024$; $3.7$--$7.0$ bits about $\theta$), yet reaches only $0.19$--$0.26$ closed-loop success with slot accuracy $0.60$--$0.73$. Thus the behavioral rate remains fixed at a non-zero value while the system-identification rate grows with slope $0.67$ per bit of $\log_2M$.

Readout-3 probes the next representational scale. Forecast supervision reaches sufficiency on $6/8$, $5/8$ and $4/8$ seeds, identically on the fresh recordings, and the sufficient codes carry $3.03$--$3.17$ bits against the $3.00$-bit prediction. Closed-loop evaluation reveals the remaining occupancy gap: across all seeds, success is $0.15$--$0.28$ and slot accuracy $0.47$--$0.59$; among teacher-forced sufficient seeds, success is $0.30$--$0.48$ despite $0.96$ teacher-forced place accuracy. A $K=8$ carrier, which meets the cardinality lower bound $\max_{t,o}|\Gam_t|_o=8$, trains to sufficiency on $0/24$ seeds, whereas $K=16$ realizes the three-bit code on $15/24$ seeds. The unsupervised learner reaches that code on $6/24$ seeds; at $M=128$ and $512$, those sufficient seeds attain $0.84$ closed-loop success. These results establish three-bit offline attainability while locating reliable closed-loop learning at two bits.

\begin{table}[htbp]\centering\footnotesize\setlength{\tabcolsep}{4pt}
\caption{Readout task, 8 seeds per cell, $K=16$ unless noted (sys-ID largest $K$: 256, 256, 1024); each cell lists $M=32$ / $128$ / $512$. Rates are all-seed means except for the $K=16$ forecast row on readout-3, which averages sufficient seeds; closed-loop success is always averaged over all seeds. Behavioral-learner verdicts replicate on the fresh recordings; the largest-$K$ readout-2 sys-ID count at $M=512$ changes from $1/8$ to $0/8$. Theory is in brackets.}\label{tab:readout}
\begin{tabularx}{\linewidth}{@{}l>{\raggedright\arraybackslash}Xcccc@{}}\toprule
task & learner & sufficient & first-gap rate & transport rate & closed loop \\ \midrule
readout-2 & $-$R & 8/8, 7/8, 8/8 & 2.00, 2.02, 2.03 [2.00] & 1.10, 1.23, 1.06 [1.00] & 0.94, 0.85, 0.87 \\
readout-2 & forecast & 8/8, 8/8, 8/8 & 2.02, 2.08, 2.03 [2.00] & 1.11, 1.27, 1.09 [1.00] & 0.95, 0.90, 0.96 \\
readout-2 & sys-ID, largest $K$ & 4/8, 6/8, 1/8 & 4.82, 5.56, 7.49 & 4.49, 4.74, 7.93 & 0.25, 0.26, 0.19 \\
readout-3 & forecast & 6/8, 5/8, 4/8 & 3.03, 3.09, 3.17 [3.00] & 3.06, 3.01, 3.09 [3.00] & 0.25, 0.28, 0.15 \\
readout-3 & forecast, $K=8$ & 0/8, 0/8, 0/8 & 2.73, 2.92, 2.87 & 2.49, 2.46, 2.76 & 0.05, 0.05, 0.05 \\
readout-3 & sys-ID, largest $K$ & 6/8, 0/8, 0/8 & 4.83, 5.78, 7.94 & 4.78, 5.78, 8.31 & 0.32, 0.07, 0.03 \\ \bottomrule
\end{tabularx}
\end{table}

\paragraph{Separating the information and control costs of system identification.}
The sys-ID baseline decodes $\theta$ from $(O_t,C_t)$, forcing world and behavioral information through the same rate-penalized codebook. We separate these roles on Task~A at $M=32$ (all masses observed in training) and at the full-coverage $M=512$ cell ($N=1000$). A \emph{dual carrier} adds a recurrent hard-VQ code $C^{\rm id}_t$ ($K=256$ / $1024$) read only by the $\theta$ head, while the policy reads only $(O_t,C_t)$ with $K=16$; both codes are rate-penalized and share the observation and action encoders. We also test a shared carrier with the $\theta$ loss down-weighted to $0.1$, and a dual carrier that stops the identification gradient at the shared encoders. The dual models reach $0.90$--$0.97$ closed-loop success, compared with $0.07$--$0.52$ across the displayed shared-carrier controls. Their behavioral codes carry $0.28$--$0.49$ bit with at most $0.03$ bit about mass, while the identification codes carry $4.1$--$6.0$ bits, including $2.5$--$5.2$ bits about $\theta$. Down-weighting alone leaves $4.5$ / $6.3$ bits in the shared carrier and reaches $0.38$ / $0.52$ success; stop-gradient changes the dual result little. Independent recordings reproduce every behavioral sufficiency verdict and rate to within $0.04$ bit. Thus system identification retains a distinct, additive rate burden, whereas its control penalty is not a consequence of that rate alone: in these controls it appears when both objectives share one regularized carrier.

\begin{table}[htbp]\centering\footnotesize\setlength{\tabcolsep}{4pt}
\caption{Task A, sys-ID controls (8 seeds): transport rates of the behavioural code $C$ and of the identification code $C^{\rm id}$ (bits), mass information carried by $C$, $\theta$ information carried by $C^{\rm id}$, and closed-loop success (mean, min). ``shared'' = the $\theta$ head reads $C$ itself.}\label{tab:dual}
\begin{tabular}{@{}lccccc@{}}\toprule
configuration & $H(C\mid\Ob)$ & $I(C;\mathrm{mass}\mid\Ob)$ & $H(C^{\rm id}\mid\Ob)$ & $I(C^{\rm id};\theta\mid\Ob)$ & success \\ \midrule
\multicolumn{6}{@{}l}{\emph{$M=32$, $N=448$ (every mass in training)}} \\
DIACRITIC ($K{=}16$) & 0.34 & 0.01 & -- & -- & 0.99, 0.88 \\
shared, weight 1 ($K{=}16$) & 2.83 & 0.73 & -- & -- & 0.07, 0.05 \\
shared, weight 0.1 ($K{=}256$) & 4.52 & 0.80 & -- & -- & 0.38, 0.23 \\
dual carrier ($K^{\rm id}{=}256$) & 0.46 & 0.03 & 4.12 & 2.73 & 0.90, 0.84 \\
dual, stop-gradient & 0.29 & 0.00 & 4.35 & 2.45 & 0.93, 0.88 \\ \midrule
\multicolumn{6}{@{}l}{\emph{$M=512$, $N=1000$ (every mass in training)}} \\
DIACRITIC ($K{=}16$) & 0.25 & 0.01 & -- & -- & 0.99, 0.97 \\
shared, weight 1 ($K{=}1024$) & 6.67 & 0.61 & -- & -- & 0.23, 0.09 \\
shared, weight 0.1 ($K{=}1024$) & 6.25 & 0.70 & -- & -- & 0.52, 0.29 \\
dual carrier ($K^{\rm id}{=}1024$) & 0.49 & 0.02 & 5.98 & 5.18 & 0.96, 0.93 \\
dual, stop-gradient & 0.28 & 0.02 & 5.94 & 5.14 & 0.97, 0.92 \\ \bottomrule
\end{tabular}
\end{table}

Across $M=4$ to $512$ the capacity-relieved system-identification rate is $2.25, 3.66, 4.53, 5.22, 6.84, 8.71$ bits, with a fitted slope of $0.88$ in $\log_2 M$ and intercept $0.80$. At $M=128$ and $512$, mean closed-loop success is respectively $0.17$ and $0.19$, despite sufficiency on $3/8$ and $7/8$ seeds. The $M=512$ teacher-forced result is not robust to the evaluation sample: on fresh expert recordings the count falls to $4/8$, while mean placement $S_G$ changes from $0.923$ to $0.902$ (Appendix~\ref{app:replication}). More generally, the gate tests whether the code contains the discrete placement class under expert occupancy; it does not certify continuous-action accuracy or stability under policy-induced observations, and the present evaluations do not separate these two possible sources of the low closed-loop success.

\subsection{Signpost corridor: exact requirements and learned rates}\label{app:grid}
The A$'$ manipulation task satisfies (A4), whereas Task~A violates (A4) but remains transitive. To test whether the empirical phenomena depend on this design, we introduce a partially observed grid corridor that shares only the solver and learner with A$'$. Its $W=1$ setting is in the exact regime; the drift variants $W>1$ are non-transitive, and matching lower and upper bounds certify the rates of the recorded $W=3,5,7$ instances (Proposition~\ref{prop:coloring}). The agent moves from left to right through $\log_2 M$ signpost cells, each of which reveals one bit of a hidden goal code $\theta\in[M]$. It then traverses a wide hall of gap$_1$ cells, reaches a junction where the expert turns according to $\beta_1(\theta)$, traverses a second hall of gap$_2$ cells, and reaches a second junction governed by $\beta_2(\theta)$. Observations are symbolic and indicate the start texture, sign bit, hall identity and lateral cell, or junction identity; actions are forward, sidestep, and turn. In the first cell of each hall, the expert sidesteps left or right uniformly at random, providing behaviorally equivalent action randomness.

A hidden lateral \emph{drift} $w\in[W]$, displayed on the start sign, shifts the agent sideways at every hall step. The drift changes the transitions experienced by the agent and is revealed again by its observed lateral position, but never affects the expert's action. It therefore serves as an agent-centric, behaviorally irrelevant nuisance, analogous to a current rather than a mass. At $W=1$, the solver certifies (A2), (A4), and transitivity at every step and returns the staircase $H(\Gam_t\mid\Ob_t)=2\to2\to1$ across hall 1, junction 1, and hall 2, compared with $H(H_t\mid\Ob_t)=3$--$7$ bits. At $W>1$, (A4) fails in the halls. The strong-congruence bracket is $[0,\,2+\log_2 W]$; the finite-instance certificates tighten the hall requirements to exactly $2$ and $1$ bits. We train with the toy trainer using raw VQ, $K=16$, 6k steps, $\beta=0.01$, and a $\tanh$-bounded residual state to prevent codebook collapse. Each cell contains 8 seeds, and the multi-step-inverse and system-identification heads match those in Appendix~\ref{app:controls}.

\begin{figure}[htbp]\centering\includegraphics[width=\linewidth]{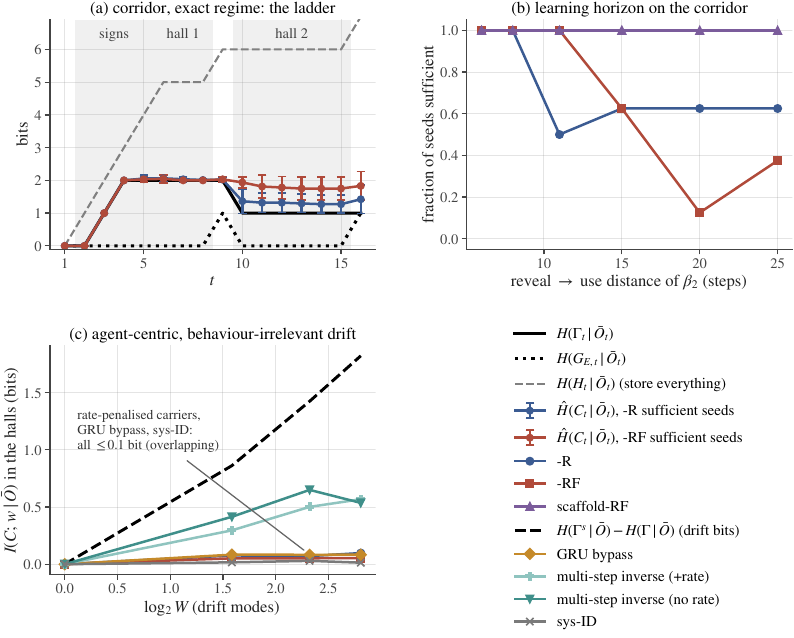}
\caption{\textbf{Signpost corridor.} (a) In the exact regime at $M=16$, sufficient seeds carry $2.00$ bits through hall 1 while $H(G_{E,t}\mid\Ob_t)=0$, then reduce their rate after junction 1; the history contains 5 bits. Expiration is partial (1.2--1.8 compared with 1.0). (b) Fraction of sufficient seeds as a function of the reveal$\to$use distance for $\beta_2$ (hall-2 length 1--20). The plain realizations decay with distance (logit slope $-0.11$/step, $p=0.01$ with configuration-clustered errors), whereas the annealed scaffold does not. (c) Drift information retained in the halls as a function of $\log_2 W$. The multi-step-inverse memory carries $0.3$--$0.7$ bit about the drift, with or without the rate term, whereas DIACRITIC and system identification carry $\le0.08$ bit. The dashed line denotes the additional rate assigned by the strong congruence.}
\label{fig:grid}
\end{figure}

\textbf{Staircase and world complexity.} Every sufficient seed carries $2.00$--$2.07$ bits in hall 1 at $M=4$, $16$, and $64$, while the history grows from 3 to 7 bits. The sufficient-seed counts are $7/8$, $4/8$, and $6/8$ for $-$R; $8/8$, $8/8$, and $6/8$ for $-$RF; and $8/8$, $7/8$, and $6/8$ for $\beta=0$. System identification is sufficient at $M=4$, where $\theta$ \emph{is} the join, but reaches $0/8$ at $M=16$ and $64$ while retaining $2.2$--$2.5$ bits about $\theta$. The multi-step-inverse memory reaches $0/8$ at every $M$, with junction error $0.5$. Expiration after junction 1 is partial in this domain: $-$R retains $1.17$--$1.36$ bits relative to the $1.00$ boundary. Unlike on A$'$, increasing $\beta$ within the admissible range does not reduce this surplus ($\beta=0$: $1.28$--$1.30$). We therefore treat the discrepancy as a limitation of the realization rather than of the target.

\textbf{Learning horizon.} For hall-2 lengths $1,3,6,10,15,20$, corresponding to distances 6--25, $-$R reaches sufficiency on $8,8,4,5,5,5$ of 8 seeds, and $-$RF reaches $8,8,8,5,1,3$. Increasing the length of hall 1 to 8 and 15 cells reduces $-$R to $2/8$ and $0/8$. We pool the 208 runs at $W=1$, comprising 416 run$\times$class rows and 24 configurations. For the plain realizations, the distance coefficient is $-0.11$/step ($p=0.01$, configuration bootstrap CI $[-0.20,-0.05]$), matching the sign and order of the A$'$ coefficient ($-0.15$). The annealed scaffold, which only mitigates the barrier on A$'$, reaches $8/8$ at every corridor distance up to 25, with slope $0$. The recurrence of the distance effect outside manipulation supports a temporal optimization interpretation, while the differing benefit of the scaffold shows that its mechanism is domain-dependent.

\textbf{Layer-2 nuisance.} At $W=3,5,7$ drift modes, DIACRITIC retains $\le0.08$ bit about the drift and $2.00$--$2.08$ bits in hall 1, with no dependence on $W$; the strong congruence would instead assign $2.86$--$3.82$ bits. The multi-step-inverse memory retains $0.30/0.50/0.47$ bit with the rate term and $0.54/0.71/0.48$ without it, because inferring a sidestep from the observed lateral displacement requires the drift. This memory never reaches sufficiency. System identification retains $2.4$--$2.6$ bits about $\theta$ and $\le0.04$ bit about $w$. These results separate the targets empirically: a control-endogenous objective retains an agent-centric quantity ignored by the expert, whereas the behavioral objective need not. The objectives become distinguishable only when behaviorally equivalent action randomness makes the inverse problem depend on the nuisance, which is why Task A with a deterministic expert cannot exhibit this separation (Appendix~\ref{app:controls}).

\textbf{Matched-distortion controls.} The inverse-only memory fails in the corridor even with $K=64$ codes, reaching $0/8$ at both $W=1$ and $W=5$. At $W=5$, it retains $\approx1.0$ bit about the drift and $2.07$ bits in hall 1, but its junction error remains at chance. Its failure is therefore not caused by codebook capacity. When trained \emph{jointly} with imitation and the behavioral code rate term, the same head has no measurable effect: sufficiency is $4/8$ at $W=1$ and $6/8$ at $W=5$, matching $-$R in the corresponding cells, and drift information is $0.09$--$0.12$ bit, compared with $0.30$--$0.70$ for the inverse objective alone. Thus, the inverse objective retains drift only in the absence of pressure to remove it; it does not enforce minimality, and the rate term removes this information at negligible cost to the inverse loss.

With $K=64$, system identification reaches sufficiency on $4/8$ seeds, compared with $0/8$ at $K=16$, at a cost of $3.9$ bits when $\theta$ contains 4 bits. DIACRITIC remains unchanged at $K=64$, reaching $8/8$ at $2.03$ bits. Thus capacity contributes to the $K=16$ system-identification failure but does not fully resolve it; the inverse-only objective remains at $0/8$ even with $K=64$ (preceding paragraph). The continuous-carrier ablation, implemented as a GRU bypass, requires a larger learning rate in this domain: it reaches $0$--$2/8$ at $3\!\times\!10^{-4}$ and $7/8$ at $3\!\times\!10^{-3}$. When successful, its readout code carries $1.5$ bits where $\Gam$ requires $2.0$, reproducing the undercount observed on the toy and A$'$. A code that is not the sole memory carrier therefore does not yield a valid memory rate.

\subsection{Plain learned code trajectories}\label{app:plain-trajectories}

\begin{figure}[htbp]\centering\includegraphics[width=\linewidth]{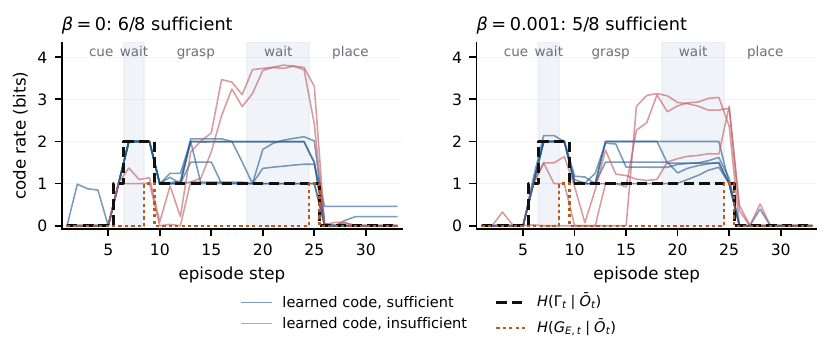}
\caption{\textbf{Every plain seed, with sufficiency checked first.} A$'$ at gap 6, $N=1152$, $\beta=0$ and $10^{-3}$, eight seeds per cell. Blue trajectories pass the full gate of \S\ref{sec:method}; red trajectories do not. The black curve is the certified behavioral memory rate and the orange curve the instantaneous requirement. Rates use expert occupancy. A low code rate on a failed seed is not evidence of compression at preserved behavior.}\label{fig:fig3}
\end{figure}

This view complements Figure~\ref{fig:memory-profile}: it retains the variation across all plain seeds, while the main figure tests a realization whose behavioral decoder has the same side information as the theoretical model.

\FloatBarrier
\needspace{8\baselineskip}\section{Learning compact behavioral memory}\label{app:learning}
The following experiments concern acquisition of the independently defined target. Counts use the full gate unless a relaxed or class-specific diagnostic is explicitly identified. Rates conditional on passing and all-seed control outcomes are kept separate.

\subsection{Event-agnostic future-behavior supervision}\label{app:generic}
\paragraph{Objective.} For every training episode and step $t$ we draw one offset $j\sim U\{1,\dots,T-1\}$ and a training-only head regresses, from $(g_o(O_t),C_t)$ and an embedding of $j$, the forecast of $A_{t+j}$ produced by a causal Transformer that was trained from scratch on the same recorded episodes to predict all future actions from the history prefix (offsets beyond the end of the episode are masked; nothing else is). No reveal step, decision step, divergent action or latent label enters the construction; before the class is revealed the forecaster outputs the conditional mean. The auxiliary weight is $1$ until $60\%$ of the $10^4$ training steps and decreases linearly to $0$ at $80\%$; the last $20\%$ optimize imitation and rate only. Everything else is the frozen configuration of Appendix~\ref{app:distill} ($K=16$, $\beta=10^{-3}$, $N=448$). The \emph{task-informed} forecast of Appendix~\ref{app:distill} instead predicts only the first grasp and the first place action, supervised from the last reveal step and masked once used.

\begin{table}[htbp]\centering\scriptsize\setlength{\tabcolsep}{2.5pt}
\caption{Forecast supervision on A$'$ (state observations), 8 seeds per cell. Top: sufficient seeds under the full-trajectory gate (in parentheses: the same models re-evaluated on an independent recording). Bottom: first-gap $\to$ second-gap rate of the sufficient seeds (bits; theory $2.00\to1.00$) / closed-loop success over all seeds (128 episodes per policy). EA = event-agnostic random-offset forecast; TI = task-informed forecast.}
\label{tab:generic}
\begin{tabular}{@{}lccccc@{}}\toprule
sufficient seeds & gap 6 & gap 10 & gap 20 & $M=16$ & $M=32$ \\\midrule
EA, anneal $60\%$ (frozen) & $8/8$ ($8/8$) & $7/8$ ($7/8$) & $8/8$ ($8/8$) & $6/8$ ($6/8$) & $7/8$ ($7/8$) \\
EA, anneal $40\%$ & --- & --- & $7/8$ ($7/8$) & --- & $7/8$ ($7/8$) \\
EA, not annealed & $6/8$ ($6/8$) & $8/8$ ($7/8$) & $7/8$ ($7/8$) & $8/8$ ($8/8$) & $6/8$ ($6/8$) \\
TI & $8/8$ (---) & $8/8$ ($8/8$) & $8/8$ ($8/8$) & $8/8$ ($8/8$) & $5/8$ ($5/8$) \\
teacher state & $2/8$ & $0/8$ & $0/8$ & $0/8$ & $1/8$ \\\midrule
rate / closed loop & gap 6 & gap 10 & gap 20 & $M=16$ & $M=32$ \\\midrule
EA, anneal $60\%$ (frozen) & $2.00\to1.15$ / $.99$ & $2.00\to1.26$ / $.87$ & $2.00\to1.17$ / $.90$ & $2.00\to1.09$ / $.79$ & $2.04\to1.21$ / $.93$ \\
EA, anneal $40\%$ & --- & --- & $2.00\to1.21$ / $.89$ & --- & $2.03\to1.23$ / $.82$ \\
EA, not annealed & $2.00\to1.51$ / $.89$ & $2.00\to1.92$ / $.91$ & $2.02\to2.14$ / $.90$ & $2.03\to1.29$ / $.98$ & $2.03\to1.63$ / $.92$ \\
TI & $2.01\to1.07$ / $.95$ & $2.05\to1.06$ / $.89$ & $2.09\to1.10$ / $.92$ & $2.08\to1.09$ / $.84$ & $2.13\to1.11$ / $.60$ \\
\bottomrule\end{tabular}
\end{table}

On the independent recordings every seed-wise verdict of the frozen configuration is reproduced ($36$ pass$\to$pass, $4$ fail$\to$fail, no switches); for the task-informed forecast $30$ of the $32$ re-evaluated verdicts agree, with one switch in each direction, and over all $520$ models re-evaluated in this appendix and Appendices~\ref{app:distill} and~\ref{app:weigh} the agreement is $96\%$. Overall sufficiency is similar for the two objectives ($36/40$ and $37/40$) with setting-dependent differences ($M=16$: $6/8$ against $8/8$; $M=32$: $7/8$ against $5/8$, closed loop $0.93$ against $0.60$), while the task-informed target is more rate-efficient after first use ($1.06$--$1.11$ against $1.09$--$1.26$ bits). Across these cells, policies that pass the full-trajectory gate also exhibit high closed-loop success ($0.90$--$1.00$; e.g.\ $M=16$: $0.79$ over all seeds, $1.00$ over sufficient seeds); we report this as an association, not a causal claim. A variant that sums the loss over all future offsets is not robust ($3/8$ at gap 20 at weight $1$, $1/8$ on the independent recording; $7/8$ only with weight $0.1$ and annealing) and is not used.

\paragraph{Pixels.} Table~\ref{tab:pxgeneric} repeats the comparison from pixels; the forecaster reads the same images. The $60\%$ schedule was the best of three tried on seeds 0--7 at gap~20; seeds 8--15 were run afterwards with nothing changed. On those held-out seeds plain training is sufficient on $0/8$ (closed-loop success $0.08$, slot accuracy $0.55$), the event-agnostic objective on $6/8$ ($0.57$, $0.84$) and the task-informed forecast on $7/8$ ($0.49$, $0.82$); the $40\%$ schedule, fixed before any pixel run, gives $6/8$ ($0.65$, $0.88$). Grasp-side accuracy, whose class is used two steps after the reveal, is $0.97$--$1.00$ for every learner: the differences arise only at the slot chosen twenty steps later. Among offline-sufficient seeds the slot accuracy is $0.97$ (event-agnostic) and $0.93$ (task-informed), against $0.53$--$0.59$ for insufficient seeds, so the offline gate predicts the closed-loop memory decision; task success is further limited by pixel control precision (action error $2.6$--$6.0$\,cm).

\begin{table}[htbp]\centering\footnotesize\setlength{\tabcolsep}{4pt}
\caption{Pixel A$'$: sufficient seeds (full-trajectory gate), rates of the sufficient seeds (theory $2.00\to1.00$), and closed loop over all seeds (128 episodes per policy).}
\label{tab:pxgeneric}
\begin{tabular}{@{}llcccc@{}}\toprule
horizon & learner & sufficient & rate (bits) & success & slot accuracy \\\midrule
gap 6 & plain $-$R & $2/8$ & $1.99\to1.00$ ($n{=}2$) & $0.17$ & $0.60$ \\
gap 6 & task-informed forecast & $7/8$ & $1.99\to1.12$ & $0.25$ & $0.94$ \\\midrule
gap 20 & plain $-$R & $1/16$ & $1.99\to1.52$ ($n{=}1$) & $0.14$ & $0.56$ \\
gap 20 & event-agnostic, not annealed & $7/8$ & $2.15\to2.01$ & $0.40$ & $0.80$ \\
gap 20 & event-agnostic, annealed from $40\%$ & $12/16$ & $2.01\to1.13$ & $0.51$ & $0.82$ \\
gap 20 & event-agnostic, annealed from $60\%$ & $13/16$ & $2.00\to1.13$ & $0.65$ & $0.89$ \\
gap 20 & task-informed forecast & $15/16$ & $2.00\to1.00$ & $0.61$ & $0.91$ \\
\bottomrule\end{tabular}
\end{table}

\paragraph{What the un-annealed objective keeps.} In the second gap only $\beta_2$ is required and the symbolic observation is uninformative, so $I(C_t;\beta_1\mid\beta_2)$ measures the expired class still carried by the code. It is at most $0.02$ bit for every learner, annealed or not, from states and from pixels, while $I(C_t;\beta_2\mid\beta_1)=0.98$--$1.00$ bit. The surplus $\Hc(C_t\mid\Gam_t,\Ob_t)$ of the un-annealed objective is $1.02$ bits from pixels and $1.16$ from states, compared with $0.13$ and $0.17$ after annealing and $0.00$ for the task-informed forecast.

The scripted expert moves along linear segments whose per-step displacement is fixed by the starting pose. During the second gap this is the grasp pose, set by the object's initial position (uniform within $\pm2$\,cm), which is no longer visible and changes the displacement by up to $40\%$. Conditioned on $\beta_2$, the un-annealed code carries $0.56$ bit about the initial $x$-position quartile ($0.01$--$0.03$ after annealing) and at most $0.03$ bit about the $y$ position, hover nuisance, distractor, or mass. This information can improve an open-loop action forecast even though the symbolic behavioral target does not require it.

At the midpoint of the second gap ($t=28$), a separate five-fold cross-validated analysis gives conservative lower bounds on the information explained by these features. Relative to the $1.47$-bit surplus at that step, initial position explains at least $0.38$ bit ($26\%$), increasing to $0.51$ bit ($35\%$) when noisy observation history is included. These are lower bounds, not a limit on the explainable share; they use a single-step measurement rather than the phase average reported above.

\paragraph{Intervening on initial-position variability.} We re-recorded A$'$ gap 20 without the $\pm2$\,cm object-position jitter and retrained the un-annealed objective (8 seeds; prediction fixed beforehand: surplus below $0.6$ bit). All $8/8$ seeds are sufficient, with $2.00\to1.23$ bits, second-gap surplus $0.23$ bit, and closed-loop success $0.94$. The original jittered task gives $7/8$ sufficient seeds and $1.16$ bits of surplus. Removing this source of task variation therefore substantially reduces predictive surplus under retraining, supporting initial-position variability as an important driver. The $0.93$-bit difference compares separately trained policies; it is not a decomposition of the position information stored by the original code. These results help explain excess rate under open-loop prediction (Appendix~\ref{app:e5}) while leaving its full content unresolved.

\paragraph{Dependence on the forecaster (A$'$ gap 20, task-informed target).} Regressing the recorded future actions directly, without any forecaster, is sufficient on $7/8$ seeds (all-seed closed-loop success $0.82$). Forecasters trained for $300$, $100$ and $30$ steps instead of $3000$ (held-out class error of their pending-action forecast $0.00$, $0.08$, $0.25$) give $7/8$, $8/8$ and $4/8$ on the independent recording (all-seed closed loop $0.75$, $0.78$, $0.67$). Forecasts that are \emph{consistently wrong}---for $5$, $10$ or $20\%$ of the training episodes the target is the forecast of an episode of a different behavioral class---give $2/8$, $0/8$ and $0/8$ on the clean independent recording, with closed-loop slot accuracy $0.91$, $0.77$ and $0.62$: the student tolerates an imprecise forecaster but distils its systematic errors, and the entropy gate fails once about $5\%$ of the episodes receive a wrong code. With recorded actions as targets the all-offset event-agnostic objective fails ($0/8$): before the reveal the recorded future actions are unpredictable, whereas the forecaster supplies their conditional mean.

\subsection{Task-informed supervision and teacher-state controls}\label{app:distill}
This appendix documents the \emph{task-informed} forecast and the teacher-state control; the event-agnostic objective of the main text is documented in Appendix~\ref{app:generic}. The forecaster is a causal Transformer (two layers, width 128, $d=32$) trained from scratch on the recorded training episodes for 3000 steps to output, at every step $t$ from the last reveal step onward, the expert's action at the first grasp step and at the first place step, each masked once that step is reached, so that its target at $t$ is exactly the pending class-dependent behavior; it uses only recorded observations and actions. Its held-out error is $0.000$ (per-dimension MSE of standardized actions) on every dataset. The student is the plain $K=16$ DIACRITIC ($\beta=10^{-3}$) with one additional training-time head that regresses the forecaster's output from $(g_o(O_t),C_t)$ under the same mask (weight 1); the head and the forecaster are discarded after training, and all rates and rollouts are those of the student alone. The teacher-state control replaces the target by the frozen full-history Transformer baseline's pre-quantization state at $t$ (the full-history baseline of \S\ref{sec:f4}, same seed and data).

\begin{table}[htbp]\centering\footnotesize\setlength{\tabcolsep}{3.5pt}
\caption{Distillation controls on A$'$ and P-I (8 seeds per cell; full-trajectory gate of \S\ref{sec:protocol}; the event-agnostic objective is in Table~\ref{tab:generic}). Top block: seeds sufficient. Bottom block: rates and closed-loop success of the sufficient seeds under the forecast target.}
\label{tab:distill}
\begin{tabular}{lccccc}\toprule
target & gap 6 & gap 10 & gap 20 & $M=16$ & $M=32$ \\\midrule
task-informed forecast of pending behavior & 8/8 & 8/8 & 8/8 & 8/8 & 5/8 \\
teacher's internal state & 2/8 & 0/8 & 0/8 & 0/8 & 1/8 \\
unsupervised (best variant) & 3/8 & 2/8 & 1/8 & 0/8 & 0/8 \\\midrule
forecast: rate before use (theory 2.00) & 2.01 & 2.05 & 2.09 & 2.08 & 2.13 \\
forecast: rate after use (theory 1.00) & 1.07 & 1.06 & 1.10 & 1.09 & 1.11 \\
forecast: closed-loop success & 0.95 & 0.89 & 0.92 & 0.84 & 0.73 \\
\bottomrule\end{tabular}
\end{table}

\paragraph{The 4-bit join.} On tier~16 the complete anticipatory join requires $\log_2(4\cdot4)=4$ bits under the uniform model; its empirical test-split entropy is $3.97$ bits. Under the gate of \S\ref{sec:box} task-informed forecast supervision is sufficient on $0/8$ seeds at $K=16$ and $2/8$ at $K=64$; a relaxed gate that inspects only the second gap and the place step is passed by $3/8$ and $5/8$, but it tests only the two-bit remainder after the grasp (Appendix~\ref{app:gate}). The two successful $K=64$ seeds carry $4.09$ and $4.21$ bits in the first gap and attain closed-loop success $0.45$ and $0.46$. Forecast supervision thus yields two realizations of the four-bit representation when the codebook has slack, but not reliable learning at this scale; the privileged-head control likewise reaches only $1/8$ at $K=128$ (Appendix~\ref{app:controls}).

\subsection{Future-behavior targets without observation side information}\label{app:e5}
The decision-centric targets of Table~\ref{tab:targets} keep a history distinction when some future decision depends on it. Their fixed-demonstrator form is an open-loop decoder that must predict the future action sequence from $(C_t,O_t)$ alone, without the future observations and actions that the BFS decoder of \S\ref{sec:method} receives; a world-predictive form additionally predicts the future observations. We train both on the gap toy with $\beta_2$ re-revealed one step before use (where $\Gam$ needs $1$ bit in the first gap and $0$ afterwards while the strong congruence keeps $0.67$ bit), on the corridor with drift $W\in\{1,5\}$, and, for the action-only form, on Task~A and A$'$. On the re-reveal toy the action-only target carries $2.06$ bits in the first gap and $1.29$ after use ($\Gam$: $1.00$ and $0$): it pays for the re-revealed class because a future action depends on it, which is precisely the side-information term of Theorem~\ref{thm:exact-trans}; the observation-and-action target carries $1.75$ and $0.85$. On the same toy, the side-information decoder of \S\ref{sec:method} ($-$RF) lands at $1.12$ and $0.08$ bits. On the corridor with $W=5$ drift modes the two forms separate as the theory predicts: the action-only target retains $0.01$ bit about the drift, as DIACRITIC does ($0.03$), whereas the observation-predictive target retains $0.85$ bit, since the drift changes future observations but not future actions. On the robot the action-only target is sufficient on $8/8$ seeds of every Task~A dataset and of A$'$, but carries $1.61$--$1.86$ bits during grasp and transport on Task~A (DIACRITIC: $0.03$), with only $I(C;\mathrm{mass}\mid\Ob)=0.03$ bit about mass. It carries $2.16$ bits after use on A$'$ ($\Gam$: $1.00$), with closed-loop success $0.75$--$0.87$. Thus, an open-loop future decoder can charge the code for trajectory detail that future observations would supply.

\paragraph{Label density.} The forecast target need not be dense. On A$'$ gap~20 (reveal$\to$place distance $33$), applying the distillation loss only every second or fourth step after the last reveal ($12$ and $6$ supervised (step, block) pairs per episode instead of $23$) gives $8/8$ and $7/8$ full-gate sufficiency, respectively, with full-gate first-gap rates $2.05$ and $2.05$ bits and all-seed closed-loop success $0.88$ and $0.84$ ($0.92$ with every step). Behavior-aligned long-range supervision is therefore effective without per-step labels.

\subsection{Other training interventions}\label{app:negative}

\begin{table}[htbp]\centering\footnotesize\setlength{\tabcolsep}{3pt}\renewcommand{\arraystretch}{1.15}
\caption{Sufficiency and minimality on A$'$ (gap 6, $N=1152$ unless noted). Minimality is the post-use rate against the $1.00$-bit boundary. The variational KL bound of the continuous model is not comparable to hard-code entropy and is therefore not read as a minimality result. Appendices~\ref{app:negative} and~\ref{app:controls} report additional variants.}
\label{tab:quadrant}
\begin{tabularx}{\linewidth}{@{}>{\raggedright\arraybackslash}p{1.55in}>{\centering\arraybackslash}p{0.62in}LL>{\centering\arraybackslash}p{0.55in}@{}}\toprule
variant & sufficient & post-use rate (bits) & rate measurement & success \\\midrule
DIACRITIC ($-$R, $\beta=10^{-3}$) & \checkmark & $1.38$ (near) & \checkmark & $0.97$ \\
scaffold $\to$ DIACRITIC ($N\approx4$k) & \checkmark & $0.99$--$1.05$ on $5/6$ (strongest) & \checkmark\ ($\alpha=0$ at evaluation) & $0.87$--$1.00$ \\
$\beta=0$ & \checkmark & $1.53$ (redundant) & \checkmark & $0.96$ \\
continuous bottleneck & possible & KL bound $6.9$; nuisance $0.32$ & upper bound only & --- \\
GRU bypass & \checkmark & $1.06$ (apparent under-rate) & $\times$\ (not the sole carrier) & $0.97$--$1.00$ \\
system identification & budget-dep. & $2.2$--$3.2$ (learned rate) & \checkmark & $0.06$--$0.79$ \\
\bottomrule\end{tabularx}
\end{table}

We report training interventions that do not remove the barrier in \S\ref{sec:f4}. Unless noted otherwise, all experiments use A$'$ with 8 seeds. \emph{Future-sufficiency-driven code refinement} applies the solver's partition-refinement principle as a learning rule by splitting the code whose members have the least consistent futures. It does not change the number of sufficient seeds at any gap on the toy or on A$'$ (Fisher exact $p=1.0$ in every cell; pooled coefficient $+0.35$, $p=0.28$). A \emph{gap curriculum} warm-starts gap 10/20 and $M=16/32$ from the gap-6 scaffold model with the same seed. It yields $2/8, 1/8, 2/8, 0/8$ and a pooled coefficient of $-1.6$ ($p=3\!\times\!10^{-3}$), indicating worse performance. \emph{Doubling the training budget} to 20k steps leaves gap 20 at $0/8$ and $M=16$ at $0/8$. On the toy, a \emph{continuous Gaussian bottleneck} reaches sufficiency only at $\beta=0.003$, with a KL bound of $6.9$ bits and $0.32$ bit of retained nuisance information; this bound is not comparable to hard-code entropy (Table~\ref{tab:quadrant}). A \emph{non-persistent code} reaches $0/8$. An \emph{unconditional prior} is indistinguishable from the conditional prior in every family: $5/8$ versus $5/8$ on A$'$, and $8/8, 6/8, 4/8$ versus $7/8, 4/8, 6/8$ in the corridor at $M=4,16,64$. Finally, \emph{BFS} has a modest positive coefficient in the pooled regression ($+0.64$, $p=0.02$) but does not remove the barrier.

\subsection{Codebook size between the cardinality bound and learnability}\label{app:ksweep}
On readout-3 the exact requirement is three bits and Corollary~\ref{cor:codebook} requires $K\ge8$. With task-informed forecast supervision, sufficient-seed counts increase monotonically across the tested codebooks: $0$, $1$, $5$, $15$, and $23$ of $24$ runs (three values of $M$, eight seeds each) at $K=8$, $10$, $12$, $16$, and $24$. Sufficient seeds average $2.99$--$3.14$ bits; all-seed closed-loop success is $0.07$, $0.10$, and $0.60$ at $K=10$, $12$, and $24$. Some runs succeed with modest slack, and $K=24$ yields the highest observed reliability, $23/24$. The sweep distinguishes the representational state-count bound from reliable acquisition under this optimizer and training configuration (\S\ref{sec:f4}); it does not establish that smaller feasible codebooks are unreachable.

\subsection{Observed rate--fidelity trade-offs}\label{app:rd}

\begin{figure}[htbp]\centering\includegraphics[width=\linewidth]{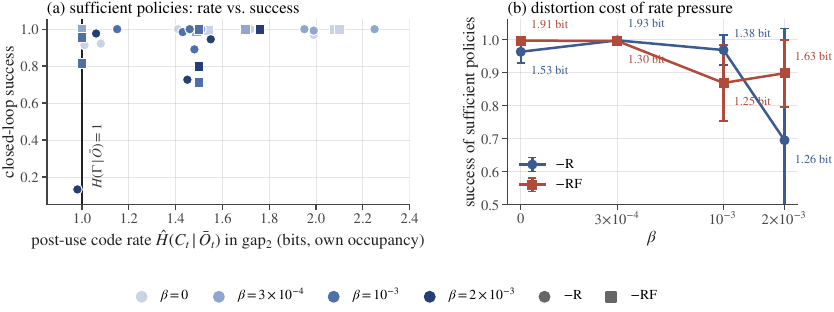}
\caption{\textbf{Empirical rate--distortion in the robot environment} (80 policies, 128 closed-loop episodes each). Left: post-use code rate as a function of closed-loop success among policies that reach sufficiency; many lie exactly at $H(\Gam\mid\Ob)=1$. Right: success of sufficient policies as a function of $\beta$.}
\label{fig:rd}
\end{figure}

As the post-use rate approaches $H(\Gam\mid\Ob)=1$, closed-loop success for sufficient plain $-$R policies decreases from $0.96$--$1.00$ at $\beta\le3\!\times\!10^{-4}$ to $0.70$--$0.90$ at $\beta=2\!\times\!10^{-3}$. This trade-off motivates fixing $\beta$ under a $D\approx0$ constraint before inspecting any rate (\S\ref{sec:protocol}). Across 448 models evaluated in closed loop, teacher-forced and own-occupancy sufficiency agree on $31/32$ members of the reference set. Sufficient policies succeed on $0.70$--$1.00$ of episodes, compared with $0.12$--$0.45$ for insufficient policies. The full pooled regression from \S\ref{sec:f4}, including three specifications, run- and configuration-clustered errors, and cluster bootstraps, and the corridor regression are reproduced by the released code.

\subsection{Sufficiency counts and temporal-distance statistics}\label{app:stats}
Sufficiency counts are reported per 8- or 16-seed cell. Wilson 95\% intervals for 8 seeds are $[0,0.32]$ for $0/8$, $[0.22,0.79]$ for $4/8$ and $[0.68,1]$ for $8/8$. Among 45 pairwise contrasts in the refinement, curriculum, coverage, $\beta$, and BFS sweeps, only two reach Fisher $p<0.05$, both involving the scaffold at gap 6 ($5/8$ vs $0/8$). The remaining differences in those sweeps are not established; this comparison set does not include the future-behavior supervision experiments.

The controlled distance sweeps of \S\ref{sec:f4} use 16 seeds per cell for gaps $6/10/20$ (distances 19/23/33 for the place class). Their outcome is class-specific sufficiency at use: placement counts are $1,3,1$ of 16 for $-$R, $5,2,0$ for $-$RF and $10,7,0$ for scaffold-RF, with grasp-class counts of $42$--$45$ of 48 at distance 3. These are not the full-trajectory gate counts used for the frozen supervision comparison. The gap$_1$ sweeps (distances 11 and 21) have 8 seeds per cell. Per-learner logistic slopes, their cluster-robust intervals, the interaction test and threshold sensitivity ($0.8$ and $0.95$ leave every count unchanged except one grasp cell) are in the released analysis code.

\subsection{Large hidden-mode counts on P-I}\label{app:compute}
\textbf{P-I on A$'$.} Figure~\ref{fig:pI} presents the exact-regime counterpart of the world-complexity comparison. The probe channel reveals the full $\theta\in[M]$, while each phase contains only two behavioral classes. For $M$ up to 32, every seed retains a learned rate of $1$--$2$ bits; successful seeds carry exactly $2.00$ bits, and the slope with respect to $\log_2 M$ is $0$. On the same data, the system-identification objective increases to $3$ bits, with a slope of $+0.89$ per bit of $\log_2 M$, and loses the behavioral memory. The unsupervised variants reach $0/8$ sufficient seeds at $M\ge16$, illustrating the optimization barrier in \S\ref{sec:f4} when many bits are revealed but few are behaviorally relevant; forecast supervision is the exception reported there.

\textbf{P-I at $M=128$ and $512$ with forecast supervision.} We freeze the forecast-distilled configuration selected at $M\le32$ ($-$R, $\beta=10^{-3}$, $K=16$, $N=448$; stratified data). On the training recordings, the all-seed first-gap means are $2.36$ and $2.63$ bits at $M=128$ and $512$; among late-gate sufficient seeds they are $2.17$ and $2.24$ bits (theory $2.00$). The corresponding all-seed means on fresh recordings are $2.48$ and $2.72$.

Same-task system identification stores $3.4$--$3.5$ bits at $K=16$ and $6.45$ and $8.72$ bits at $K=256$ and $1024$ ($4.89$ and $7.72$ bits about $\theta$), is sufficient on $0/8$ seeds, and succeeds in closed loop on at most $0.18$ of episodes.

Reliability nevertheless degrades with $M$: late-phase sufficiency is $8/8$, $6/8$, $5/8$ and $3/8$ at $M=16$, $32$, $128$ and $512$ (full-trajectory $4/8$ and $3/8$ at the last two). Some verdicts change on fresh recordings, including $5\to3$ at $M=128$. Training on all 512 modes ($N=1000$) yields only $2/8$ late-gate and $0/8$ full-trajectory seeds on the training recordings, and $0/8$ late-gate seeds on fresh recordings. Thus, conditional on successful realization, the behavioral rate remains near two bits as $M$ grows, but the all-seed mean rises modestly and reliability deteriorates. We interpret this as a rate separation from world identification and as an optimization scale boundary, not as reliable flat scaling.

\begin{figure}[htbp]\centering\includegraphics[width=\linewidth]{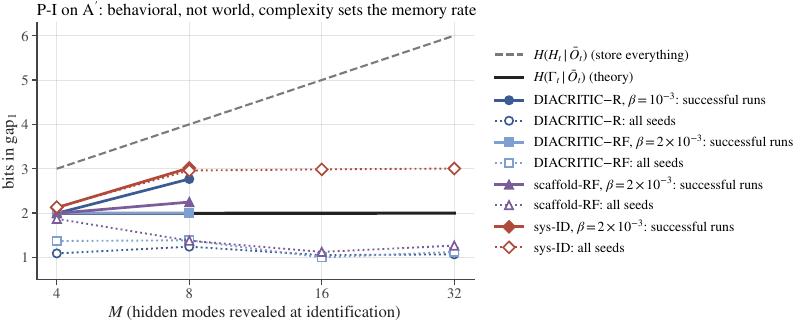}\caption{A$'$ with the full $\theta$ revealed. For $M\le32$, successful seeds remain at 2.00 bits, the all-seed average is flat over the plotted range, and the system-identification rate increases with $M$. Large-$M$ results and their reliability boundary are reported in the text.}\label{fig:pI}\end{figure}

\FloatBarrier
\needspace{8\baselineskip}\section{Additional domains and limits of transfer}\label{app:domains}
These domains test parts of the information structure and the learning surrogate beyond the main comparisons. Offline sufficiency, memory-dependent decisions, and complete closed-loop task success are reported separately.

\subsection{Pixel observations and observation-timing checks}\label{app:pixels}
\paragraph{Setup.} We recollect A$'$ at gap 6 and $N=512$ using a separate $128^2$ camera for each environment. The trainer crops the table region, downsamples it to $64^2$, and replaces $g_o$ with a three-layer CNN. Its output is concatenated with the low-dimensional gripper opening, probe tag, and phase. Because the tag is not visible in the image, the identification channel remains instrumented. All other components, including the codes, prior, rate, metrics, $\beta=2\times10^{-3}$, and 10k training steps, remain unchanged; BFS is not used.

\paragraph{Results.} (Forecast supervision from pixels, at gaps 6 and 20, is reported in Appendix~\ref{app:generic}, Table~\ref{tab:pxgeneric}; this paragraph concerns plain training.) DIACRITIC-R recovers the exact staircase from pixels on 2 of 8 seeds. Its gap-1 rate is $1.99$ relative to $H(\Gam|\bar O)=1.99$, and its gap-2 rate is $1.00$ relative to $1.00$. Both gaps have $S_\Gam=1.00$; the place phase has $S_G=1.00$; $\hat I(C;\text{mass}|\bar O)=0.01$; and the test MSE is $0.003$. The remaining seeds fail to recover the place class, with error $0.38$--$0.53$ at the first place-divergent step. This outcome matches the discovery-limited behavior observed with state inputs at the same $N$, where 0--3 of 8 seeds succeed. The annealed scaffold reaches 0 of 8 seeds from pixels. Anticipatory memory can therefore emerge from pixel observations at exactly the predicted rate when optimization succeeds; the supervised long-horizon pixel results are reported in Appendix~\ref{app:generic}.

\paragraph{Rate predictions as benchmark diagnostics.} In the initial pixel collection, each frame was captured \emph{after} the macro step, whereas the state observation was recorded before it. The diagnostics exposed the offset: every seed had a gap-2 rate of $0$ and $S_\Gam=0$, yet predicted the first memory-dependent place action with error $0.00$. This mismatch prompted a check of the observation convention used by the theory. A single-frame probe, implemented as a CNN that predicts the hidden bits from individual frames, showed that the place class was readable at $1.00$ from ``place'' frames and at chance from gap frames, identifying a one-step capture offset. After correction, the probe predicts the place class at $0.45$, compared with a majority baseline of $0.52$, at the first place-divergent step; it predicts $\theta$ at $0.03$ ($1/32$-level) and the mass quartile at $0.29$ ($0.33$). This case illustrates the importance of direct leakage probes in visual imitation benchmarks with hidden state: the rate mismatch prompted the probe that identified the timing error, despite apparently correct behavior.

\textbf{Closed-loop evaluation from pixels.} At every step, the environment supplies the pixel policy with the corresponding camera frame at $128^2$, cropped to the table and resized to $64^2$ as during training. We evaluate 16 models, comprising 8 seeds $\times$ \{$-$R, scaffold-R\}, on 128 episodes per model and compute all metrics under the policy's own occupancy. The two seeds that are sufficient under teacher forcing remain sufficient under their own occupancy, with $S_\Gam=1.00$ and $0.95$ in gap$_2$. They carry $1.94$ bits in gap$_1$, choose the side with accuracy $1.00$, choose the slot on $0.92$/$0.94$ of episodes, and achieve success on $0.66$/$0.36$ with mean action error $3.1$/$5.7$\,cm. The fourteen insufficient models choose the slot at chance ($0.41$--$0.66$), achieve success of $0.09$--$0.42$, and have action errors of $7$--$12$\,cm. Symbolic sufficiency therefore predicts closed-loop memory use from pixel observations as it does from state observations. The primary cost of pixels is lower action precision: state policies reach $0.5$\,cm, which explains the lower success ceiling of the pixel policies. One insufficient model exhibits a $5.7$\,cm class-conditional pose separation in gap$_2$ without above-chance slot selection, indicating body memory that is written but not read.

\begin{table}[htbp]\centering\small
\caption{Closed-loop evaluation of the 16 pixel policies (128 episodes each; $S_\Gam$ under the policy's own occupancy).}
\label{tab:pxcl}
\footnotesize\setlength{\tabcolsep}{5pt}\begin{tabular}{@{}lccccc@{}}\toprule
policies & $n$ & $S_\Gam$ (gap$_2$) & slot accuracy & success & action error (cm) \\\midrule
teacher-forced sufficient ($-$R, seeds 0, 6) & 2 & $1.00$, $0.95$ & $0.92$, $0.94$ & $0.66$, $0.36$ & $3.1$, $5.7$ \\
insufficient, $-$R & 6 & $\le0.08$ & $0.42$--$0.66$ & $0.09$--$0.27$ & $8$--$12$ \\
insufficient, scaffold-R & 8 & $\le0.07$ & $0.41$--$0.54$ & $0.10$--$0.42$ & $7$--$10$ \\
\bottomrule\end{tabular}
\end{table}

\subsection{Cue--corridor T-maze}\label{app:tmaze}
We use the cue--corridor--junction T-maze from the RL memory literature \citep{ni2023transformers} to verify that exactness does not depend on an instrumented identification phase. The ordinary first observation displays a cue $c\in[R]$ together with an exogenous texture $z$. The agent then traverses $L$ identical corridor cells and reaches a junction, where the expert turns according to the cue. The task contains no instrumented channel. Assumption (A4) holds automatically because the cue affects only the expert's action, and the solver certifies (A2), (A4), and transitivity for every $L$ up to 80. Throughout the corridor, $H(\Gam_t\mid\Ob_t)=\log_2 R$, while $H(G_{E,t}\mid\Ob_t)=0$ until the junction. We use the corridor configuration: raw VQ, a $\tanh$-bounded state, $\beta=0.01$, $K=16$, and 6k steps, with 8 seeds per cell. A seed is sufficient iff $S_\Gam>0.9$ at every corridor step and its junction action is exact.

\begin{table}[htbp]\centering\small
\caption{T-maze: seeds sufficient / corridor rate of sufficient seeds (bits; theory $\log_2 R$).}
\label{tab:tmaze}
\begin{tabular}{lccccc}\toprule
cue bits, variant & $L=5$ & $L=10$ & $L=20$ & $L=40$ & $L=80$ \\\midrule
1 bit, $-$R & 8/8 / 1.25 & 8/8 / 1.19 & 7/8 / 1.00 & 3/8 / 1.00 & 5/8 / 1.00 \\
1 bit, $-$RF & 8/8 / 1.12 & 8/8 / 1.50 & 8/8 / 1.38 & 5/8 / 1.10 & 6/8 / 1.17 \\
1 bit, scaffold-RF & 8/8 / 1.17 & 8/8 / 1.00 & 8/8 / 1.06 & 8/8 / 1.19 & 6/8 / 1.08 \\
2 bits, $-$R & 6/8 / 2.00 & 4/8 / 2.00 & 1/8 / 2.00 & 0/8 / --- & 0/8 / --- \\
\bottomrule\end{tabular}
\end{table}

Every sufficient seed carries at least the predicted $\log_2 R$ bits through the corridor and $0$ after the junction. Most seeds carry exactly $\log_2 R$, while several retain an additional $0.1$--$0.5$ bit: the range is $1.00$--$1.50$ for a one-bit cue and exactly $2.00$ for a two-bit cue. Thus, the staircase in \S\ref{sec:f2} does not require a probe channel. The learning barrier remains present but is weaker and less regular than on A$'$. A single cue bit is retained reliably through $L=20$ and by $3$--$6$ of 8 seeds at $L=40$--$80$; the pooled slope is $-0.03$/step and is non-monotone in $L$. With two cue bits, the sufficient-seed count decreases monotonically as $6,4,1,0,0$ of 8 at $L=5,10,20,40,80$, with a slope of $-0.21$/step, and every sufficient seed carries exactly $2.00$ bits. In this task, the commitment cost therefore increases with both the number of retained bits and the distance, consistent with the ordering observed between A$'$ and the corridor. We present this as an observation within one task family rather than as an additional distance law, particularly because the one-bit curve is non-monotone. The scaffold again removes most of the one-bit degradation up to $L=40$.

\subsection{Physical weighing and re-observation}\label{app:weigh}
\paragraph{Task.} The A$'$ and readout tasks reveal the hidden class through a probe channel and attach the object kinematically. Here the hidden variable is the object's mass ($4$ classes, $0.1$ to $2.5$\,kg) and nothing displays it. The arm grasps and lifts the object (a physical grasp throughout), holds it for four steps, puts it back and releases it; under load the arm sags, and the tool height differs by $7.5$--$15$\,mm between adjacent classes against $2$\,mm sensor noise. After a gap of $6$ or $20$ steps during which the arm hovers, the expert grasps from the side assigned to the mass class ($2$ bits) and places the object in the slot of the coarser class; the transport sag re-reveals the mass. The symbolic observation shows the sag class at the steps where the class-conditional mean heights are pairwise separated by more than $6$\,mm (the data-driven visibility convention of Appendix~\ref{app:solver}). Each dataset has $1024$ stratified episodes (expert success $1024/1024$), and an independent recording provides the replication set.
\paragraph{Benchmark hygiene.} In the first recording a probe decoded the mass class from the gap observations with accuracy $0.885$: the loaded arm sets the object down $1$--$11$\,mm forward and slightly yawed, so the world remembered the class for the policy. We therefore return the released object to a fixture that re-seats it (held at its spawn pose $3$\,mm above the table during the last identification step; a plain pose write is dragged back by the simulator's cached friction anchors). Afterwards the probe is at chance in the gap ($0.250\pm0.010$ and $0.244\pm0.009$; chance $0.25$) and at $0.98$ during the weighing; the first recording is not used.
\paragraph{Exact requirement.} The solver certifies transitivity at every step, and (A4) fails at the single step at which the second lift begins. $H(\Gam_t\mid\Ob_t)=2$ bits from the put-down to the grasp decision and $0$ afterwards, while $H(\Gam^s_t\mid\Ob_t)$ returns to $2$ bits during the second descent: the imminent sag separates the observation supports, so histories that differ in mass are vacuously compatible. Expiration is here produced by physical re-revelation rather than by the end of use.
\paragraph{Learning.} The gate is $S_\Gam>0.9$ at every gap step and $S_G>0.9$ at the grasp decision. Plain training is sufficient on $1/8$ and $0/8$ seeds (gap 6, 20), the task-informed forecast supervised from the last identification step on $3/8$ and $2/8$, and system identification (class labels at every step) on $8/8$ and $7/8$. The observed failures occur at memory acquisition: $S_\Gam$ is constant from the put-down through the gap to the grasp, typically $0.75$ with $1.5$ bits, with two adjacent mass classes sharing a code. The measured distinction is already incomplete at the gap's start, with no further loss detected during retention. Supervising the same forecast from the first step of the episode, which also drops the reveal-time annotation from the objective, gives $8/8$ and $7/8$ ($N=448$) and $7/8$ and $8/8$ ($N=896$), with $2.02$--$2.05$ bits in the gap and identical counts on the independent recordings. The event-agnostic objective does not repair acquisition here ($2/8$, $0/8$; annealed $0/8$).
\paragraph{Closed loop.} No tested compact-carrier learner solves the task reliably in closed loop (Table~\ref{tab:weigh}), although full-history policies do. A rollout of an offline-sufficient policy illustrates one failure mode: it executes the weighing itself imprecisely (the lift reaches $100$--$165$\,mm instead of about $200$\,mm and the object is half grasped; action error $11$\,cm), so no clean sag is produced and the code is written from an observation distribution it never saw. The auxiliary objectives are associated with worse imitation precision inside the weighing block: the held-out action error there is $0.078$ for the forecast supervised from step~0 and $0.057$ for system identification, against $0.002$--$0.004$ for plain training and $0.001$ for the full-history policies. Annealing the supervision lowers it only to $0.013$--$0.031$, costs sufficiency ($4/8$ at gap~6) and leaves success at $0.08$--$0.13$. Two pre-specified controls examine these limitations.

\emph{Hybrid rollout:} the scripted expert executes the approach and the weighing while the learned policy observes, updates its own memory with the executed actions, and acts alone from the gap onward. For the offline-sufficient policies supervised from step~0 the grasp-side accuracy rises from $0.30$ to $0.99$ at gap~6 (8 seeds) and from $0.32$ to $0.93$ at gap~20 (7 seeds; prediction: above $0.8$): the code is written, kept across the gap and read correctly in closed loop once the observation is produced correctly. Task success rises only to $0.38$ and $0.13$, because the subsequent physical grasp and transport of objects of up to $2.5$\,kg remain imprecise.

\emph{Longer hold:} with an 8-step hold the analog read-out becomes easier (plain training $4/8$ instead of $1/8$; supervised from step~0 $8/8$) while closed-loop success stays low ($0.20$, $0.09$). These controls support retention and use of the measured grasp-side distinction after expert-led weighing. Producing informative observations and executing the subsequent physical manipulation remain limitations; the controls do not establish sufficiency of the learned state for complete continuous control.

\begin{table}[htbp]\centering\footnotesize\setlength{\tabcolsep}{4pt}
\caption{Weighing task, 8 seeds per cell: sufficient seeds on the training recording / on the independent recording, and closed-loop success / grasp-side accuracy over all seeds (chance $0.25$).}
\label{tab:weigh}
\begin{tabular}{@{}lcccc@{}}\toprule
 & \multicolumn{2}{c}{gap 6} & \multicolumn{2}{c}{gap 20} \\
learner & sufficient & closed loop & sufficient & closed loop \\\midrule
plain $-$R & $1/8$ / $1/8$ & $0.37$ / $0.51$ & $0/8$ / $0/8$ & $0.37$ / $0.24$ \\
task-informed forecast & $3/8$ / $3/8$ & $0.21$ / $0.51$ & $2/8$ / $2/8$ & $0.21$ / $0.35$ \\
task-informed forecast from step 0 & $8/8$ / $8/8$ & $0.10$ / $0.30$ & $7/8$ / $7/8$ & $0.07$ / $0.32$ \\
event-agnostic, not annealed & $2/8$ / --- & $0.19$ / $0.51$ & $0/8$ / --- & $0.13$ / $0.35$ \\
system identification & $8/8$ / --- & $0.08$ / $0.38$ & $7/8$ / --- & $0.06$ / $0.40$ \\
GRU bypass (continuous state) & n/a & $0.77$ / $0.90$ & n/a & $0.51$ / $0.50$ \\
full-history Transformer & n/a & $0.82$ / $0.96$ & n/a & $0.84$ / $0.97$ \\
\bottomrule\end{tabular}
\end{table}

\subsection{External Passive T-maze}\label{app:exttmaze}
We take the Passive T-maze of \citet{ni2023transformers} without modification from the MIKASA-Base suite \citep{cherepanov2025mikasa}: the goal cue appears in the first observation only, the position is ambiguous along the corridor, and the decision is taken $L$ steps later. We clone a scripted oracle (move right $L$ times, then turn according to the cue) from $256$ recorded episodes and roll the learned policy out in the external environment ($100$ episodes; success = goal reward). No rate is compared with theory; the question is whether the supervision of Appendix~\ref{app:generic} changes closed-loop success on a task we did not design. The compact carrier is the unchanged $K=16$ model ($\beta=10^{-3}$, $5000$ steps); the event-agnostic head predicts the recorded expert action at a random future offset (the task is deterministic given the cue, so no forecaster is needed and no annealing is applied). A standard GRU policy with a continuous $128$-dimensional state and no bottleneck is the reference.

\begin{table}[htbp]\centering\small
\caption{Passive T-maze: seeds (of 8) that solve the task in closed loop (success $\ge0.99$); mean success in parentheses.}
\label{tab:exttmaze}
\begin{tabular}{lcccc}\toprule
learner & $L=10$ & $L=20$ & $L=50$ & $L=100$ \\\midrule
GRU policy, continuous state & 8 (1.00) & 8 (1.00) & 8 (1.00) & 7 (0.93) \\
compact carrier, plain imitation & 1 (0.53) & 1 (0.53) & 0 (0.46) & 0 (0.46) \\
compact carrier, event-agnostic forecast & 8 (1.00) & 7 (0.94) & 5 (0.80) & 0 (0.48) \\
compact carrier, $-$RF (decoder sees future observations) & --- & 1 (0.53) & 0 (0.46) & 0 (0.46) \\
compact carrier, annealed continuous scaffold & --- & 0 (0.46) & 0 (0.47) & 0 (0.47) \\
GRU policy, 8-dimensional state & --- & 0 (0.47) & 0 (0.41) & 0 (0.35) \\
GRU policy, 4-dimensional state & --- & 0 (0.42) & 0 (0.31) & 0 (0.18) \\
\bottomrule\end{tabular}
\end{table}

The $128$-dimensional GRU is reliable through $L=50$, whereas the compact carrier benefits from event-agnostic supervision but still fails at $L=100$. Doubling the compact learner's training budget does not improve its $L=50$ result ($3/8$): runs either capture the cue early and retain it to the junction or collapse to a cue-free code. Neither the future-conditioned $-$RF decoder nor the annealed scaffold extends the horizon in these tested configurations.

The GRUs with $4$- or $8$-dimensional states fail at all three tested lengths, $L=20,50,100$. These implementations shrink the input network and action head together with the recurrent state, and their information capacity is not matched to the discrete carrier. They show that training difficulties also occur in small continuous RNNs; they do not isolate the effects of state size, network size, optimization, or quantization.

The event-agnostic objective need not identify the cue when candidate objects are rearranged after the delay: a future action can then be uninformative about the cue without the future observation. A future-observation-conditioned decoder, such as $-$RF, can supply that side information. We make no architectural claim against the unconstrained GRU.

\FloatBarrier
\subsection{Certified requirements on community benchmarks}\label{app:cert}
We apply the protocol to two community benchmarks using their environment code unmodified from MIKASA-Base \citep{cherepanov2025mikasa}. In the bsuite memory chain (``MemoryLength''), a $b$-bit context is displayed initially, then hidden until a query index appears; the final action repeats the queried bit. The second task is the Passive T-maze of Appendix~\ref{app:exttmaze}.

\paragraph{Certification and timing.} We record the external environments' own observations and oracle actions, retain a trajectory for every hidden value ($2^b b$ context--query pairs; two T-maze goals), and run the solver on the induced finite POMDP. It certifies (A2), (A4), and transitivity on every instance. With uniform latent weights, the memory-chain requirement is zero in the first two steps, while the cue is visible. It becomes $b$ bits at the first cue-free step (step 3), remains $b$ until the query, and falls to one bit once the query index is observed. Only the queried bit then matters. The requirement is independent of \texttt{memory\_length}; $H(G_{E,t}\mid O_t)=0$ until the final decision. The T-maze requires one bit during its cue-free corridor.

\paragraph{Learning and rate convention.} We clone the oracle with the unchanged $K=16$ sole carrier ($\beta=10^{-3}$, 5000 steps, 512 demonstrations), using plain imitation or the event-agnostic objective (a recorded future action at a random offset; no annealing). A seed is sufficient when $S_\Gam>0.9$ at every positive-requirement step and closed-loop success is at least $0.99$ in the external environment (200 episodes). The pre-specified prediction concerns \emph{mid-delay}: sufficient codes should match the requirement there within $0.1$ bit. Table~\ref{tab:cert} uses the empirical demonstration occupancy for both entropies, giving $1.99$ and $2.99$ bits instead of the uniform-law values $2$ and $3$.

\begin{figure}[htbp]\centering\includegraphics[width=\linewidth]{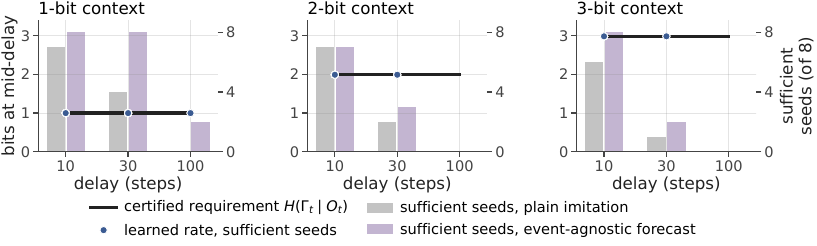}
\caption{\textbf{bsuite memory chain: mid-delay rates and learning reliability.} At mid-delay the certified requirement (line) is independent of delay, and every sufficient seed (dots; both learners) matches it. Bars show the number of sufficient seeds (right axis). This comparison concerns one point in the waiting period; it does not establish minimal rates after the query appears.}
\label{fig:cert}
\end{figure}

\begin{table}[htbp]\centering\footnotesize\setlength{\tabcolsep}{5pt}
\caption{Certified requirement and learned rate on community benchmarks (8 seeds per cell). The certified value is the empirical requirement at mid-delay under the recorded occupancy ($512$ episodes, hence $1.99$ and $2.99$); the learned rate is $\Hc(C_t\mid O_t)$ at the same step over the sufficient seeds of both learners.}
\label{tab:cert}
\begin{tabular}{@{}lccccc@{}}\toprule
task & delay & certified & suff., plain & suff., event-agnostic & learned rate \\\midrule
MemoryChain, 1 bit & 10 & $1.00$ & $7/8$ & $8/8$ & $1.00$ \\
MemoryChain, 1 bit & 30 & $1.00$ & $4/8$ & $8/8$ & $1.00$ \\
MemoryChain, 1 bit & 100 & $1.00$ & $0/8$ & $2/8$ & $1.00$ \\
MemoryChain, 2 bits & 10 & $1.99$ & $7/8$ & $7/8$ & $1.99$ \\
MemoryChain, 2 bits & 30 & $1.99$ & $2/8$ & $3/8$ & $1.99$ \\
MemoryChain, 2 bits & 100 & $1.99$ & $0/8$ & $0/8$ & --- \\
MemoryChain, 3 bits & 10 & $2.99$ & $6/8$ & $8/8$ & $2.99$ \\
MemoryChain, 3 bits & 30 & $2.99$ & $1/8$ & $2/8$ & $2.99$ \\
MemoryChain, 3 bits & 100 & $2.99$ & $0/8$ & $0/8$ & --- \\\midrule
Passive T-maze & 10 & $1.00$ & $2/8$ & $8/8$ & $1.00$ \\
Passive T-maze & 20 & $1.00$ & $0/8$ & $7/8$ & $1.00$ \\
Passive T-maze & 50 & $1.00$ & $0/8$ & $5/8$ & $1.00$ \\
\bottomrule\end{tabular}
\end{table}

\FloatBarrier
All $87$ sufficient runs (of $192$) match the certified rate at mid-delay (largest measured deviation $0.00$ bit). This agreement is strongly constrained by the task structure. For a fixed trained model, the mid-delay code is a deterministic function of the context; no other episode-dependent distinction has been observed, and the preceding oracle actions are fixed. Under the uniform law, $H(C_t\mid O_t)\le b$, while exact sufficiency requires $H(C_t\mid O_t)\ge b$. Under empirical occupancy the same argument uses the empirical context entropy. These benchmarks test transfer of the protocol and acquisition of the required information; Task~A and readout provide the comparisons with additional world information.

\paragraph{Acquisition without immediate forgetting.} Mid-delay agreement does not extend to the query step. Once the query index appears, the requirement is about one bit under empirical occupancy. The $29$ sufficient one-bit memory-chain runs carry $0.999$ bit there. The $19$ sufficient two-bit runs retain $1.514$--$1.990$ bits (mean $1.91$), and the $17$ sufficient three-bit runs retain $2.385$--$2.976$ bits (mean $2.69$). Thus the models can acquire all required context while retaining distinctions that the query has made unnecessary. Like the surplus under un-annealed prediction on A$'$ (\S\ref{sec:f4}), this separates acquisition from subsequent compression; it does not identify a shared optimization mechanism.

Learning reliability also decreases with delay (\S\ref{sec:f4}): no seed passes for two or three bits at delay $100$, and the event-agnostic objective is at least as reliable as plain imitation in every tested cell. The scope remains finite symbolic tasks with enumerable hidden variables and the environment's own observations.

\FloatBarrier
\needspace{8\baselineskip}\section{Representation targets of related work}\label{app:targets}
\paragraph{Scope of the empirical comparison.} We compare representation \emph{targets} under one solver rather than re-implementing the methods that pursue them. Multi-step inverse kinematics and agent-centric state discovery \citep{mhammedi2023representation,lamb2023guaranteed} target control-endogenous state, and decision-centric memory compression \citep{zou2026remember,yamin2026whatmust} targets history distinctions that change a near-optimal decision; neither is defined for a fixed demonstrator with future observations as decoder side information. Their objectives must therefore be re-specified before their rates can be compared with ours, while their exploration, reachability, or language-model mechanisms would introduce additional implementation differences. We implement the fixed-demonstrator forms of the theoretically distinct targets and measure them with the same solver (last column); targets not implemented are marked accordingly.

\begin{table}[htbp]\centering\footnotesize\setlength{\tabcolsep}{2.5pt}\renewcommand{\arraystretch}{1.1}
\caption{Representation targets in related work (\S\ref{sec:related}). Property columns: \textbf{F} the target reproduces a fixed demonstrator's action distribution; \textbf{S} continuations are compared over stochastic reachable supports, so future observations act as decoder side information; \textbf{E} minimality is defined through occupancy-weighted conditional entropy; \textbf{B} the general case yields a non-transitive compatibility bracket. Individual properties appear in prior work; the contribution here is their combination.}\label{tab:targets}
\newcolumntype{C}[1]{>{\centering\arraybackslash}p{#1}}\begin{tabularx}{\linewidth}{@{}R{1.12in}R{1.02in}C{0.42in}C{0.5in}C{0.62in}C{0.22in}>{\raggedright\arraybackslash}X@{}}\toprule
method & target representation & F & S & E & B & rate of the target's fixed-demonstrator form under our solver \\\midrule
bisimulation \citep{zhang2021learning}; $\pi$-bisimulation \citep{castro2020scalable} & value / dynamics equivalence (policy-specific for $\pi$-bisim.) & policy & -- & -- & -- & not implemented \\
$\pi^*$-irrelevance \citep{li2006towards}; support sufficiency \citep{walsh2026supportsufficiency} & optimal-action equivalence (single decision) & optimal act. & -- & rate--regret & -- & single-decision quotient $=G_{E,t}$: 0 bit in the gaps (Fig.~\ref{fig:fig3}) \\
causal states / PSR \citep{shalizi2001computational,littman2002predictive} & sufficiency for predicting observations & -- & -- & stat.\ complexity & -- & observation-predictive form: $1.75$ bit on the re-reveal toy vs.\ $1.00$ for $\Gam$ (App.~\ref{app:e5}) \\
multi-step inverse / ACSD \citep{mhammedi2023representation,lamb2023guaranteed} & control-endogenous latent & -- & -- & -- & -- & $0.3$--$0.7$ bit of drift on the corridor, never sufficient (App.~\ref{app:grid}) \\
approximate information state \citep{subramanian2022ais} & sufficiency for reward and next observation & -- & -- & -- & -- & not implemented \\
stable quotients \citep{zhang2026minimal}; Nerode quotient \citep{nixon2026myhillnerode} & minimal Markov / controller-equivalent state & -- & -- & -- & -- & not implemented \\
incompletely specified machines \citep{paull1959minimizing,pfleeger1973state} & minimal closed cover of a specification & -- & don't-cares & state count & \checkmark & state-count objective; see App.~\ref{app:cover} \\
decision-centric agent memory \citep{zou2026remember,yamin2026whatmust} & history distinctions that change a near-optimal decision & -- & -- & rate--distortion & -- & open-loop future-action form: $1.61$--$1.86$ bit on Task~A, $2.06$ vs.\ $1.00$ on the re-reveal toy (App.~\ref{app:e5}) \\
RMA / system identification \citep{kumar2021rma,liang2024rma} & hidden parameter $\theta$ & -- & -- & -- & -- & $2.25$--$8.71$ bit, slope $0.88$ in $\log_2 M$ (Fig.~\ref{fig:taskA}) \\ \midrule
\textbf{this paper} & \textbf{minimal recurrent behavioral memory} & \checkmark & \checkmark & \checkmark & \checkmark & \textbf{A$'$: $2.00\to1.00$; Task~A: $\le0.10$ vs.\ 0 bit; readout: $2.00$ vs.\ 2 bits} \\
\bottomrule\end{tabularx}
\end{table}

\end{document}